\newcommand\str{\bgroup\markoverwith
{\textcolor{red}{\rule[0.5ex]{2pt}{1.5pt}}}\ULon} %does \sout but in RED color
\newcommand{\al}{\alpha}
\newcommand{\be}{\beta}
\newcommand{\eps}{\epsilon}
\newcommand{\tsigma}{\tilde{\sigma}}
\newcommand{\bigO}{\mathcal{O}}
\newcommand{\R}{{\mathbb{R}}}
\newcommand{\T}{\mathcal{T}}
\newcommand{\F}{\mathcal{F}}
\newcommand{\I}{\mathcal{I}}
\newcommand{\B}{\mathcal{B}}
\newcommand{\D}{\mathcal{D}}
\newcommand{\E}{\mathbb{E}}
\newtheorem{theorem}{Theorem}[section]  % use one numbering scheme for all these objects
\newtheorem{definition}[theorem]{Definition}
\newtheorem{lemma}[theorem]{Lemma}
\newtheorem{remark}[theorem]{Remark}
\newtheorem{corollary}[theorem]{Corollary}
\newtheorem{assumption}[theorem]{Assumption}
\newcommand{\beq}{\begin{equation}}
\newcommand{\eeq}{\end{equation}}
\newcommand{\beqa}{\begin{eqnarray}}
\newcommand{\eeqa}{\end{eqnarray}}
\newcommand{\beqs}{\begin{equation*}}
\newcommand{\eeqs}{\end{equation*}}
\newcommand{\beqas}{\begin{eqnarray*}}
\newcommand{\eeqas}{\end{eqnarray*}}
\DeclarePairedDelimiter\ceil{\lceil}{\rceil}
\newtcbox{\mymath}[1][]{%
    nobeforeafter, math upper, tcbox raise base,
    enhanced, colframe=blue!30!black,
    colback=blue!30, boxrule=1pt,
    #1}
\begin{document}

\title{On the Convergence Theory of Gradient-Based Model-Agnostic Meta-Learning Algorithms\thanks{To appear in the proceedings of the $23^{rd}$ International Conference on Artificial Intelligence and Statistics (AISTATS) 2020.}}
\author{Alireza Fallah\thanks{Department of Electrical Engineering and Computer Science, Massachusetts Institute of Technology, Cambridge, MA, USA. \{afallah@mit.edu, asuman@mit.edu\}.} , Aryan Mokhtari\thanks{Department of Electrical and Computer Engineering, The University of Texas at Austin, Austin, TX, USA. mokhtari@austin.utexas.edu.} , Asuman Ozdaglar$^\dag$}
\date{}
\maketitle

\begin{abstract}
We study the convergence of a class of gradient-based Model-Agnostic Meta-Learning (MAML) methods and characterize their overall complexity as well as their best achievable accuracy in terms of gradient norm for \textit{nonconvex} loss functions. We start with the MAML method and its first-order approximation (FO-MAML) and highlight the challenges that emerge in their analysis. By overcoming these challenges not only we provide the first theoretical guarantees for MAML and FO-MAML in nonconvex settings, but also we answer some of the unanswered questions for the implementation of these algorithms including how to choose their learning rate and the batch size for both tasks and  datasets corresponding to tasks. In particular, we show that MAML can find an $\eps$-first-order stationary point ($\eps$-FOSP) for any positive $\eps$ after at most $\mathcal{O}(1/\eps^2)$ iterations at the expense of requiring second-order information. We also show that FO-MAML which ignores the second-order information required in the update of MAML cannot achieve any small desired level of accuracy, i.e., FO-MAML cannot find an $\eps$-FOSP for \textit{any} $\eps>0$. We  further propose a new variant of the MAML algorithm called Hessian-free MAML which preserves all theoretical guarantees of MAML, without requiring access to second-order information.
\end{abstract}

%!TEX root = main.tex

% 

\section{Introduction}
In several artificial intelligence problems, ranging from robotics to image classification and pattern recognition, the goal is to design systems that use prior experience and knowledge to learn new skills more efficiently. \textit{Meta-learning} or \textit{learning to learn} formalizes this goal by using data from previous tasks to learn update rules or model parameters that can be fine-tuned to perform well on new tasks with small amount of data \citep{LearningToLearn}. Recent works have integrated this paradigm with neural networks including learning the initial weights of a neural network \citep{finn17a, Reptile}, updating its architecture \citep{baker2016designing, zoph2016neural, zoph2018learning}, or learning the parameters of optimization algorithms using recurrent neural networks \citep{ravi2016optimization, NIPS2016_Andry}.

A particularly effective approach, proposed in \citep{finn17a}, is the gradient-based meta-learning in which the parameters of the model are explicitly trained such that a small number of gradient steps with a small amount of training data from a new task will produce good generalization performance on that task. This method is referred to as \textit{model-agnostic meta learning (MAML)} since it can be applied to any learning problem that is trained with gradient descent. Several papers have studied the empirical performance of MAML for nonconvex settings \citep{Reptile, MAML++, Meta-SGD, CAVIA, grant2018recasting, alpha-MAML, al-shedivat2018continuous}. However, to the best of our knowledge, its convergence properties have not been established for general non-convex functions.

In this paper, we study the convergence of variants of MAML methods for nonconvex loss functions and establish their computational complexity as well as their best achievable level of accuracy in terms of gradient norm.    
More formally, let $\T = \{\T_i\}_{i \in \I}$ denote the set of all tasks and let $p$ be the probability distribution over tasks $\T$, i.e., task $\T_i$ is drawn with probability $p_i= p(\T_i)$. We represent the loss function corresponding to task $\T_i$ by $f_i(w): \R^d \to \R$ which is parameterized by the same $w \in \R^d$ for all tasks. Here, the loss function $f_i$ measures how well an action $w$ performs on task $\T_i$. The goal of expected risk minimization is to minimize the expected loss over all tasks, i.e., 
%%%
\begin{equation}\label{f_def}
\min f(w) := \E_{i\sim p}[f_i(w)].
\end{equation}
%%%
In most learning applications, the loss function $f_i$ corresponding to task $\T_i$ is defined as an expected loss with respect to the probability distribution which generates data for task $\T_i$, i.e., 
%\begin{equation}\label{f_i_def}
$f_i(w) := \E_{\theta}[f_i(w, \theta)]$.
%\end{equation}
In this case, the gradient and Hessian of $f_i$ can be approximated by $\nabla f_i(w,\D) := \frac{1}{|\D|} \sum_{\theta \in \D} \nabla f_i (w,\theta)$ and $\nabla^2 f_i(w,\D) := \frac{1}{|\D|} \sum_{\theta \in \D} \nabla^2 f_i (w,\theta)$, respectively, where $\D$ is a batch chosen from the dataset of task $\T_i$.  %\sloppy To better highlight this point, as an example, consider a supervised learning problem in which each task $\T_i$ corresponds to the problem of finding weights $w$ of a predictive model $h(w;.)$ such as a deep neural network for predicting the label $y$ of a sample point $x$. In this case, the loss $f_i$ is defined as the population risk ${f_i(w) = \E_{(x,y)}[\ell(w; x,y)]=  \E_{(x,y)}[ \|y-h(w;x)\|^2 ],}$ where $h(w;x)$ is the prediction of $x$'s label given the model $w$ and $\ell$ is the quadratic loss function. Note that for this special case, the stochastic gradient $\tilde{\nabla}f_i(w,\D)$ is given by $\frac{1}{|\D|} \sum_{(x_j,y_j)\in \D} \nabla \ell (w;x_j,y_j)$.

In traditional statistical learning, we solve Problem \eqref{f_def} as we expect its solution to be a proper approximation for the optimal solution of a new unseen task $\T_i$. However, in model-agnostic meta-learning, we aim to find the best point that performs well as an initial point for learning a new task $\T_i$ when \textit{we have budget for running a few steps of gradient descent} \citep{finn17a}. For simplicity, we focus on finding an initialization $w$ such that, after observing a new task $\T_i$, one gradient step would lead to a good approximation for the minimizer of $f_i(w)$. We can formulate this goal as 
%%% 
\begin{equation}\label{main_prob}
\min F(w) := \E_{i \sim p}\left[ F_i(w)\right]
:= \E_{i \sim p}\left[f_i(w - \al \nabla{f_i(w)})\right],
\end{equation}
%%%%
where $\alpha>0$ is the stepsize for the update of gradient descent method and $F_i(w)$ denotes $f_i(w - \al \nabla{f_i(w)})$.

Problem \eqref{main_prob} is defined in a way that its optimal solution would perform well in expectation when we observe a task and look at the output after running a single step of \textit{gradient} descent.\footnote{We only consider the case that one step of gradient is performed for a new task, but, indeed, a more general case is when we perform multiple steps of gradient descent (GD). However, running more steps of GD comes at the cost of computing {multiple Hessians} and for simplicity of our analysis we only focus on a single iteration of GD.} 

\begin{wrapfigure}{r}{0.4\textwidth}
\vspace{-0.5cm}
\hspace{-2.2cm}  \includegraphics[width=0.75\textwidth]{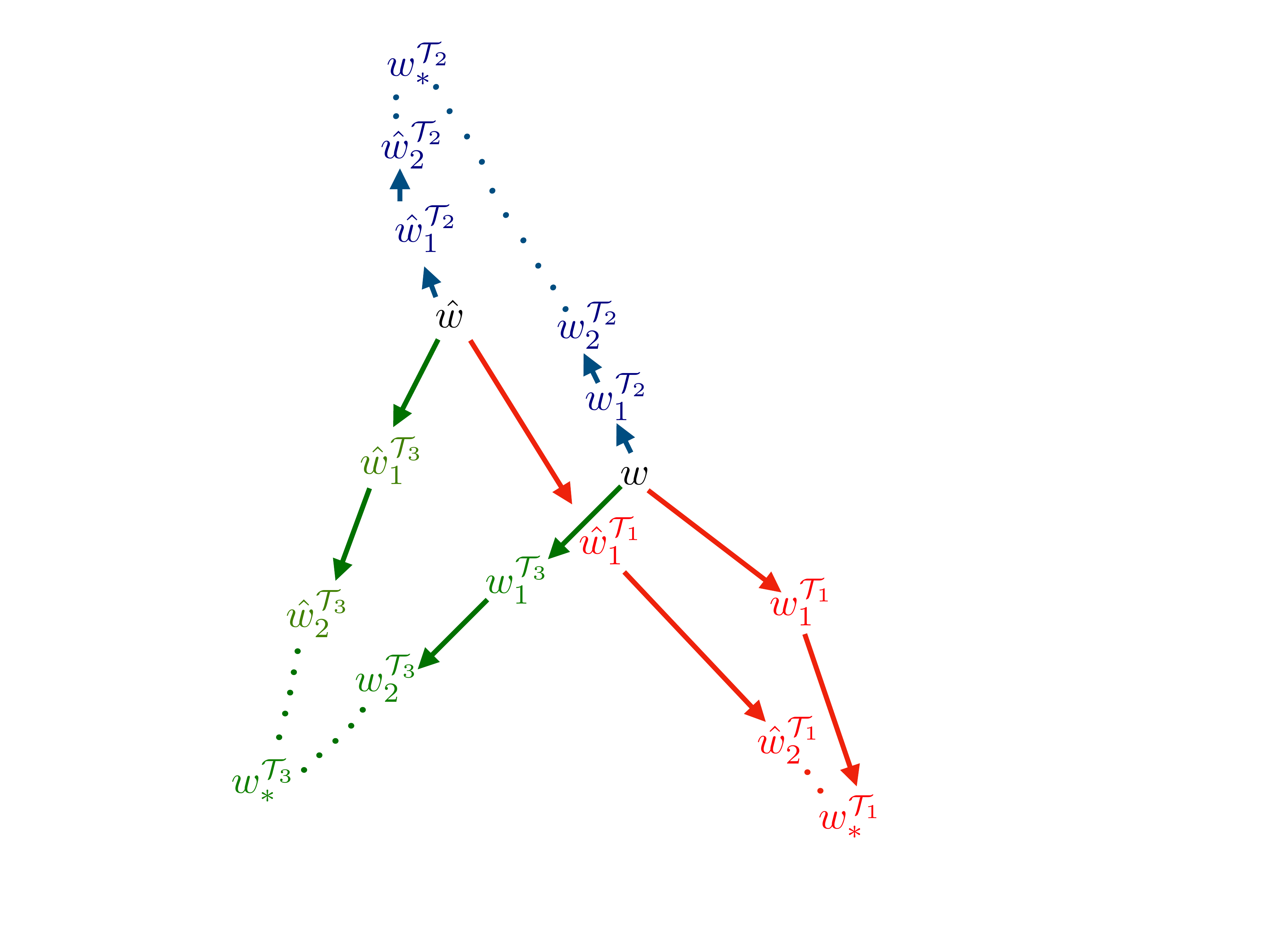}
\vspace{-1.5cm}
    \caption{\label{Fig1_MAML} Comparison of the performance of the optimal solution of the statistical learning problem in \eqref{f_def} and the optimal solution of the met-learning problem in \eqref{main_prob} when we have budget for two steps of gradient descent update.}
    \vspace{-0.2cm}
\end{wrapfigure}

However, in most applications, computing the exact gradient for each task is costly and we can only run steps of the stochastic gradient descent (SGD) method. In this case, our goal is to find a point $w$ such that when a task $\T_i$ is chosen, after running one step of SGD, the resulting solution performs well in expectation. In particular, we assume we have access to the stochastic gradient $\tilde{\nabla} f_i(w,\D^i_{test})$ which is an unbiased estimator of $\nabla f_{i}(w)$ evaluated using the batch $\D^i_{test}$ with size $D_{test}$. In this formulation, our goal would change to solving the problem
%%% 
\begin{equation}\label{main_prob_2}
\min \hat{F}(w)
:=\E_{i \sim p}\left[\E_{\D^{i}_{test}} \left[ f_i(w - \al \tilde{\nabla} f_i(w,\D^i_{test}))\right] \right],
\end{equation}
%%%%
where the expectation is taken with respect to selection of task $i$ as well as selection of random set $\D^i_{test}$ for computing stochastic gradient. Throughout the paper, we will clarify the connection between $F$ and $\hat{F}$, and we report our results for both of these functions.
 
It is worth emphasizing that the solution of the standard expected risk minimization in \eqref{f_def} gives us the best answer when we are given many tasks and we plan to choose only ``one action'' that  performs well in expectation, when we observe a new unseen task. On the other hand, the solution of the expected risk minimization in \eqref{main_prob} is designed for the case that we have access to a large number of tasks and we aim to choose ``an action that after one or more steps of gradient descent'' performs well for an unseen task. In the first case, we naturally choose an action that is closer to the optimal solutions of the tasks that have higher probability, but in the second case we choose an action that is closer to the optimal solutions of the tasks that have higher probability and are harder for gradient descent to solve them. For instance, when the loss functions are strongly convex and smooth, a harder task (minimization problem) for gradient descent is the problem that has a larger condition number. Therefore, the solution of \eqref{main_prob} is naturally closer to the solution of those tasks that have larger condition numbers. 

To better highlight the difference between the solutions of the statistical learning problem in~\eqref{f_def} and the meta-learning problem in~\eqref{main_prob}, we consider an example where we have access to three equally likely tasks $\T_1$, $\T_2$, and $\T_3$ with the optimal solutions $w_{*}^{\T_1}$, $w_{*}^{\T_2}$, $w_{*}^{\T_3}$, respectively; see Figure \ref{Fig1_MAML}. Here, $w$ is the solution of Problem~\eqref{f_def} and $\hat{w}$ is the solution of Problem~\eqref{main_prob}. %For simplicity assume that the probability of the three tasks are equal. In this case, $w$ is the best action in expectation with respect to all the observed tasks and $\hat{w} is the best solution in expectation with respect to all the observed tasks when we measure the error after performing two steps of gradient descent. 
In this example, task $\T_1$ is the easiest task as we can make a lot of progress with only two steps of GD and  task $\T_2$ is the hardest task as we approach the optimal solution slowly by taking gradient steps. As we observe in Figure~\ref{Fig1_MAML}, for task $\T_3$, if we start from $w$ the outcome after running two steps of GD is almost the same as starting from $\hat{w}$. For task $\T_1$, however, $w$ is a better initial point compared to $\hat{w}$, but the error of their resulting solution after two steps of GD are not significantly different. This is due to the fact that $\T_1$ is easy and for both cases we get very close to the optimal solution even after two steps of GD.  The difference between starting from $w$ and $\hat{w}$ is substantial when we aim to solve task $\T_2$ which is the hardest task. Because of this difference, the updated variable after running two steps of GD has a lower expected error when we start from $\hat{w}$ comparing to the case that we start from $w$. This simple example illustrates the fact that if we know a-priori that after choosing an model we are allowed to run a single (or more) iteration of GD to learn a new task, then it is better to start from the minimizer of \eqref{main_prob} rather than the minimizer of~\eqref{f_def}.

%%%%
%\begin{figure}[t!]
%\includegraphics[width=0.5\textwidth]{Fig1_MAML.pdf}
%    \caption{\label{Fig1_MAML} Comparing problems \eqref{f_def} and \eqref{main_prob}} %the optimal solution of the statistical learning problem in \eqref{f_def} and the optimal solution of the met-learning problem in \eqref{main_prob} when we have a budget for two steps of gradient descent update.}
%\end{figure}
%%%%

\begin{table*}[t!]
\footnotesize
%\vskip 0.1in
%\caption{Summary of our result} \label{table-1}
\begin{center}
\begin{tabular}{ |l|c|c|c|c|c|}
\hline
\multirow{3}{*}{\textbf{Algorithm}}&  \multicolumn{4}{c|}{\textbf{Having access to sufficient samples}} & {\textbf{K-shot Learning}} \\

 & Best accuracy & Iteration & \# samples/ & Runtime/ & Best accuracy\\ 
 & possible & complexity & iteration & iteration & possible\\
 %& \\
\hline
\multirow{2}{*}{\textbf{MAML}} & \multirow{2}{*}{$\| \nabla F(w) \| \leq \eps$} & \multirow{2}{*}{$\bigO(1/\eps^2)$} & \multirow{2}{*}{$\bigO(1/\eps^4)$} & \multirow{2}{*}{$\bm{\bigO(d^2)}$} & \multirow{2}{*}{$\| \nabla F(w) \| \leq \bigO(\tsigma/\sqrt{K})$} \\
 & & & & &\\
\hline
\multirow{2}{*}{\textbf{FO-MAML}} & \multirow{2}{*}{$\| \nabla F(w) \| \leq \bm{\bigO(\alpha \sigma)}$} & \multirow{2}{*}{$\bigO(1/(\alpha^2 \sigma^2))$} & \multirow{2}{*}{$\bigO(1/(\alpha^4 \sigma^2))$} & \multirow{2}{*}{$\bigO(d)$} & \multirow{2}{*}{$\| \nabla F(w) \| \leq \bigO(\bm{\sigma}+ \tsigma/\sqrt{K})$} \\
 & & & & &\\
\hline
\multirow{2}{*}{\textbf{HF-MAML} } & \multirow{2}{*}{$\| \nabla F(w) \| \leq \eps$} & \multirow{2}{*}{$\bigO(1/\eps^2)$} & \multirow{2}{*}{$\bigO(1/\eps^4)$} & \multirow{2}{*}{${\bigO(d)}$} & \multirow{2}{*}{$\| \nabla F(w) \| \leq \bigO(\tsigma/\sqrt{K})$} \\
& & & & &\\
\hline
\end{tabular}
\end{center}
\caption{\label{table-1}Our theoretical results for convergence of MAML, first-order approximation of MAML (FO-MAML), and our proposed Hessian-free MAML (HF-MAML) to a first-order stationary point (FOSP) in nonconvex settings. Here, $d$ is the problem dimension, $\sigma$ is a bound on the standard deviation of $\nabla f_i(w)$ from its mean $\nabla f(w)$, and $\tsigma$ is a bound on the standard deviation of $\nabla f_i(w. \theta)$, an unbiased estimate of $\nabla f_i(w)$, from its mean $\nabla f_i(w)$, for every $i$. For any $\eps>0$, MAML can find an $\eps$-FOSP, while each iteration has a complexity of $\mathcal{O}(d^2)$. FO-MAML has a lower complexity of $\mathcal{O}(d)$, but it cannot reach a point with gradient norm less than $\mathcal{O}(\alpha \sigma)$. HF-MAML has the best of both worlds, i.e., HF-MAML has a cost of $\mathcal{O}(d)$, and it can find an $\eps$-FOSP for any $\eps>0$.}
\end{table*}%
%{} 

\subsection{Our contributions}

 In this paper, we provide the first theoretical guarantees for the convergence of MAML algorithms to first order stationarity for \textit{non-convex} functions. We build our analysis upon interpreting MAML as a SGD method that solves Problem \eqref{main_prob} while we show the analysis of MAML is significantly more challenging due to various reasons, including unbounded smoothness parameter and the biased estimator of gradient used in the update rule of MAML. 
Overcoming these challenges, we characterize the iteration and sample complexity of MAML method and shed light on the relation of batch sizes and parameters of MAML with its convergence rate and accuracy. Using these results, we provide an explicit approach for tuning the hyper-parameters of MAML and also the required amount of data to reach a first-order stationary point of \eqref{main_prob}. A summary of the results\footnote{We assume $\sigma$ and $\tsigma$ are small for the results in this section. The general result can be found in Section \ref{sec:Theory}.} and the specific case of $K$-shot learning, where for each task in the inner loop we have access to $K$ samples, is provided in Table \ref{table-1}. Note that in these results, $\sigma$ is a bound on the standard deviation of $\nabla f_i(w)$ from its mean $\nabla f(w)$, and $\tsigma$ is a bound on the standard deviation of $\nabla f_i(w. \theta)$ from its mean $\nabla f_i(w)$, for every $i$. For formal definitions of $\sigma$ and $\tsigma$ please check Assumptions \ref{asm_bounded_var} and \ref{asm_bounded_var_i}, respectively.

%Specifically, we show MAML finds an $\epsilon$-first-order stationary point ($\epsilon$-FOSP) of Problmes \eqref{main_prob} and \eqref{main_prob_2} after at most $\mathcal{O}(1/\epsilon^2)$ iterations and $\mathcal{O}(1/\epsilon^4)$ stochastic gradient evaluations per iteration, while the complexity of each iteration is of $\mathcal{O}(d^2)$. Our result also directly applies to the specific case of $K$-shot learning where for each task in the inner loop we only have access to $K$ samples. In this case, MAML finds a solution $w_{KS}$ such that $\| \nabla F(w_{KS}) \| \leq \bigO(\sqrt{\tsigma^2/K})$ \footnote{We assume $\sigma$ and $\tsigma$ are small for the results in this section. The general result can be found in Section \ref{sec:Theory}.} where $\tsigma^2$ denotes the bound on the variance of stochastic gradient of $\nabla f_i(w)$ (see Assumption \ref{asm_bounded_var_i} for the precise definition).

As described in \citep{finn17a}, the implementation of MAML is costly\footnote{The cost per iteration is $\bigO(d^2)$ in general. However, it is worth noting that  this cost reduces to $\bigO(d)$ for the case of neural network classifiers using back propagation.} as it requires Hessian-vector product computation. To resolve this issue, \cite{finn17a} suggest ignoring the second-order term in the update of MAML and show that the first-order approximation does not affect the performance of MAML in practice. In our work, we formally characterize the convergence results for this first-order approximation of MAML (\textit{FO-MAML}) and show that if the learning rate $\alpha$ used for updating each task is small or the tasks are statistically close to each other, then the error induced by the first-order approximation is negligible (see Table \ref{table-1}). 
%Specifically, we show that, when we have access to sufficient samples, FO-MAML finds a point $w^\dag$ such that $\| \nabla F(w^\dag) \| \leq \bigO(\alpha \sigma)$ in $\bigO(1/(\alpha^2 \sigma^2))$ iterations. Hence, when $\alpha$ or $\sigma$ are small enough such that $\alpha \sigma = \bigO(\eps)$ we do not lose that much by ignoring the second term, i.e., we can find a point with its gradient norm of $\bigO(\eps)$. 
Nevertheless, in general, in contrast to MAML which can find an $\eps$-first order stationary point for any $\eps>0$, FO-MAML is limited to $\eps \geq \bigO(\alpha \sigma)$.

 To address this issue, we introduce a new method, \textit{Hessian-free MAML (HF-MAML)}, which recovers the complexity bounds of MAML without access to second-order information and has a computational complexity of $\mathcal{O}(d)$ per iteration (see Table \ref{table-1}). In fact, we show that,
  %HF-MAML enjoys a better convergence rate in comparison to FO-MAML, and, 
  for any positive $\eps$, HF-MAML finds an $\epsilon$-FOSP
  % after $\mathcal{O}(1/\epsilon^2)$ iterations 
  while keeping the computational cost $\bigO(d)$ at each iteration. {Hence, HF-MAML has the best of both worlds: it has the low computational complexity of FO-MAML and it achieves any arbitrary accuracy for first-order stationarity as in MAML.} %A summary of our results is provided in Table \ref{table-1}.
  
%!TEX root = main.tex

\section{Related Work}\label{sec:RelatedWorks}

The problem of learning from prior experiences to learn new tasks efficiently has been formulated in various ways. One of the main approaches is designing algorithms for updating the parameters of the optimization methods used for training models \citep{bengio1990learning, bengio1992optimization}.
%This idea , where the authors studied the problem of learning how to update the parameters of a synaptic learning rule used for training neural networks. 
Recently, many papers have followed this approach \citep{bengio2012, NIPS2011_4443, li2016learning, ravi2016optimization} (see Table 1 in \citep{metz2018learning} for a summary of different approaches and also \citep{MetaLeatningSurvey} for a detailed survey). 
%including using random search for optimizing the hyper-parameters of the learner's model \citep{bengio2012, NIPS2011_4443}, learning the optimizer through the lens of reinforcement learning \citep{li2016learning}, or controlling the parameters of the gradient methods using recurrent neural networks \citep{ravi2016optimization} (see Table 1 in \citep{metz2018learning} for a summary of different approaches and also \citep{MetaLeatningSurvey} for a detailed survey). 
In one of the first theoretical formalizations, \cite{baxter2000model} 
%proposed a statistical framework with tasks drawn from a distribution, and 
study the problem of \textit{bias learning} where the goal is to find an automatic way for choosing the inductive bias in learning problems. Also, \cite{Franceschi_ICML18} provide a framework for tuning the hyper-parameters of learning algorithms, such as the initialization or the regularization parameter. %They investigate the asymptotic behavior of their method under additional assumptions, such as compactness of the hyper-parameter space and uniqueness of the optimal point for any choice of parameter.

In this paper, we focus on the theoretical analysis of gradient-based model-agnostic meta-learning methods. This setting was first introduced by \cite{finn17a}, and was followed {by several works proposing various algorithms \citep{Reptile, MAML++, Meta-SGD, CAVIA, alpha-MAML, grant2018recasting}. In particular,  \cite{grant2018recasting} introduce an adaptation of MAML for learning the parameters of a prior distribution in a hierarchical Bayesian model. \cite{Meta-SGD}, introduce a variant of MAML that replaces the inner loop learning rate with a vector (which is multiplied with the gradient in an element-wise manner) and then learns this step-vector as well. In another recent work, \cite{Reptile} introduce a new method named \textit{Reptile}, which samples a batch of tasks, and for each task, runs a few steps of stochastic gradient descent on its corresponding loss function. Reptile, then, takes the average from these resulting points and defines the next iterate to be the convex combination of this average and the current iterate.
 However, none of these works provide convergence guarantees for these MAML-type methods which is the main contribution of our paper.} 
\cite{finn19a} study MAML and its extension to online setting for strongly convex functions.
In a recent independent work, \cite{rajeswaran2019metalearning} propose iMAML which implements an approximation of one step of proximal point method in the inner loop. They show when the regularized inner loop loss function is strongly convex, iMAML converges to a first-order stationary point with exact gradient information (no stochasticity due to approximation by a batch of data) and under bounded gradient assumption. These assumptions remove the difficulties associated with unbounded smoothness parameter and biased gradient estimation featured in our analysis (Section~\ref{sec:Theory}).

The online version of meta learning
%, which is also related to the lifelong learning framework \citep{balcan2015efficient, pmlr-v54-alquier17a}, 
has also gained attention. In particular, \cite{balcan_ICML} consider the case where the agent sees a sequence of tasks, where each task is an online learning setup where the agent chooses its actions sequentially and suffer losses accordingly. For each task, the agent starts with an initial model, and runs a within task online algorithm to reduce the regret. Finally, the agent updates the initialization or regularization parameter using a meta-update online algorithm. The authors 
study this setting for convex functions and propose a framework using tools from online convex optimization literature. 
A closely related problem to this setting is the lifelong learning framework \citep{balcan2015efficient, pmlr-v54-alquier17a}. 
As an example, in \citep{pmlr-v54-alquier17a}, the authors consider the case were a series of tasks are presented sequentially where each task itself is associated with a dataset which is also revealed sequentially and processed by a within-task method. The authors focus on convex loss functions and introduce a meta-algorithm which updates a prior distribution on the set of feature maps, and use it to transfer information from the observed tasks to a new one.
In a similar line of work, \cite{Denevi_NeurIPS18,Denevi_ICML19} propose an algorithm which incrementally updates the bias regularization parameter using a sequence of observed tasks. 

Also, \cite{finn19a} consider the model-agnostic setting and propose follow the meta leader that achieves a sublinear regret.

%!TEX root = main.tex
    
%%%%%%%%%%%%%%%%%%%%%%%%%%%%%%%%%%%%%%%%%%%%%%%%%%%%%%%%%%%%%%%%%%%%%%%%
%%%%%%%%%%%%%%%%%%%%%%%%%%%%%%%%%%%%%%%%%%%%%%%%%%%%%%%%%%%%%%%%%%%%%%%%
%%%%%%%%%%%%%%%%%%%%%%%%%%%%%%%%%%%%%%%%%%%%%%%%%%%%%%%%%%%%%%%%%%%%%%%%

\section{MAML Algorithm}\label{sec:MAML}

The MAML algorithm was proposed in \citep{finn17a} for solving the stochastic optimization problem in \eqref{main_prob}. In MAML, at each step $k$, we choose a subset  $\mathcal{B}_k$  of the tasks, with each task drawn independently from distribution $p$. For simplicity assume that the size of $\mathcal{B}_k$ is fixed and equal to $B$. Then, the update of MAML is implemented at two levels: (i) inner step and (ii) outer step (meta-step).

\setlength{\textfloatsep}{4pt}% Remove \textfloatsep
\begin{algorithm}[t!]
    \SetKwInOut{Input}{Input}
    \SetKwInOut{Output}{Output}
    %\underline{function Euclid} $(a,b)$\;
    %\Input{Initial iterate $w_0$}
	\While{not done}{
    Choose a batch of \textit{i.i.d.} tasks $\B_k \subseteq \I$ with distribution $p$ and with size $B = |\B_k|$;\\
    \For{all $\T_i$ with $i \in \B_k$}{
    Compute $\tilde{\nabla}f_i(w_k,\D_{in}^i)$  using dataset  $\D_{in}^i$;\\
    Set $w_{k+1}^i = w_k - \al \tilde{\nabla}f_i(w_k,\D_{in}^i)$;\\
    %Set $\tilde{\nabla}F_i(w_{k+1}^i)$ be an unbiased estimate of $\nabla F_i(w_{k+1}^i)$ using batches $\D_o^i, \D_h^i$;\\
    }
    Compute $w_{k+1}$ according to the update \eqref{hassan};
    \\
    $k \gets k + 1$;
    }
    \caption{\label{Algorithm1} MAML Algorithm}
\end{algorithm}

 In the inner step, for each task $\T_i$, we use a subset of the dataset $\D_{in}^i$ corresponding to task $\T_i$ to compute the stochastic gradient $\tilde{\nabla}f_i(w_k, \D_{in}^i)$ which is an an unbiased estimator of the gradient ${\nabla}f_i(w_k)$. The stochastic gradient $\tilde{\nabla}f_i(w_k, \D_{in}^i)$ is then used to compute a model $w_{k+1}^i$ corresponding to each task $\T_i$ by a single iteration of stochastic gradient descent, i.e.,
%%%
\begin{equation}\label{inner_step}
w_{k+1}^i = w_k - \al \tilde{\nabla}f_i(w_k,D_{in}^i).
\end{equation}
%%%%
To simplify the notation, we assume the size of dataset $\D_{in}^i$ for all tasks $i$ are equal to $D_{in}$.

In the outer loop, once we have the updated models $\{w_{k+1}^i\}_{i=1}^{B}$ for all tasks in $\B_k$, we compute the revised meta-model $w_{k+1}$ by performing the update 
%%%
%\begin{subequations} 
%\begin{align}
%& w_{k+1} = w_k - \be_k \frac{1}{B} \sum_{i \in \B_k}  \left(I-\al \tilde{\nabla}^2 f_i(w_{k},\D_{h}^i)\right)\ \! \tilde{\nabla} f_i\!\left(w_k-\al \tilde{\nabla} f_i(w_k,\D_{in}^i),\D_{o}^i\right) \label{hassan}\\
%&= w_k - \be_k \frac{1}{B} \sum_{i \in \B_k}  \left(I-\al \tilde{\nabla}^2 f_i(w_{k},\D_{h}^i)\right) \tilde{\nabla} f_i(w_{k+1}^i ,\D_{o}^i) ,\label{hossein}
%\end{align}
%\end{subequations} 
%%%%
\begin{align}\label{hassan}
& w_{k+1} =  w_k - \be_k \frac{1}{B} \sum_{i \in \B_k}  \left(I-\al \tilde{\nabla}^2 f_i(w_{k},\D_{h}^i)\right) \tilde{\nabla} f_i(w_{k+1}^i ,\D_{o}^i),
\end{align}
%%%%

where the stochastic gradient $\tilde{\nabla} f_i(w_{k+1}^i ,\D_{o}^i) $ corresponding to task $\T_i$ is evaluated using the data set $\D_{o}^i$ and the models $\{w_{k+1}^i\}_{i=1}^{B}$ computed in the inner loop, and the stochastic Hessian $ \tilde{\nabla}^2 f_i(w_{k},\D_{h}^i)$ for each  task $\T_i$ is computed using the data set $\D_{h}^i$. Note that the data sets $\D_{in}^i$ used for the inner update are different from the data sets $\D_{o}^i$ and $\D_{h}^i$ used for the outer update. It is also possible to assume that $\D_{o}^i=\D_{h}^i$, but in this paper we assume that $\D_{o}^i$ and $\D_{h}^i$ are independent from each other that allows us to use a smaller batch for the stochastic Hessian computation which is more costly. Here also we assume that the sizes of $\D_o^i$ and $\D_h^i$ are fixed and equal to $D_o$ and $D_h$, respectively. The steps of MAML are outlined in Algorithm~\ref{Algorithm1}.

\vspace{2mm}
\noindent{\textbf{MAML as an approximation of SGD.}}
To better highlight the fact that MAML runs SGD over $F$ in \eqref{main_prob}, consider the update of GD for minimizing the objective function $F$ with step size $\beta_k$ which can be written as%\footnote{We are allowed to change the order of expectation and derivative as the number of tasks is finite.}
%%%
\begin{align}\label{GD_update}
& w_{k+1} = w_k - \be_k \nabla F(w_k) = w_k - \be_k \E_{i \sim p} \left [  \left(I-\al {\nabla}^2 f_i(w_{k})\right)\ \! {\nabla} f_i\!\left(w_k-\al {\nabla} f_i(w_k)\right) \right ] 
\end{align}
%%%
 As the underlying probability distribution of tasks $p$ is unknown, evaluation of the expectation in the right hand side of \eqref{GD_update} is often computationally prohibitive. Therefore, one can use SGD for minimizing the function $F$ with a batch $\B_k$ which contains $B$ tasks that are independently drawn. Then, the update is
 %%%
\begin{align}\label{SGD_update}
& w_{k+1} =  w_k - \frac{\be_k}{B} \sum_{i \in \B_k}  \left(I-\al {\nabla}^2 f_i(w_{k})\right)\ \! {\nabla} f_i\!\left(w_k-\al {\nabla} f_i(w_k)\right).
\end{align}
%%%

If we simply replace ${\nabla} f_i$ and ${\nabla}^2 f_i$ with their stochastic approximations over a batch of data points we obtain the update of MAML in~\eqref{hassan}.

\vspace{2mm}

\noindent{\textbf{Smaller batch selection for Hessian approximation.}} The use of first-order methods for solving problem~\eqref{main_prob} requires computing the gradient of $F$ which needs evaluating the Hessian of the loss $f_i$. Indeed, computation of the Hessians $\nabla^2 f_i$ for all the chosen tasks at each iteration is costly. % and is of order $\mathcal{O}(d^2B D_h )$, where $d$ is the dimension of the problem, $D_h$ is the size of batch used for stochastic Hessian approximation, and $B$ is the number of tasks chosen at each iteration for the update of MAML. 
One approach to lower this cost is to reduce the batch size $D_h$ used for Hessian approximation. Later in our analysis, we show that one can perform the update in \eqref{hassan} and have an exactly convergent method, while setting the batch size $D_h$ significantly smaller than batch sizes $D_{in}$ and $B$.
%%%%%%%%%%%%%%%%%%%%%%%%%%%%%%%%%%%%%%%%%%%%%
\begin{algorithm}[t!]
    \SetKwInOut{Input}{Input}
    \SetKwInOut{Output}{Output}
    %\underline{function Euclid} $(a,b)$\;
    %\Input{Initial iterate $w_0$}
	\While{not done}{
    Choose a batch of \textit{i.i.d.} tasks $\B_k \subseteq \I$ with distribution $p$ and with size $B = |\B_k|$;\\
    \For{all $\T_i$ with $i \in \B_k$}{
    Compute $\tilde{\nabla}f_i(w_k,\D_{in}^i)$  using dataset  $\D_{in}^i$;\\
    Set $w_{k+1}^i = w_k - \al \tilde{\nabla}f_i(w_k,\D_{in}^i)$;\\
    %Set $\tilde{\nabla}F_i(w_{k+1}^i)$ be an unbiased estimate of $\nabla F_i(w_{k+1}^i)$ using batches $\D_o^i, \D_h^i$;\\
    }
    ${w_{k+1} \gets w_k -  ({\be_k}/{B}) \sum_{i \in \B_k}\tilde{\nabla} f_i(w_{k+1}^i ,\D_{o}^i) }$; %using dataset $\D_{o}^i$
    \\
    $k \gets k + 1$;
    }
    \caption{\label{Algorithm2} First-Order MAML (FO-MAML)}
\end{algorithm}

\vspace{2mm}

\noindent{\textbf{First-order MAML (FO-MAML).}} To reduce the cost of implementing the update of MAML one might suggest ignoring the second-order term that appears in the update of MAML. In this approach, which is also known as first-order MAML (FO-MAML)  \citep{finn17a}, we update $w_k$ by following the update 
%%%%
\begin{equation}\label{FO_MAML}
w_{k+1} = w_k - \be_k \frac{1}{B} \sum_{i \in \B_k}  \tilde{\nabla} f_i(w_{k+1}^i ,\D_{o}^i),
\end{equation}
%%%
where the points $w_{k+1}^i$ are evaluated based on \eqref{inner_step}. Indeed, this approximation reduces the computational complexity of implementing MAML, but it comes at the cost of inducing an extra error in computation of the stochastic gradient of $F$. We formally characterize this error in our  theoretical results and show under what conditions the error induced by ignoring the second-order term does not impact its convergence. The steps of FO-MAML are outlined in Algorithm~\ref{Algorithm2}.

%!TEX root = main.tex

\section{Hessian-free MAML (HF-MAML)}

To reduce the cost of implementing MAML we propose an approximate variant of MAML  that is \textit{Hessian-free}, i.e., only requires evaluation of gradients, and has a computational cost of $\mathcal{O}(d)$. 
The idea behind our method is that for any function $\phi$, the product of Hessian $\nabla^2 \phi (w) $ by a vector $v$ can be approximated by 
%%%%
\begin{equation}
    \nabla^2 \phi (w)  v \approx \left[ 
    \frac{\nabla \phi(w+\delta v)-\nabla \phi(w-\delta v)}{2\delta}\right]
\end{equation}
%%%
with an error of at most $\rho \delta \|v\|^2$, where $\rho $ is the parameter for Lipschitz continuity of the Hessian of $\phi$. Based on this approximation,
% the update in \eqref{method_update} can be interpreted as a Hessian-free approximation of GD for Meta-learning. 
%The update in \eqref{method_update} suggests a 
we propose a computationally efficient approach for %implementing the update of MAML when we aim to 
minimizing the expected loss $F$ defined in \eqref{main_prob}
%Indeed, this idea can be easily extended for implementation of SGD and in particular the MAML algorithm 
which we refer to it as Hessian-free MAML (HF-MAML). As the name suggests the HF-MAML is an approximation of the MAML that does not require evaluation of any Hessian, while it provides an accurate approximation of MAML. To be more precise, the update of HF-MAML is defined as 
%%%
\begin{align}\label{HF_MAML_main_update}
w_{k+1} =  w_k - \frac{\be_k}{B} \sum_{i \in \B_k} \left[\tilde{\nabla} f_i\!\left(w_k-\al \tilde{\nabla} f_i(w_k,\D_{in}^i),\D_{o}^i\right)  -\alpha  d_k^i \right] 
\end{align}
%%%
where $\alpha$ is the step size for each task, $\beta_k$ is the stepsize for the meta update, and the vectors $d_k^i$ are defined as 
%%%
{\small
\begin{align}\label{d_HF_MAML}
& d_k^i :=  \frac{1}{2\delta_k^i} \left ( \tilde{\nabla} f_i \!\left(w_k\!+\!\delta_k^i \tilde{\nabla} f_i(w_k\!-\!\al \tilde{\nabla} f_i(w_k,\D_{in}^i),\D_{o}^i), \D_{h}^i\right) \right. \nonumber \\
& \left. -\!\tilde{\nabla} f_i\!\left(w_k\!-\!\delta_k^i \tilde{\nabla} f_i(w_k\!-\!\al \tilde{\nabla} f_i(w_k,\D_{in}^i),\D_{o}^i),\D_{h}^i\right) \right ).
\end{align}}
Note that $d_k^i$ is an approximation for the term $ \tilde{\nabla}^2 f_i(w_{k},\D_{h}^i) \tilde{\nabla} f_i(w_k\!-\!\al \tilde{\nabla} f_i(w_k,\D_{in}^i),\D_{o}^i)$ which appears in $\nabla F$. In addition, $\delta_k^i>0$ indicates the accuracy of the Hessian-vector product approximation.  
%As in MAML, this update can be implemented efficiently in two stages where in the first one we compute points $w_{k+1}^i$ by performing the following update %%%
%\begin{equation}\label{inner_step_2}
%w_{k+1}^i = w_k - \al \tilde{\nabla}f_i(w_k,\D_{in}^i)
%\end{equation}
%%%%%
%for all tasks $\T_i\in \B_k$, which is identical to \eqref{inner_step}. Then we use these models in the outer loop to evaluate the vectors $d_k^i$ given in \eqref{d_HF_MAML} %based on the update
%%%%%
%%\begin{align}
%%d_k^i=  \frac{\tilde{\nabla} f_i \left(w_k+\delta_k^i \tilde{\nabla} f_i(w_{k+1}^i,\D_{o}^i), \D_{h}^i\right)-\tilde{\nabla} f_i \left(w_k-\delta_k^i \tilde{\nabla} f_i(w_{k+1}^i,\D_{o}^i),\D_{h}^i\right)}{2\delta_k^i},
%%\end{align}
%%%%
% %where the vector $d_k^i$ 
% which approximates the Hessian-vector product $ \tilde{\nabla}^2 f_i(w_{k},\D_{h}^i)\tilde{\nabla} f_i(w_{k+1}^i ,\D_{o}^i)$. Then we compute the updated meta model $w_{k+1}$ based on
%%%%
%\begin{align}
%w_{k+1} &= w_k -  \frac{\be_k}{B} \sum_{i \in \B_k} \left(\tilde{\nabla} f_i\!\left(w_{k+1}^i,\D_{o}^i \right) -\alpha d_k^i \right).
%\end{align}
%%%%
%The steps of HF-MAML are outlined in Algorithm~\ref{Algorithm3}.
As depicted in Algorithm~\ref{Algorithm3}, this update can be implemented efficiently in two stages similar to MAML. 
%In our theoretical results we characterize the error induced by the Hessian-vector product approximation and show how one should choose all the parameters of HF-MAML including $B$, $D_{in}$,  $D_o$, $D_h$, $\beta_k$, and $\delta_k^i$.
\begin{algorithm}[t!]
    \SetKwInOut{Input}{Input}
    \SetKwInOut{Output}{Output}
    %\underline{function Euclid} $(a,b)$\;
    %\Input{Initial iterate $w_0$}
	\While{not done}{
    Choose a batch of \textit{i.i.d.} tasks $\B_k \subseteq \I$ with distribution $p$ and with size $B = |\B_k|$;\\
    \For{all $\T_i$ with $i \in \B_k$}{
    Compute  $\tilde{\nabla}f_i(w_k,\D_{in}^i)$ using dataset  $\D_{in}^i$;\\
    Set $w_{k+1}^i = w_k - \al \tilde{\nabla}f_i(w_k,\D_{in}^i)$;\\
    %Set $\tilde{\nabla}F_i(w_{k+1}^i)$ be an unbiased estimate of $\nabla F_i(w_{k+1}^i)$ using batches $\D_o^i, \D_h^i$;\\
    }
%    Compute $\displaystyle{d_k^i=  \frac{\tilde{\nabla} f_i\left(w_k+\delta_k^i \tilde{\nabla} f_i(w_{k+1}^i ,\D_{o}^i), \D_{h}^i\right)-\tilde{\nabla} f_i\left(w_k-\delta_k^i \tilde{\nabla} f_i(w_{k+1}^i,\D_{o}^i),\D_{h}^i\right)}{2\delta_k^i}}$;\\
%Compute $d_k^i$ given in \eqref{d_HF_MAML}; \\
%    $w_{k+1} \gets w_k - \frac{\be_k}{B} \sum_{i \in \B_k} (\tilde{\nabla} f_i(w_{k+1}^i ,\D_{o}^i) -\al d_k^i)$; \\
Compute $w_{k+1}$ according to the update \eqref{HF_MAML_main_update}; \\
    $k \gets k + 1$;
    }
    \caption{\label{Algorithm3} Hessian-free MAML (HF-MAML)}
\end{algorithm}
%!TEX root = main.tex
\section{Theoretical Results}\label{sec:Theory}

In this section, we characterize the overall complexity of MAML, FO-MAML, and HF-MAML for finding a first-order stationary point of $F$ when the loss functions $f_i$ are nonconvex but smooth. 

\begin{definition}
A random vector $w_\eps\in \R^d$ is called an $\epsilon$-approximate first order stationary point (FOSP) for problem \eqref{main_prob} if it satisfies $
\E[ \Vert \nabla F(w_\eps) \Vert] \leq \eps$.
\end{definition}
%This definition implies that a point $w_\eps$ is an $\epsilon$-FOSP if its gradient norm for the global loss $F$ is smaller than $\epsilon$ in expectation. 

Our goal in this section is to answer two fundamental questions for each of the three considered methods. Can they find an $\epsilon$-FOSP for arbitrary $\eps>0$? If yes, how many iterations is needed for achieving such point? Before answering these questions, we first formally state our assumptions.

%%%%%%%%%%%%%%%%%%%%%%%%%%%%%%%
%%%%%%%%%%%%%%%%%%%%%%%%%%%%%%%
%%%%     A  S  S  U  M  P  T  I  O  N     %%%%%%%%%%%
%%%%%%%%%%%%%%%%%%%%%%%%%%%%%%%
%%%%%%%%%%%%%%%%%%%%%%%%%%%%%%%
\begin{assumption}\label{ass:boundedness}
$F$ is bounded below, $\min F(w)> -\infty$ and $\Delta\!:=\! (F(w_0) \!-\! \min_{w \in \R^d} F(w) )$ is bounded.
\end{assumption}
%%%%%%%%%%%%%%%%%%%%%%%%%%%%%%%
%%%%%%%%%%%%%%%%%%%%%%%%%%%%%%%
%%%%     A  S  S  U  M  P  T  I  O  N     %%%%%%%%%%%
%%%%%%%%%%%%%%%%%%%%%%%%%%%%%%%
%%%%%%%%%%%%%%%%%%%%%%%%%%%%%%%
\begin{assumption}\label{asm_smooth}
For every $i \in \I$, $f_i$ is twice continuously differentiable and $L_i$-smooth, i.e.,
\begin{equation*}
\Vert \nabla f_i(w) - \nabla f_i(u) \Vert \leq L_i \Vert w - u \Vert. 
\end{equation*}
\end{assumption}

%Since we also assume that the functions $f_i$ are twice differentiable, the $L_i$-smoothness assumption also implies that for every $w,u \in \R^d$ we have
%\begin{subequations} \label{smooth_2}
%\begin{align}
%&-L_i I_d \preceq \nabla^2 f_i(w) \preceq L_i I_d \quad \forall w \in \R^d,  \label{smooth_2:a}\\
%%- \frac{L_i}{2} \|w-u\|^2\leq  & \ 
%& f_i(w) - f_i(u) \!- \!\nabla f_i(u)^\top (w-u) \!\leq\! \frac{L_i}{2} \|w-u\|^2. \label{smooth_2:b}
%\end{align}
%\end{subequations}
For the simplicity of analysis, in the rest of the paper, we mostly work with $L:= \max_{i} L_i$ which can be considered as a parameter for the Lipschitz continuity of the gradients $\nabla f_i$ for all $i \in \I$.

%As we described in Section \ref{sec:MAML}, the Hessians $\nabla^2 f_i$ appear in the update of MAML. Hence, we need to impose a regularity condition on the variation of the Hessians expressed in the next Assumption, which is commonly used in the analysis of second-order methods.
%%%%%%%%%%%%%%%%%%%%%%%%%%%%%%%
%%%%%%%%%%%%%%%%%%%%%%%%%%%%%%%
%%%%     A  S  S  U  M  P  T  I  O  N     %%%%%%%%%%%
%%%%%%%%%%%%%%%%%%%%%%%%%%%%%%%
%%%%%%%%%%%%%%%%%%%%%%%%%%%%%%%
\begin{assumption}\label{asm_Hesian_Lip}
For every $i \in \I$, the Hessian $\nabla^2 f_i$ is $\rho_i$-Lipschitz continuous, i.e., for every $w,u \in \R^d$, i.e., 
\begin{equation*}
\Vert \nabla^2 f_i(w) - \nabla^2 f_i(u) \Vert \leq \rho_i \Vert w - u \Vert.
\end{equation*}
\end{assumption}

To simplify our notation we use $\rho:=\max \rho_i$ as the Hessians Lipschitz continuity parameter for all $i \in \I$.
Note that we do not assume any smoothness conditions for the global loss $F$ and all the required conditions are for the individual loss functions $f_i$. In fact, later we show that under the conditions in Assumption~\ref{asm_smooth}, the global loss $F$ may not be gradient-Lipschitz in general.

The goal of Meta-learning is to train a model based on a set of given tasks so that this model can be used for learning a new unseen task. However, this is only possible if the training tasks are somehow related to unseen (test) tasks. In the following assumption, we formalize this condition by assuming that the gradient  $\nabla f_i$, which is an unbiased estimator of the gradient  $\nabla f= \E_{i \sim p}[\nabla f_i(w)]$, has a bounded variance. 

%%%%%%%%%%%%%%%%%%%%%%%%%%%%%%%
%%%%%%%%%%%%%%%%%%%%%%%%%%%%%%%
%%%%     A  S  S  U  M  P  T  I  O  N     %%%%%%%%%%%
%%%%%%%%%%%%%%%%%%%%%%%%%%%%%%%
%%%%%%%%%%%%%%%%%%%%%%%%%%%%%%%
\begin{assumption}\label{asm_bounded_var}
The variance of gradient $\nabla f_i(w)$ is bounded, i.e., for some $\sigma>0$ we have
\begin{equation}\label{bounded_var}
 \E_{i \sim p}[\| \nabla f(w) - \nabla f_i(w)\|^2] \leq \sigma^2.
\end{equation}
\end{assumption}

{Note that this assumption is less strict comparing to the bounded gradient assumption in \citep{finn19a, rajeswaran2019metalearning}. In addition, for strongly convex functions, this assumption is closely related to the one in \citep{balcan_ICML} which states the optimal point of loss functions of all tasks are within a ball where its radius quantifies the similarity.}
%The condition in Assumption~\ref{asm_bounded_var} shows that each individual stochastic gradient $\nabla f_i(w)$ is an unbiased estimator of the expected gradient $\nabla f(w):= \E_{i \sim p}[\nabla f_i(w)]$ with a bounded variance, and these conditions are customary in stochastic optimization. 

%As we stated in the introduction, in many applications, evaluating the exact gradient corresponding to the loss function of each task is computationally costly, and we only have access to a batch of data to compute an unbiased estimator of the exact gradient $\nabla f_i(w)$. %For instance, if the loss function $f_i$ for each task $\T_i$ is defined as the expectation of a set of random functions $f_i(w) = \E_\theta [f_i(w, \theta)] $ where $\theta$ is a random variable or defined as the average of a large number of functions $f_i(w) = \frac{1}{|\D|} \sum_{j\in \D} [f_i(w, \theta_j)]$. 
In the following assumption we formally state the conditions required for the stochastic approximations of the gradients $\nabla f_i(w, \theta)$ and Hessians $\nabla^2 f_i(w, \theta)$. 

%%%%%%%%%%%%%%%%%%%%%%%%%%%%%%%
%%%%%%%%%%%%%%%%%%%%%%%%%%%%%%%
%%%%     A  S  S  U  M  P  T  I  O  N     %%%%%%%%%%%
%%%%%%%%%%%%%%%%%%%%%%%%%%%%%%%
%%%%%%%%%%%%%%%%%%%%%%%%%%%%%%%
\begin{assumption}\label{asm_bounded_var_i}
For any $i$ and any $w\in \R^d$, the stochastic gradients $\nabla f_i(w, \theta)$ and Hessians $\nabla^2 f_i(w, \theta)$ have bounded variance, i.e., 
\begin{align}
&\E_\theta[\| \nabla f_i(w, \theta) - \nabla f_i(w)\|^2] \leq \tilde{\sigma}^2, \label{bounded_var_i} \\
&\E_\theta[\| \nabla^2 f_i(w, \theta) - \nabla^2 f_i(w)\|^2] \leq \sigma_H^2, \label{bounded_hess_i}  
\end{align}

where $\tsigma$ and $\sigma_H$ are non-negative constants.
\end{assumption}

%In particular, it is worth noting that the special case $\tsigma=\sigma_H=0$ represents the case that the exact gradient and Hessian for each task is available.  %In our theoretical results we consider the general case $\tsigma>0, \sigma_H> 0$ and the special case $\tsigma=\sigma_H=0$.

Finally, to simplify the statement of our results for MAML, FO-MAML, and HF-MAML, we make the following assumption on the relation of parameters. {Later in the appendix, we drop this assumption and state the \textit{general version} of our results.}
%%%%%%%%%%%%%%%%%%%%%%%%%%%%%%%
%%%%%%%%%%%%%%%%%%%%%%%%%%%%%%%
%%%%     A  S  S  U  M  P  T  I  O  N     %%%%%%%%%%%
%%%%%%%%%%%%%%%%%%%%%%%%%%%%%%%
%%%%%%%%%%%%%%%%%%%%%%%%%%%%%%%
\begin{assumption}\label{asm_simplify}
%Recall the definitions $L:=\max L_i$ and $\rho:=\max \rho_i$ with $L_i$ and $\rho_i$ defined in Assumptions \ref{asm_smooth} and \ref{asm_Hesian_Lip}, respectively. Then, 
We assume $\rho \alpha/L = \bigO(1)$. Also, we assume $\sigma^2 + \tsigma^2 = \bigO(1)$, where $\sigma$ and $\tsigma$ are defined in Assumptions \ref{asm_bounded_var} and \ref{asm_bounded_var_i}, respectively.
\end{assumption}

%\begin{remark}
%We would like to emphasize that all the conditions in Assumption~\ref{asm_simplify} are added only to simplify the statements of our main theorems. All our results hold even if Assumption~\ref{asm_simplify} is not satisfied. In fact, in the appendix, we state our proofs for the general case in which Assumption~\ref{asm_simplify} does not hold.  
%\end{remark}

\subsection{Challenges in analyzing MAML methods}\label{AnalysisChallenges}

Before stating our main results for MAML, FO-MAML, and HF-MAML, in this subsection we briefly highlight some of the challenges that emerge in analyzing these algorithms and prove some intermediate lemmas that we will use in the following subsections. 

\noindent{\textbf{(I) Unbounded smoothness parameter:}} %As we mentioned in the previous section,
The global loss function $F$ that we are minimizing in the MAML algorithm by following a stochastic gradient descent step is not necessarily smooth over $\R^d$, and its smoothness parameter could be unbounded. We formally characterize the parameter for the Lipschitz continuity of the gradients $\nabla F$ in the following lemma.

%%%%%%%%%%%%%%%%%%%%%%%%%%%%%%%
%%%%%%%%%%%%%%%%%%%%%%%%%%%%%%%
%%%%        L  E  M  M  A       %%%%%%%%%%%%%%%%    
%%%%%%%%%%%%%%%%%%%%%%%%%%%%%%%
%%%%%%%%%%%%%%%%%%%%%%%%%%%%%%%
\begin{lemma}\label{lem_smooth_F}
Consider the objective function $F$ defined in \eqref{main_prob} for the case that $\al \in [0, \frac{1}{L}]$. Suppose that the conditions in Assumptions~\ref{asm_smooth}-\ref{asm_Hesian_Lip} are satisfied. 
%, and recall the definitions $L:=\max L_i$ and $\rho:=\max \rho_i$. 
Then, for any $w,u \in \R^d$ we have
\begin{align}\label{smooth_F}
& \Vert \nabla F(w) - \nabla F(u) \Vert \leq \min \{ L(w), L(u)\} \Vert w \!-\! u \Vert.
%& \left(4L \!+\! 2 \rho \al \min\{\E_{i \sim p} \|\nabla f_i(w)\|,\E_{i \sim p} \|\nabla f_i(u)\|\}\right) \Vert w \!-\! u \Vert. \nonumber   
\end{align}
where $L(w) := 4L \!+\! 2 \rho \al \E_{i \sim p} \|\nabla f_i(w)\|$.
\end{lemma}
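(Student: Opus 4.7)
The plan is to expand $\nabla F$ via the chain rule and bound each task-wise difference $g_i(w) - g_i(u)$ where
\[
g_i(w) := \bigl(I - \alpha \nabla^2 f_i(w)\bigr)\,\nabla f_i\bigl(w - \alpha \nabla f_i(w)\bigr),
\]
so that $\nabla F(w) = \E_{i\sim p}[g_i(w)]$. Since $\|\nabla F(w) - \nabla F(u)\| \le \E_{i \sim p}\|g_i(w) - g_i(u)\|$, it suffices to get a pointwise bound of the right form inside the expectation.

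First I would add and subtract $(I - \alpha \nabla^2 f_i(w))\,\nabla f_i(u - \alpha \nabla f_i(u))$ inside $g_i(w) - g_i(u)$ to produce two pieces: one controlled by the change in the ``post-update'' gradient, and one controlled by the change in the Hessian factor. By the triangle inequality and Assumption~\ref{asm_smooth}, $\|I - \alpha \nabla^2 f_i(w)\| \le 1 + \alpha L \le 2$ since $\alpha L \le 1$. For the first piece I would invoke $L$-smoothness of $f_i$ together with the standard bound
\[
\bigl\|(w - \alpha \nabla f_i(w)) - (u - \alpha \nabla f_i(u))\bigr\| \le (1 + \alpha L)\,\|w - u\| \le 2\|w-u\|,
\]
giving a contribution of at most $4L\|w-u\|$. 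For the second piece I would apply the Hessian-Lipschitz Assumption~\ref{asm_Hesian_Lip} to get $\|\nabla^2 f_i(w) - \nabla^2 f_i(u)\| \le \rho\|w-u\|$, yielding a contribution of $\alpha \rho \|w-u\|\,\|\nabla f_i(u - \alpha \nabla f_i(u))\|$.

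The remaining step is to replace $\|\nabla f_i(u - \alpha \nabla f_i(u))\|$ by the cleaner quantity $\|\nabla f_i(u)\|$. Using $L$-smoothness once more,
\[
\|\nabla f_i(u - \alpha \nabla f_i(u))\| \le \|\nabla f_i(u)\| + \alpha L \|\nabla f_i(u)\| \le 2\|\nabla f_i(u)\|,
\]
again because $\alpha L \le 1$. Combining these bounds and taking expectation yields
\[
\|\nabla F(w) - \nabla F(u)\| \le \bigl(4L + 2\alpha \rho\, \E_{i\sim p}\|\nabla f_i(u)\|\bigr)\,\|w - u\| = L(u)\,\|w-u\|.
\]
Swapping the roles of $w$ and $u$ in the add-and-subtract step (so that the Hessian factor $I - \alpha \nabla^2 f_i(u)$ is pulled out instead) gives the symmetric bound $L(w)\|w-u\|$, and taking the minimum delivers the claim.

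No step is conceptually deep; the only place requiring care is keeping the constants tight, in particular noticing that each appearance of the factor $(1 + \alpha L)$ can be majorised by $2$ under the standing hypothesis $\alpha \le 1/L$, and that the bound on $\|\nabla f_i(u - \alpha \nabla f_i(u))\|$ must be reduced to $\|\nabla f_i(u)\|$ so that the final Lipschitz constant depends on $\E_{i\sim p}\|\nabla f_i(u)\|$ rather than on the post-update gradient norm (which is what one would want when interpreting $L(w)$ along an iterate sequence). This is the lemma that later forces the analysis to track $\E\|\nabla f_i\|$ carefully, because $L(w)$ is not a uniform Lipschitz constant over $\R^d$.
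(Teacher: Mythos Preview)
Your proposal is correct and follows essentially the same approach as the paper's proof. The only cosmetic difference is in the product decomposition: the paper first expands $(I-\alpha\nabla^2 f_i(w))\nabla f_i(\cdot)$ as $\nabla f_i(\cdot)-\alpha\nabla^2 f_i(w)\nabla f_i(\cdot)$ and then adds and subtracts inside the Hessian term, whereas you apply the identity $A v - B u = A(v-u) + (A-B)u$ directly to the factored form; both routes yield exactly the constants $L(1+\alpha L)^2$ and $\alpha\rho(1+\alpha L)$ before majorising $(1+\alpha L)\le 2$, and both finish with the same symmetry remark to obtain the $\min$.
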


%%%%%%%%%%%%%%%%%%%%%%%%%%%%%%%
%%%%%%%%%%%%%%%%%%%%%%%%%%%%%%%
%%%%        P  R  O  O  F       %%%%%%%%%%%%%%%%    
%%%%%%%%%%%%%%%%%%%%%%%%%%%%%%%
%%%%%%%%%%%%%%%%%%%%%%%%%%%%%%%%
%\begin{proof}
%Check Appendix \ref{lem_smooth_F_proof}.
%\end{proof}
%%%%%%%%%%%%%%%%%%%%%%%%%%
%%%%%%%%%%%%%%%%%%%%%%%%%%
The result in Lemma~\ref{lem_smooth_F} shows that the objective function $F$ is smooth with a parameter that depends on the minimum of the expected norm of gradients. In other words, when we measure the smoothness of gradients between two points $w$ and $u$, the smoothness parameter depends on $\min\{\E_{i \sim p} \|\nabla f_i(w)\|,\E_{i \sim p} \|\nabla f_i(u)\|\}$. Indeed, this term could be unbounded or arbitrarily large as we have no assumption on the gradients norm. Moreover, computation of $\min\{\E_{i \sim p} \|\nabla f_i(w)\|,\E_{i \sim p} \|\nabla f_i(u)\|\}$ could be costly as it requires access to the gradients of all tasks.

% Before moving to next challenge in analyzing MAML, we use Lemma~\ref{lem_smooth_F} to show the following result.
%
%
%%%%%%%%%%%%%%%%%%%%%%%%%%%%%%%%
%%%%%%%%%%%%%%%%%%%%%%%%%%%%%%%%
%%%%%        C  O  R  O  L  L  A  R  Y         %%%%%%%%%%    
%%%%%%%%%%%%%%%%%%%%%%%%%%%%%%%%
%%%%%%%%%%%%%%%%%%%%%%%%%%%%%%%%
%\begin{corollary}\label{bergman_bound}
%Let $\al \in [0, \frac{1}{L}]$. Then, for  $w,u \in \R^d$,
%
%\begin{equation}
%F(u) - F(w) - \nabla{F}(w)^\top (u-w) \leq \frac{L(w)}{2} \|u-w \|^2,
%\end{equation}
%where $L(w)  = 4L + 2 \rho \al \E_{i \sim p} \|\nabla f_i(w)\|$. 
%\end{corollary}
%%%%%%%%%%%%%%%%%%%%%%%%%%%
%%%%%%%%%%%%%%%%%%%%%%%%%%%

\noindent{\textbf{(II) Stochastic stepsize:}} For most optimization methods, including SGD, the stepsize is selected proportional to the inverse of the smoothness parameter. However, in our setting, this parameter depends on the norm of gradient of all tasks which is not computationally tractable. To resolve this issue, we propose a method for choosing the stepsize $\beta_k$ by approximating $L(w)$ with an average over a batch of tasks. Specifically, we approximate $\E_{i \sim p} \|\nabla f_i(w)\|$ in the definition of $L(w)$ using the estimator $\sum_{j \in \B'} \|\tilde{\nabla} f_j(w, \D_\beta^j)\|$ where $\D_\beta^j$ is a dataset corresponding to task $j$ with size $D_\beta$. Hence, we estimate $L(w)$ by

\begin{equation}\label{el_tilde_def}
\tilde{L}(w):={4L + \frac{2\rho \al}{B'} \sum_{j \in \B'} \|\tilde{\nabla} f_j(w, \D_\beta^j)\|  }.
\end{equation} 
%%%
Using this estimate, our stepsize $\beta_k$ is tuned to be a constant times the inverse of $\tilde{L}(w)$ which we denote by  $\tilde{\be}(w)=1/\tilde{L}(w)$, i.e., $\beta_k=c\tilde{\be}(w)=c/\tilde{L}(w)$. This simple observation shows that the stepsize that we need to use for MAML algorithms is stochastic as $1/\tilde{L}(w)$ is a random parameter and depends on the choice of $\B'$. Therefore, we need to derive lower and upper bounds on the  expectations $E[\beta_k]$ and $\E[\beta_k^2]$, respectively, as they appear in the convergence analysis of gradient-based methods. Considering the defintion $\beta_k=c\tilde{\be}(w)$, we state these bounds for $\tilde{\be}(w)$ in the following lemma.

%%%%%%%%%%%%%%%%%%%%%%%%%%%%%%%
%%%%%%%%%%%%%%%%%%%%%%%%%%%%%%%
%%%%        L  E  M  M  A       %%%%%%%%%%%%%%%%    
%%%%%%%%%%%%%%%%%%%%%%%%%%%%%%%
%%%%%%%%%%%%%%%%%%%%%%%%%%%%%%%
\begin{lemma}\label{tilde_beta}
Consider the function $F$ defined in \eqref{main_prob} for the case that $\al \in [0, \frac{1}{L}]$. Suppose Assumptions~\ref{asm_smooth}-\ref{asm_bounded_var_i} hold.
%, and recall the definitions $L:=\max L_i$ and $\rho:=\max \rho_i$. 
Further, consider the definition 
%%%

\begin{equation}\label{beta_update}
\tilde{\be}(w) \! :=\! \frac{1}{\tilde{L}(w)}\!:=\! \frac{1}{4L\! +\! 2 \rho \al \sum_{j \in \B'} \|\tilde{\nabla} f_j(w, \D_\beta^j)\| /{B'} },
\end{equation}
where $\B'$ is a batch of tasks with size $B'$ which are independently drawn with distribution $p$, and for every $j \in \B'$, $\D_\beta^j$ is a dataset corresponding to task $j$ with size $D_\beta$. If the conditions 
%%%%

\begin{equation}\label{size_B'}
B' \geq \ceil[\big]{0.5 \left({\rho \al \sigma}/{L}\right)^2}, \quad D_\beta \geq \ceil[\big]{\left({2 \rho \al \tsigma}/{L}\right)^2}
\end{equation}
%%%%
are satisfied, then we have

\begin{equation} \label{first_second_moment}
\E \left[\tilde{\be}(w)\right] \geq  \frac{0.8}{L(w)}, \quad \E\left[\tilde{\be}(w)^2\right] \leq \frac{3.125}{L(w)^2}
\end{equation}
where $L(w)= 4L + 2 \rho \al \E_{i \sim p} \|\nabla f_i(w)\|$.
\end{lemma}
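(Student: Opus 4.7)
The plan is to exploit the fact that $\tilde{L}(w)$ concentrates tightly around its own mean $\E[\tilde L(w)]$, which is itself sandwiched between $L(w)$ and $\tfrac{5}{4} L(w)$. Setting $Y:=\frac{1}{B'}\sum_{j\in\B'}\|\tilde{\nabla} f_j(w,\D_\beta^j)\|$ and $\mu:=\E_{i\sim p}[\|\nabla f_i(w)\|]$, we have $\tilde L(w)=4L+2\rho\al Y$ and $L(w)=4L+2\rho\al\mu$, so the whole proof reduces to controlling the mean and variance of $Y$.

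I would first record two auxiliary estimates: (i) $\mu\leq\E[Y]\leq\mu+\tsigma/\sqrt{D_\beta}$, and (ii) $\mathrm{Var}(Y)\leq(\sigma^2+\tsigma^2/D_\beta)/B'$. The lower bound in (i) is Jensen applied to the convex norm and the unbiased stochastic gradient $\tilde{\nabla} f_j(w,\D_\beta^j)$; the upper bound follows from $\E_{\D}\|\tilde{\nabla} f_j(w,\D)\|\leq\sqrt{\|\nabla f_j(w)\|^2+\tsigma^2/D_\beta}\leq\|\nabla f_j(w)\|+\tsigma/\sqrt{D_\beta}$ via Assumption~\ref{asm_bounded_var_i}, followed by averaging over $j\sim p$. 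For (ii), since the tasks in $\B'$ are drawn i.i.d.\ and, conditional on each task, the batch $\D_\beta^j$ is sampled independently, the summands $X_j:=\|\tilde{\nabla} f_j(w,\D_\beta^j)\|$ are i.i.d., so $\mathrm{Var}(Y)=\mathrm{Var}(X_1)/B'$; the law of total variance, combined with the $1$-Lipschitzness of $\|\cdot\|$ and Assumptions~\ref{asm_bounded_var}--\ref{asm_bounded_var_i}, then yields $\mathrm{Var}(X_1)\leq\sigma^2+\tsigma^2/D_\beta$.

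For the first-moment lower bound, convexity of $x\mapsto 1/x$ with Jensen gives $\E[\tilde{\be}(w)]\geq 1/\E[\tilde L(w)]$. Using (i) together with $D_\beta\geq\lceil(2\rho\al\tsigma/L)^2\rceil$, we get $\E[\tilde L(w)]\leq L(w)+2\rho\al\tsigma/\sqrt{D_\beta}\leq L(w)+L$; since $L(w)\geq 4L$ forces $L\leq L(w)/4$, this gives $\E[\tilde L(w)]\leq\tfrac{5}{4}L(w)$, hence $\E[\tilde{\be}(w)]\geq 4/(5L(w))=0.8/L(w)$.

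The main obstacle is the second-moment upper bound, and the crucial trick is to expand $1/\tilde L(w)$ around $\E[\tilde L(w)]$ rather than around $L(w)$: expanding around $L(w)$ forces $\E[(Y-\mu)^2]=\mathrm{Var}(Y)+(\E Y-\mu)^2$ to appear, and the bias-squared term $(\E Y-\mu)^2$ inflates the constant past $3.125$. I would write
\[
\frac{1}{\tilde L(w)^2}=\left(\frac{1}{\E[\tilde L(w)]}+\frac{\E[\tilde L(w)]-\tilde L(w)}{\tilde L(w)\,\E[\tilde L(w)]}\right)^2\leq \frac{2}{\E[\tilde L(w)]^2}+\frac{2(\E[\tilde L(w)]-\tilde L(w))^2}{(\tilde L(w)\,\E[\tilde L(w)])^2},
\]
then take expectations and invoke the deterministic lower bound $\tilde L(w)\geq 4L$ in the denominator of the second term, obtaining $\E[\tilde\be(w)^2]\leq 2/\E[\tilde L(w)]^2+\mathrm{Var}(\tilde L(w))/(8L^2\,\E[\tilde L(w)]^2)$. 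The batch-size conditions in \eqref{size_B'} force $\mathrm{Var}(Y)\leq 9L^2/(4\rho^2\al^2)$, so $\mathrm{Var}(\tilde L(w))=4\rho^2\al^2\mathrm{Var}(Y)\leq 9L^2$; combining this with $\E[\tilde L(w)]\geq L(w)$ yields $\E[\tilde\be(w)^2]\leq 2/L(w)^2+9/(8L(w)^2)=25/(8L(w)^2)=3.125/L(w)^2$, as claimed.
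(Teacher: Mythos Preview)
Your proof is correct. The first-moment bound is handled exactly as the paper does (Jensen plus the sandwich $L(w)\le \E[\tilde L(w)]\le L(w)+L\le \tfrac{5}{4}L(w)$). The second-moment bound, however, follows a genuinely different route.

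The paper obtains $\E[\tilde\beta(w)^2]\le 3.125/L(w)^2$ by invoking an external inequality of \cite{wooff1985bounds} for $\E[(X+c)^{-k}]$ with $X=2\rho\alpha Y$, $c=4L$, $k=2$, and then massaging the resulting expression (bounding $\gamma$, splitting $(\mu_b+5L)^2\le 2\mu_b^2+50L^2$, and finally verifying $2\sigma_b^2/(4L)^2\le(5/4)^2$ from the batch conditions). Your argument avoids that machinery entirely: you write $1/\tilde L = 1/\E\tilde L + (\E\tilde L-\tilde L)/(\tilde L\,\E\tilde L)$, apply $(a+b)^2\le 2a^2+2b^2$, and then use only the deterministic floor $\tilde L\ge 4L$ together with $\E\tilde L\ge L(w)$ to land exactly on $25/(8L(w)^2)$. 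The variance bound $\mathrm{Var}(X_1)\le \sigma^2+\tsigma^2/D_\beta$ you use is the same one the paper derives (your justification via the vector law of total variance plus $\mathrm{Var}(\|Z\|)\le \E\|Z-\E Z\|^2$ is valid and equivalent to the paper's computation). Your choice to center at $\E\tilde L$ rather than at $L(w)$ is the right one for hitting the stated constant; centering at $L(w)$ would add the bias-squared term $(2\rho\alpha(\E Y-\mu))^2\le L^2$ and push the constant to $3.25$.

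What each approach buys: the Wooff bound is sharp in a distributional sense and would give tighter constants if one tracked them more carefully, but the subsequent algebra in the paper is longer and less transparent. Your decomposition is self-contained, uses nothing beyond $(a+b)^2\le 2a^2+2b^2$ and the trivial floor $\tilde L\ge 4L$, and happens to recover the exact constant $3.125$ with no slack—so for this lemma it is strictly simpler.
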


%%%%%%%%%%%%%%%%%%%%%%%%%%%%%%%
%%%%%%%%%%%%%%%%%%%%%%%%%%%%%%%
%%%%        P  R  O  O  F       %%%%%%%%%%%%%%%%    
%%%%%%%%%%%%%%%%%%%%%%%%%%%%%%%
%%%%%%%%%%%%%%%%%%%%%%%%%%%%%%%
%\begin{proof}
%Check Appendix \ref{tilde_beta_proof}.
%\end{proof}    
%%%%%%%%%%%%%%%%%%%%%%%%%%%%%%%
%%%%%%%%%%%%%%%%%%%%%%%%%%%%%%%
Lemma \ref{tilde_beta} shows that if we set
%the parameter $\beta(w_k)$ in the expression for 
 $\beta_k=c\tilde{\beta}(w_k)$, with $\tilde{\beta}(w_k)$ given
%based on the expression 
in \eqref{beta_update} and the batch-sizes $B'$ and $D_\beta$ satisfy the conditions \eqref{size_B'}, then the first moment of  $\beta_k$ is bounded below by a factor of $1/L(w_k)$ and its second moment is upper bounded by a factor of $1/L(w_k)^2$. 

Throughout the paper, we assume at each iteration $k$, the batches $\B'_k, \{\D_\beta^j\}_{j \in \B'_k}$ are independently drawn from $\B_k$ and $\{\D_{in}^i, \D_o^i, \D_h^i\}_{i \in \B_k}$ used in the updates of MAML methods.
Also, it is worth emphasizing that the batch size for the random sets $\B'_k$ and $ \{\D_\beta^j\}_{j \in \B'_k}$ are independent of the desired accuracy $\epsilon$ and the extra cost for the computation of these batches is of $\mathcal{O}(1)$.

\noindent{\textbf{(III) Biased estimator:}} The statement that MAML performs an update of stochastic gradient descent at each iteration on the objective function $F$ is not quite accurate. To better highlight this point, recall the update of MAML in \eqref{hassan}. According to this update, the descent direction $g_k$ for MAML at step $k$ is given by
%%%

\begin{equation*}
g_k:= \frac{1}{B} \sum_{i \in \B_k}  A_{i,k}\ \! \tilde{\nabla} f_i\!\left(w_k-\al \tilde{\nabla} f_i(w_k,\D_{in}^i),\D_{o}^i\right),
\end{equation*} 
%%%
with $A_{i,k}:=(I-\al \tilde{\nabla}^2 f_i(w_{k},D_{h}^i))$, while the exact gradient of $F$ at $w_k$ is given by 
%%%
\begin{equation*}
\nabla F(w_k)\!= \!\E_{i \sim p} \left [\! \left(I\!-\!\alpha \nabla^2 f_i(w_k)\right)  \!\nabla f_i(w_k\!-\!\al \nabla f_i(w_k))\! \right ]\!.
\end{equation*} 
%%%
Given $w_k$, $g_k$ is not an unbiased estimator of the gradient $\nabla F(w_k)$ as the stochastic gradient $\tilde{\nabla} f_i(w_k,\D_{in}^i)$ is within the stochastic gradient $\tilde{\nabla} f_i (w_k-\al \tilde{\nabla} f_i(w_k,\D_{in}^i),\D_{o}^i)$. Hence, the descent direction that we use in the update of MAML for updating models is a biased estimator of $\nabla F(w_k)$. This is another challenge that we face in analyzing MAML and its variants. To overcome this challenge, we need to characterize the first-order and second-order moments of the expression $\tilde{\nabla} f_i (w_k-\al \tilde{\nabla} f_i(w_k,\D_{in}^i),\D_{o}^i)$.

%%%%%%%%%%%%%%%%%%%%%%%%%%%%%%%
%%%%%%%%%%%%%%%%%%%%%%%%%%%%%%%
%%%%        L  E  M  M  A       %%%%%%%%%%%%%%%%    
%%%%%%%%%%%%%%%%%%%%%%%%%%%%%%%
%%%%%%%%%%%%%%%%%%%%%%%%%%%%%%%
\begin{lemma}\label{lemma:moments}
Consider $F$ in \eqref{main_prob} for the case that $\al \in [0, \frac{1}{L}]$. Suppose Assumptions~\ref{asm_smooth}-\ref{asm_bounded_var_i} hold.
%, and recall the definitions $L:=\max L_i$ and $\rho:=\max \rho_i$. 
Then, 
%%%

\begin{align}\label{bias}
&\E_{\D_{in},\D_o}[\tilde{\nabla} f_i (w_k-\al \tilde{\nabla} f_i(w_k,\D_{in}^i),\D_{o}^i) \mid \F_k]= \nabla f_{i} \left  (w_k \!-\! \alpha \nabla f_{i}(w_k) \right ) +e_{i,k} ,\  \text{where} \   \|e_{i,k} \| \leq \frac{\al L\tsigma}{\sqrt{D_{in}}}.
\end{align}
%%%
Moreover, for arbitrary $\phi>0$ we have
%%%
\begin{align}\label{sec_moment}
 \E_{\D_{in},\D_o} \!\left [ \|\tilde{\nabla} f_i (w_k\!-\!\al \tilde{\nabla} f_i(w_k,\D_{in}^i),\D_{o}^i)\|^2  \mid \F_k \right]  \leq & \left(1+\frac{1}{\phi}\right)\|\nabla f_{i} (w_k - \alpha \nabla f_{i}(w_k))\|^2\nonumber\\
  & +  \frac{(1\!+\!\phi) \al^2 L^2\tsigma^2}{D_{in}}+ \frac{\tsigma^2}{D_o}.
\end{align}
%%% 
\end{lemma}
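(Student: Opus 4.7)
The plan is to exploit the independence of $\D_{in}^i$ and $\D_o^i$ (conditional on $\F_k$) so that we can apply the tower property and dispatch the two stochastic approximations one at a time. Throughout, I write $u_i := w_k - \al \tilde{\nabla} f_i(w_k, \D_{in}^i)$ so that the quantity of interest is $\tilde{\nabla} f_i(u_i, \D_o^i)$, and I write $\bar{u}_i := w_k - \al \nabla f_i(w_k)$ for its ``ideal'' counterpart. Note that $u_i$ is $\D_{in}^i$-measurable, so conditioning on $\D_{in}^i$ freezes it.

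For the first claim, I would first take the expectation over $\D_o^i$ with $\D_{in}^i$ fixed. Since $\tilde{\nabla} f_i(u_i, \D_o^i)$ is a sample-average unbiased estimator of $\nabla f_i(u_i)$, this inner expectation equals $\nabla f_i(u_i)$. Hence
\begin{equation*}
\E_{\D_{in},\D_o}[\tilde{\nabla} f_i(u_i,\D_o^i) \mid \F_k] = \E_{\D_{in}}[\nabla f_i(u_i) \mid \F_k],
\end{equation*}
and the bias is $e_{i,k} = \E_{\D_{in}}[\nabla f_i(u_i) - \nabla f_i(\bar{u}_i) \mid \F_k]$, using that $\E_{\D_{in}}[u_i] = \bar{u}_i$ is irrelevant here because we are looking at the image under $\nabla f_i$, which is nonlinear. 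Applying Jensen's inequality to the norm, then the $L$-smoothness of $f_i$ (Assumption~\ref{asm_smooth}), gives $\|e_{i,k}\| \leq \al L\, \E_{\D_{in}}\|\tilde{\nabla} f_i(w_k,\D_{in}^i) - \nabla f_i(w_k)\|$. Jensen's inequality in the other direction (square root of second moment) together with Assumption~\ref{asm_bounded_var_i} and the fact that $\tilde{\nabla} f_i(w_k,\D_{in}^i)$ is an average of $D_{in}$ i.i.d.\ samples (so its variance scales as $\tsigma^2/D_{in}$) yields the claimed bound $\|e_{i,k}\| \leq \al L \tsigma / \sqrt{D_{in}}$.

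For the second claim, I again take expectation over $\D_o^i$ first. The standard variance decomposition $\E\|X\|^2 = \|\E X\|^2 + \E\|X - \E X\|^2$ gives $\E_{\D_o}\|\tilde{\nabla} f_i(u_i,\D_o^i)\|^2 = \|\nabla f_i(u_i)\|^2 + \E_{\D_o}\|\tilde{\nabla} f_i(u_i,\D_o^i) - \nabla f_i(u_i)\|^2$, and the second term is bounded by $\tsigma^2/D_o$ by Assumption~\ref{asm_bounded_var_i} and the sample-average structure. It remains to take expectation over $\D_{in}^i$ of $\|\nabla f_i(u_i)\|^2$. Here I apply Young's inequality in the form $\|a+b\|^2 \leq (1+1/\phi)\|a\|^2 + (1+\phi)\|b\|^2$ with $a = \nabla f_i(\bar{u}_i)$ and $b = \nabla f_i(u_i) - \nabla f_i(\bar{u}_i)$, then use $L$-smoothness of $f_i$ to control $\|b\|$ by $\al L \|\tilde{\nabla} f_i(w_k,\D_{in}^i) - \nabla f_i(w_k)\|$, and finally the variance bound $\E_{\D_{in}}\|\tilde{\nabla} f_i(w_k,\D_{in}^i) - \nabla f_i(w_k)\|^2 \leq \tsigma^2/D_{in}$ to finish.

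The only subtle point I foresee is the bookkeeping around what is random versus what is conditioned on: the point $u_i$ entering $\tilde{\nabla} f_i(u_i, \D_o^i)$ is itself a random vector depending on $\D_{in}^i$, so one must be careful that the unbiasedness and variance bound for the $\D_o^i$-expectation are applied pointwise in $u_i$ and then integrated over $\D_{in}^i$. Beyond that, the argument is mechanical, and the factor $\sqrt{D_{in}}$ in the first bound (versus $D_{in}$ in the second) comes precisely from using Jensen to pass from the second moment to the first moment of $\|\tilde{\nabla} f_i(w_k,\D_{in}^i) - \nabla f_i(w_k)\|$.
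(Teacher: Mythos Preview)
Your proposal is correct and follows essentially the same route as the paper: first integrate out $\D_o^i$ using unbiasedness and the variance bound from Assumption~\ref{asm_bounded_var_i}, then control the remaining $\D_{in}^i$-randomness via $L$-smoothness and the batch-variance bound $\tsigma^2/D_{in}$, with Young's inequality $(a+b)^2\le(1+1/\phi)a^2+(1+\phi)b^2$ for the second-moment claim. The only cosmetic difference is your cleaner notation $u_i,\bar u_i$; the logical steps match one-for-one.
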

%%%%%%%%%%%%%%%%%%%%%%%%%%%%%%%
%%%%%%%%%%%%%%%%%%%%%%%%%%%%%%%
%%%%        P  R  O  O  F       %%%%%%%%%%%%%%%%    
%%%%%%%%%%%%%%%%%%%%%%%%%%%%%%%%
%%%%%%%%%%%%%%%%%%%%%%%%%%%%%%%%
%\begin{proof}
%Check Appendix \ref{lemma:moments_proof}.
%\end{proof}  
%%%%%%%%%%%%%%%%%%%%%%%%%%%%%%%%
%%%%%%%%%%%%%%%%%%%%%%%%%%%%%%%%

The result in Lemma~\ref{lemma:moments} clarifies the reason that the descent direction of MAML denoted by $g_k$ is a biased estimator of $\nabla F(w_k)$. It shows that the bias is bounded above by a constant which depends on the variance of the stochastic gradients $\tilde{\nabla} f_i$ and the stepsize $\alpha$ for the inner steps. By setting $\alpha=0$, the vector $\tilde{\nabla} f_i (w_k-\al \tilde{\nabla} f_i(w_k,\D_{in}^i),\D_{o}^i)$ becomes an unbiased estimate of $\nabla f_{i} (w_k - \alpha \nabla f_{i}(w_k) )$ as our result in \eqref{bias} also suggests. Also, the result in \eqref{sec_moment} shows that the second moment of $\tilde{\nabla} f_i (w_k-\al \tilde{\nabla} f_i(w_k,\D_{in}^i),\D_{o}^i)$ is bounded above by the sum of a multiplicand of  $\|\nabla f_{i} (w_k - \alpha \nabla f_{i}(w_k) )\|^2$ and a multiplicand of $\tilde{\sigma}^2$. %For the special case of $\alpha=0$, we would expect the second moment of $\tilde{\nabla} f_i (w_k-\al \tilde{\nabla} f_i(w_k,\D_{in}^i),\D_{o}^i)$ to be bounded above by $\|\nabla f_{i} (w_k - \alpha \nabla f_{i}(w_k) )\|^2+\tilde{\sigma}^2/D_o$, which can also be obtained from \eqref{sec_moment} by setting $\phi \to \infty$.

%\begin{align}
%\nabla F(w) &= \sum_{i \in \I} p_i \nabla F_i(w), \label{gradient_F:a}\\
%\nabla F_i(w) &= A_i(w) \nabla f_i(w-\al \nabla f_i(w)) \label{gradient_F:b}
%\end{align}

%%%%%%%%%%%%%%%%%%%%%%%%%%
%%%%%%%%%%%%%%%%%%%%%%%%%%
%%%%%%%%%%%%%%%%%%%%%%%%%%%%%%%%%%%%%%%%%%%%%%%%%%%%
%%%%%%%%%%%%%%%%%%%%%%%%%%%%%%%%%%%%%%%%%%%%%%%%%%%%
%%%%%%%%%%%%%%%%%%%%%%%%%%%%%%%%%%%%%%%%%%%%%%%%%%%%

\subsection{On the Connection of $F$ and $\hat{F}$}

In this subsection, we investigate the connection between $F$ and $\hat{F}$ defined in \eqref{main_prob} and \eqref{main_prob_2}, respectively. In particular, in the following theorem, we characterize the difference between their gradients. Later, using this result, we show all the methods that we study achieve the same level of gradient norm with respect to both $F$ and $\hat{F}$, up to some constant.
%%%%%%%%%%%%%%%%%%%%%%%%%%%%%%%
%%%%%%%%%%%%%%%%%%%%%%%%%%%%%%%
%%%%        THEOREM       %%%%%%%%%%%%%%%%    
%%%%%%%%%%%%%%%%%%%%%%%%%%%%%%%
%%%%%%%%%%%%%%%%%%%%%%%%%%%%%%%
\begin{theorem}\label{Thm_F_hatF}
Consider the functions $F$ and $\hat{F}$ defined in \eqref{main_prob} and \eqref{main_prob_2}, respectively, for the case that $\al \in (0,\frac{1}{L}]$. Suppose Assumptions~\ref{asm_smooth}-\ref{asm_bounded_var_i} hold.
%, and recall the definitions $L:=\max L_i$ and $\rho:=\max \rho_i$. 
Then, for any $w \in \R^d$, we have

\begin{equation}
\|\nabla \hat{F}(w) - \nabla F(w)\| \leq 2 \alpha L \frac{\tsigma}{\sqrt{D_{test}}} + \alpha^2 L \frac{\sigma_H \tilde{\sigma}}{D_{test}}.	
\end{equation}
\end{theorem}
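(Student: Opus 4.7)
The plan is to reduce the bound to a per-task estimate and then average. Writing $F_i(w) = f_i(w - \alpha \nabla f_i(w))$ and $\hat F_i(w) = \E_{\D_{test}^i}[f_i(w - \alpha \tilde\nabla f_i(w,\D_{test}^i))]$, the chain rule gives
\begin{equation*}
\nabla F_i(w) = (I - \alpha H_i)\,\nabla f_i(w - \alpha g_i), \qquad \nabla \hat F_i(w) = \E_{\D}\!\left[(I - \alpha \tilde H_i)\,\nabla f_i(w - \alpha \tilde g_i)\right],
\end{equation*}
where I use the shorthand $g_i := \nabla f_i(w)$, $H_i := \nabla^2 f_i(w)$, $\tilde g_i := \tilde\nabla f_i(w,\D_{test}^i)$, $\tilde H_i := \tilde\nabla^2 f_i(w,\D_{test}^i)$, and I use the fact that the Jacobian of the sample-mean gradient $\tilde g_i$ in $w$ equals the sample-mean Hessian $\tilde H_i$. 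Since $\nabla F - \nabla \hat F = \E_{i\sim p}[\nabla F_i - \nabla \hat F_i]$, by Jensen it suffices to establish the stated bound pointwise in $i$.

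The key algebraic step is to decompose the difference so that the first-order fluctuation in $\tilde H_i$ cancels. Adding and subtracting $\E_\D[(I-\alpha\tilde H_i)\nabla f_i(w-\alpha g_i)]$, the leftover term equals $-\alpha \E_\D[\tilde H_i - H_i]\,\nabla f_i(w-\alpha g_i) = 0$ by unbiasedness of the sample Hessian. This gives
\begin{equation*}
\nabla \hat F_i(w) - \nabla F_i(w) = \E_\D\!\left[(I - \alpha \tilde H_i)\bigl(\nabla f_i(w - \alpha \tilde g_i) - \nabla f_i(w - \alpha g_i)\bigr)\right].
\end{equation*}
I then split $I - \alpha \tilde H_i = (I - \alpha H_i) - \alpha(\tilde H_i - H_i)$ and bound the two resulting summands separately.

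For the $(I-\alpha H_i)$ summand, since $(I - \alpha H_i)$ is deterministic I use $\|I-\alpha H_i\| \le 1 + \alpha L \le 2$ (from $\alpha L \le 1$), the $L$-smoothness bound $\|\nabla f_i(w-\alpha \tilde g_i) - \nabla f_i(w - \alpha g_i)\| \le L\alpha\|\tilde g_i - g_i\|$, and Jensen followed by the standard sample-mean variance identity $\E_\D\|\tilde g_i - g_i\|^2 \le \tilde\sigma^2/D_{test}$ (Assumption \ref{asm_bounded_var_i} together with independence of the samples in $\D_{test}^i$). This contributes at most $2\alpha L\,\tilde\sigma/\sqrt{D_{test}}$. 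For the $-\alpha(\tilde H_i - H_i)$ summand, I apply operator-vector bounding followed by Cauchy--Schwarz:
\begin{equation*}
\alpha\,\E_\D\!\left[\|\tilde H_i - H_i\|\cdot L\alpha\|\tilde g_i - g_i\|\right] \le \alpha^2 L\,\sqrt{\E_\D\|\tilde H_i - H_i\|^2}\,\sqrt{\E_\D\|\tilde g_i - g_i\|^2} \le \alpha^2 L\,\frac{\sigma_H\,\tilde\sigma}{D_{test}},
\end{equation*}
using the analogous sample-mean variance bound $\E_\D\|\tilde H_i - H_i\|^2 \le \sigma_H^2/D_{test}$. Summing the two contributions, taking expectation over $i$, and applying the triangle inequality yields the claim.

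The main conceptual obstacle is the cancellation step: without it, a naive bound would pick up a term of order $\alpha L \sigma_H/\sqrt{D_{test}}$ from the fluctuation of $\tilde H_i$ alone, which is strictly weaker than the stated $\alpha^2 L\sigma_H\tilde\sigma/D_{test}$. Exploiting $\E_\D[\tilde H_i]=H_i$ to eliminate the linear fluctuation forces $\tilde H_i - H_i$ to appear only multiplied by the (small) fluctuation $\tilde g_i - g_i$, which is what produces the product-of-variances factor $\sigma_H\tilde\sigma/D_{test}$. Everything else is routine smoothness and variance bookkeeping.
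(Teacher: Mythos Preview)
Your proof is correct and follows essentially the same approach as the paper: both decompose the product $(I-\alpha\tilde H_i)\nabla f_i(w-\alpha\tilde g_i)$, exploit $\E_\D[\tilde H_i-H_i]=0$ to kill the term where the Hessian fluctuation multiplies the deterministic gradient, bound the $(I-\alpha H_i)$ piece via smoothness and Jensen, and handle the product-of-fluctuations term $(\tilde H_i-H_i)\cdot e_{G,i}$ via Cauchy--Schwarz to get the $\sigma_H\tilde\sigma/D_{test}$ factor. The only cosmetic difference is that the paper writes the decomposition as a full expansion $(A+e_H)(B+e_G)$ rather than your add-and-subtract maneuver, but the resulting terms and bounds are identical.
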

%\begin{proof}
%Check Appendix \ref{proof_Thm_F_hatF}.	
%\end{proof}
Next, we mainly focus on characterizing the behavior of MAML, FO-MAML, and HF-MAML with respect to $F$, and by using the above theorem, we can immediately obtain bounds on the norm of $\nabla \hat{F}$ as well. In fact, the above theorem indicates the difference between $\nabla F$ and $\nabla \hat{F}$ is $\bigO (\max\{\frac{\tsigma}{\sqrt{D_{test}}}, \frac{\sigma_H \tilde{\sigma}}{D_{test}}\} )$.
%%%%%%%%%%%%%%%%%%%%%%%%%%%%%%%%%%%%%%%%%%%%%%%%%%%%
%%%%%%%%%%%%%%%%%%%%%%%%%%%%%%%%%%%%%%%%%%%%%%%%%%%%
%%%%%%%%%%%%%%%%%%%%%%%%%%%%%%%%%%%%%%%%%%%%%%%%%%%%

\subsection{Convergence of MAML}

In this subsection, we study the overall complexity of MAML for finding an $\eps$-FOSP of the loss functions $F$ and $\hat{F}$ defined in \eqref{main_prob} and \eqref{main_prob_2}, respectively.  
%%%%%
%%%%%%%%%%%%%%%%%%%%%%%%%%%%%%%
%%%%%%%%%%%%%%%%%%%%%%%%%%%%%%%
%%%%        THEOREM       %%%%%%%%%%%%%%%%    
%%%%%%%%%%%%%%%%%%%%%%%%%%%%%%%
%%%%%%%%%%%%%%%%%%%%%%%%%%%%%%%
\begin{theorem}\label{Thm_SGD_general_simple} 
Consider $F$ in \eqref{main_prob} for the case that $\al \in (0,\frac{1}{6L}]$. Suppose Assumptions~\ref{ass:boundedness}-\ref{asm_simplify} hold.
%, and recall the definitions $L:=\max L_i$ and $\rho:=\max \rho_i$. 
Consider running MAML with batch sizes satisfying the conditions $D_h \geq \ceil{2 \al^2 \sigma_H^2}$ and $B \geq 20$.
Let $\beta_k = \tilde{\beta}(w_k)/12$
where $\tilde{\beta}(w)$ is given in \eqref{beta_update}.
Then, for any $1>\eps >0$, MAML finds a solution $w_\eps$ such that
%%%%

\begin{equation}
 \E[ \| \nabla F(w_\eps) \|] \leq \bigO \left ( \sqrt{\frac{\sigma^2}{B} + \frac{\tsigma^2}{B D_o} + \frac{\tsigma^2}{D_{in}}} \right ) + \eps	
\end{equation}
with a total number of iterations of

\begin{equation}
\bigO(1) \Delta \min \left \{ \frac{L}{\eps^2},
\frac{LB}{\sigma^2} + \frac{L(B D_o+D_{in})}{\tsigma^2} \right \} .
\end{equation}
\end{theorem}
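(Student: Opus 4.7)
The plan is to derive a one-step descent inequality for $\E[F(w_{k+1})-F(w_k) \mid \F_k]$ and telescope it over $k$. Each of the three challenges flagged in Section~\ref{AnalysisChallenges} is addressed by the matching intermediate result already proved: Lemma~\ref{lem_smooth_F} substitutes for the missing global smoothness of $F$ via the data-dependent parameter $L(w_k)$; Lemma~\ref{tilde_beta} lets us treat the random stepsize $\be_k = \tilde{\be}(w_k)/12$ as essentially $1/L(w_k)$ in expectation; and Lemma~\ref{lemma:moments} controls the bias and second moment of the inner stochastic gradient appearing in the MAML direction~\eqref{hassan}.

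\textbf{One-step inequality.} First, I invoke Lemma~\ref{lem_smooth_F} with the bound $\min\{L(w_k),L(w_{k+1})\} \le L(w_k)$ to write
\beq
F(w_{k+1}) \le F(w_k) - \be_k \nabla F(w_k)^\top g_k + \frac{\be_k^2 L(w_k)}{2} \|g_k\|^2 .
\eeq
Because $\be_k$ is built from auxiliary batches drawn independently of those defining $g_k$, conditional on $\F_k$ the expectations factor, i.e., $\E[\be_k g_k \mid \F_k] = \E[\be_k \mid \F_k]\,\E[g_k \mid \F_k]$ and similarly for $\be_k^2 \|g_k\|^2$. Unbiasedness of $\tilde{\nabla}^2 f_i(w_k,\D_h^i)$ combined with Lemma~\ref{lemma:moments} gives $\E[g_k \mid \F_k] = \nabla F(w_k) + b_k$ with $\|b_k\| = \bigO(\al L \tsigma/\sqrt{D_{in}})$; Young's inequality on the cross term $\nabla F(w_k)^\top b_k$ then absorbs a fraction of $\|\nabla F(w_k)\|^2$ and leaves an additive error of order $\al^2 L^2 \tsigma^2/D_{in}$. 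For $\E[\|g_k\|^2 \mid \F_k]$, I would perform the usual variance decomposition across the task batch $\B_k$ (gaining factor $1/B$ on the task-level variance $\sigma^2$), plug in~\eqref{sec_moment} per task to extract $\tsigma^2/(BD_o)$ and $\tsigma^2/D_{in}$ contributions, and use the assumption $D_h \ge 2\al^2 \sigma_H^2$ so that $\E[\|I-\al \tilde{\nabla}^2 f_i(w_k,\D_h^i)\|^2]$ stays $\bigO(1)$, producing a bound of the form $C_1 \|\nabla F(w_k)\|^2 + C_2 \sigma_{\mathrm{tot}}^2$ with $\sigma_{\mathrm{tot}}^2 := \sigma^2/B + \tsigma^2/(BD_o) + \tsigma^2/D_{in}$.

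\textbf{Summation and two regimes.} Substituting $\E[\be_k \mid \F_k] \ge 0.8/(12\,L(w_k))$ and $\E[\be_k^2 \mid \F_k] \le 3.125/(144\,L(w_k)^2)$ from Lemma~\ref{tilde_beta}, and using the $1/12$ safety factor together with $B \ge 20$ to keep the quadratic-in-$\be_k$ term dominated by the linear-in-$\be_k$ one, the one-step bound collapses to $\E[F(w_{k+1})-F(w_k) \mid \F_k] \le -c_1 \|\nabla F(w_k)\|^2/L(w_k) + c_2 \sigma_{\mathrm{tot}}^2/L(w_k)$. Telescoping over $k=0,\dots,K-1$ and using $F(w_0) - \min F \le \Delta$ yields a bound on $\sum_k \E[\|\nabla F(w_k)\|^2/L(w_k)]$. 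To convert this into the $\min\{L/\eps^2,\ldots\}$ form of the theorem, I split each iterate into two regimes: if $L(w_k) \le 8L$ then $1/L(w_k) \ge 1/(8L)$, yielding the $\bigO(L\Delta/\eps^2)$ branch in standard nonconvex SGD style; if instead $L(w_k) > 8L$ the gradient term $\rho\al \E_{i\sim p}\|\nabla f_i(w_k)\|$ dominates, and bounding this quantity by $\bigO(\|\nabla F(w_k)\|+\sigma)$ (obtainable from Assumption~\ref{asm_bounded_var} and manipulations analogous to Lemma~\ref{lem_smooth_F}) closes the self-reference and produces the variance-limited branch $\bigO(L/\sigma_{\mathrm{tot}}^2)$.

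\textbf{Main obstacle.} The hardest part will be coordinating the three sources of randomness---the auxiliary stepsize batch, the task batch, and the per-task data batches---together with the data-dependent smoothness, because both $\be_k$ and $L(w_k)$ scale like the reciprocal of $\rho\al \E_{i\sim p}\|\nabla f_i(w_k)\|$ in the gradient-dominated regime. The conditional-independence factorization of $\be_k$ against $g_k$ and the $1/12$ shrinkage of $\be_k$ below $\tilde{\be}(w_k)$ are precisely what prevent the product $\be_k^2 L(w_k) \|g_k\|^2$ from consuming the $1/L(w_k)$ gain obtained from the negative linear-in-$\be_k$ term, allowing the telescoping to go through.
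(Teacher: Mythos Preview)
Your proposal is essentially correct and tracks the paper's proof through the descent inequality, the bias/variance bounds via Lemma~\ref{lemma:moments}, and the stepsize moment bounds via Lemma~\ref{tilde_beta}. Two points of departure are worth noting.

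First, the additive error in your one-step bound should scale as $c_2\sigma_{\mathrm{tot}}^2/L$, not $c_2\sigma_{\mathrm{tot}}^2/L(w_k)$: the bias contribution is weighted by $\E[\be_k\mid\F_k]$, whose available \emph{upper} bound is $1/(48L)$ (since $\tilde L(w_k)\ge 4L$ deterministically), not a multiple of $1/L(w_k)$. This does not break the argument but does affect how cleanly the two regimes separate.

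Second, in the final stage the paper does not split into two regimes on $L(w_k)$. Instead it invokes the appendix lemma showing $\|\nabla f(w_k)\|\le 2\|\nabla F(w_k)\|+\sigma$ to bound $L(w_k)\le 4L+4\rho\al\sigma+4\rho\al\|\nabla F(w_k)\|$ uniformly, applies the elementary inequality $\tfrac{x^2}{a+bx}\ge \min\{\tfrac{x^2}{2a},\tfrac{x}{2b}\}$, and then uses Cauchy--Schwarz in the form $\E[X]\E[Y]\ge (\E[\sqrt{XY}])^2$ to pass from $\E[\|\nabla F(w_k)\|^2/(a+b\|\nabla F(w_k)\|)]$ to $\E[\|\nabla F(w_k)\|]^2/(a+b\,\E[\|\nabla F(w_k)\|])$. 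This step is what allows the stopping criterion, stated in terms of $\E[\|\nabla F(w_k)\|]$, to translate into a uniform per-step decrease and hence the ``must terminate in $\Delta\cdot\bigO(L/\eps^2)$ steps'' conclusion. Your two-regime split can likely be made to work under Assumption~\ref{asm_simplify}, but you would still need an analogue of this Cauchy--Schwarz step to connect $\E[\|\nabla F(w_k)\|]$ (the quantity in the stopping rule) to $\E[\|\nabla F(w_k)\|^2/L(w_k)]$ (the quantity that telescopes); without it the argument has a gap at exactly the point where the expectation over the trajectory meets the data-dependent smoothness.
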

%%%%%%%%%%% Proof %%%%%%%%%%%%
%\begin{proof}
%Check Appendix \ref{Thm_SGD_general_proof} for the general statement of the theorem and its proof.
%\end{proof}
%\begin{remark}
%It is worth noting that the condition $B \geq 20$ can be dropped, i.e., $B$ can be any positive integer, at the cost of 	decreasing the ratio $\beta_k / \tilde{\beta}(w_k)$.
%\end{remark}
The result in Theorem~\ref{Thm_SGD_general_simple}  shows that after running MAML for $\mathcal{O}(\frac{1}{\eps^2}+ \frac{B}{\sigma^2} + \frac{BD_o+D_{in}}{\tsigma^2})$ iterations, we can find a point $w^{\dag}$ that its expected gradient norm  $\E[ \| \nabla F(w^{\dag}) \|] $ is at most of $\eps+ \mathcal{O}( \sqrt{ \frac{\sigma^2}{B} +\frac{ \tsigma^2}{B D_o} + \frac{ \tsigma^2}{D_{in}}})$. This result implies that if we choose the batch sizes $B$, $D_o$, and $D_{in}$ properly (as a function of $\eps$), then for any $\eps>0$ it is possible to reach an $\eps$-FOSP of problem \eqref{main_prob} in a number of iterations which is polynomial in $1/\eps$. We formally state this result in the following corollary.

%%%%%%%%%%%%%%%%%%%%%%%%%%%%%%%
%%%%%%%%%%%%%%%%%%%%%%%%%%%%%%%
%%%%       COROLLARY       %%%%%%%%%%%%%%%%    
%%%%%%%%%%%%%%%%%%%%%%%%%%%%%%%
%%%%%%%%%%%%%%%%%%%%%%%%%%%%%%%
\begin{corollary}\label{cor_large_batch_MAML}
Suppose the condition in Theorem~\ref{Thm_SGD_general_simple} are satisfied. Then, if the batch sizes $B$, $D_o$, and $D_{in}$ satisfy the following conditions, 
%%%

\begin{equation}
B \geq  (C_1 {\sigma^2})/{\eps^2}, \quad D_{in}, BD_o \geq (C_2 {\tsigma^2})/{\eps^2},
\end{equation}
for some constants $C_1$ and $C_2$, then MAML finds an $\eps$-FOSP after $\Delta \bigO ({L}/{\eps^2})$ iterations.
\end{corollary}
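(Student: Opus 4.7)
The plan is to observe that Corollary~\ref{cor_large_batch_MAML} is essentially a one-line consequence of Theorem~\ref{Thm_SGD_general_simple}: we simply substitute the proposed batch-size choices into the two bounds (on the achievable gradient norm and on the iteration count) and verify that each reduces to the advertised form. Since the conclusion of the theorem is already stated for arbitrary batch sizes, no new stochastic-gradient analysis is needed; the task is purely to track constants.

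First I would handle the statistical-error term. Under the hypotheses $B \geq C_1 \sigma^2/\eps^2$, $D_{in} \geq C_2 \tsigma^2/\eps^2$, and $B D_o \geq C_2 \tsigma^2/\eps^2$, each summand inside the square root in Theorem~\ref{Thm_SGD_general_simple} is at most a constant multiple of $\eps^2$:
\begin{equation*}
\frac{\sigma^2}{B} \leq \frac{\eps^2}{C_1}, \qquad \frac{\tsigma^2}{B D_o} \leq \frac{\eps^2}{C_2}, \qquad \frac{\tsigma^2}{D_{in}} \leq \frac{\eps^2}{C_2}.
\end{equation*}
Summing and taking the square root gives a bound of $\eps\sqrt{1/C_1 + 2/C_2}$, so choosing $C_1$ and $C_2$ large enough (as absolute constants independent of $\eps$) makes this contribution at most $\eps$. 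Adding the explicit $+\eps$ term from Theorem~\ref{Thm_SGD_general_simple} then yields $\E[\|\nabla F(w_\eps)\|] \leq \bigO(\eps)$, which after absorbing the constant into a rescaling of $\eps$ gives the desired $\eps$-FOSP guarantee.

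Next I would bound the iteration complexity. Theorem~\ref{Thm_SGD_general_simple} gives the iteration count as the minimum of two terms, so in particular it is upper-bounded by the first argument $\Delta\,\bigO(L/\eps^2)$ regardless of the batch-size choice. This is exactly the claimed rate, and it is automatically achieved once the hypotheses on $B$, $D_o$, and $D_{in}$ are imposed.

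Since the entire argument is a substitution followed by elementary algebra on constants, there is no real obstacle. If anything, the only minor subtlety is making sure that the rescaling needed to turn the $\bigO(\eps)$ gradient-norm bound into a clean $\eps$-FOSP bound does not inflate the iteration count, but this only changes $C_1$, $C_2$, and the hidden constant in $\bigO(L/\eps^2)$, all of which remain absolute constants.
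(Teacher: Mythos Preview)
Your proposal is correct and matches the paper's intended argument: the paper states this corollary without a separate proof, treating it as a direct substitution into Theorem~\ref{Thm_SGD_general_simple} exactly as you describe. Your handling of both the statistical-error term and the iteration complexity (taking the first argument of the $\min$) is precisely the one-line derivation the paper leaves implicit.
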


The result shows that with sufficient samples for the batch of stochastic gradient evaluations, i.e.,  $D_{in}$ and $D_{o}$, and for the batch of tasks $B$, MAML finds an $\eps$-FOSP after at most $\mathcal{O}(1/\eps^2)$ iterations for any $\eps>0$. 
\begin{comment}
In the next corollary, we characterize the complexity of MAML for the special case that exact gradients and hessian are available for each task, i.e., $\tsigma=\sigma_H=0$, and only tasks are randomly chosen.  

\begin{corollary}\label{cor_MAML_det}
Suppose the condition in Theorem~\ref{Thm_SGD_general} are satisfied. Further, assume that 
we have access the exact values of $\nabla f_i$ and $\nabla^2 f_i$ in inner and outer loops.
Then, for any $\eps >0$, MAML with batch size
\begin{equation}
B \geq \max \left \{20, \frac{56 \sigma^2(L + \rho \alpha \sigma)}{L \eps^2} , \frac{28 \rho \alpha \sigma^2}{L\eps} \right \}    
\end{equation}
finds an $\eps$-FOSP point, i.e., $\E[\|\nabla F(w)\|]\leq \eps$, in at most 
$\bigO(1) \Delta (L + \rho \al (\sigma + \eps))/{\eps^2}$
iterations, where $\Delta:= (F(w_0) - \min_{w \in \R^d} F(w) )$.
\end{corollary} 
\end{comment}
%%%%%%%%%%%%%%%%%% Remark %%%%%%%%%%%%%%%
%%%%%%%%%%%%%%%%%%%%%%%%%%%%%%%%%%%%%%%%%
%%%%%%%%%%%%%%%%%%%%%%%%%%%%%%%%%%%%%%%%%
\begin{remark}
\sloppy Based on Theorem \ref{Thm_F_hatF}, the difference between $\nabla F$ and $\nabla \hat{F}$ is $\bigO \left (\max\{\frac{\tsigma}{\sqrt{D_{test}}}, \frac{\sigma_H \tilde{\sigma}}{D_{test}}\} \right )$. Given that, and since in practice, we usually choose $D_{test}$ at least as large as $D_{in}$, one can see that as long as $\sigma_H$ is not significantly larger than $\tilde{\sigma}$, the order of norm of gradient for both $F$ and $\hat{F}$ would be similar for all the results, up to some constant. This argument holds for FO-MAML and HF-MAML as well. 	
\end{remark}

\subsection{Convergence of FO-MAML}

Now we proceed to characterize the convergence of the first order approximation of MAML (FO-MAML). %presented in Algorithm~\ref{Algorithm2}. %In particular, we first characterize the overall complexity of this algortihm for finding a first-order stationary point of the function $F$. 
%%%%%
%%%%%%%%%%%%%%%%%%%%%%%%%%%%%%%
%%%%%%%%%%%%%%%%%%%%%%%%%%%%%%%
%%%%        THEOREM       %%%%%%%%%%%%%%%%    
%%%%%%%%%%%%%%%%%%%%%%%%%%%%%%%
%%%%%%%%%%%%%%%%%%%%%%%%%%%%%%%
\begin{theorem}\label{First_order_MAML_det_simple}
Consider $F$ in \eqref{main_prob} for the case that $\al \in (0,\frac{1}{10L}]$. Suppose Assumptions~\ref{ass:boundedness}-\ref{asm_simplify} hold. %, and recall the definitions $L:=\max L_i$ and $\rho:=\max \rho_i$. 
Consider running FO-MAML with batch sizes satisfying the conditions $D_h \geq \ceil{2 \al^2 \sigma_H^2}$ and $B \geq 20$.
Let $\beta_k = \tilde{\beta}(w_k)/18$
where $\tilde{\beta}(w)$ is defined in \eqref{beta_update}.
Then, for any $1>\eps >0$, FO-MAML finds $w_\eps$ such that
%%%%
\begin{equation}\label{result_FOMAML_general_simple}
\E  [ \| \nabla F(w_\eps) \|] \!\leq \!\bigO\! \left (\! \sqrt{\sigma^2\! \left( \!\alpha^2 L^2\!+\!\frac{1}{B}\!\right)\! +\! \frac{\tsigma^2}{B D_o} \!+\! \frac{\tsigma^2}{D_{in}}} \right )\!	+\! \eps
\end{equation}
with a total number of iterations of

\begin{equation*}
\bigO(1) \Delta \min \left \{ \frac{L}{\eps^2}, \frac{L}{\sigma^2 (\alpha^2 L^2\!+\!{B^{-1}})}+\frac{L(BD_o\!+\!D_{in})}{\tsigma^2} \right \}\!.
\end{equation*}
%iterations, where $\Delta:= (F(w_0) - \min_{w \in \R^d} F(w) )$.
\end{theorem}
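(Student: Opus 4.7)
I would mirror the MAML analysis of Theorem~\ref{Thm_SGD_general_simple} and treat FO-MAML as a biased stochastic descent on $F$, with one extra bias term coming from dropping the factor $I-\alpha\tilde{\nabla}^2 f_i(w_k,\D_h^i)$. Write $g_k^{FO}:=B^{-1}\sum_{i\in\B_k}\tilde{\nabla}f_i(w_{k+1}^i,\D_o^i)$ for the FO-MAML direction. First, I would invoke the local smoothness provided by Lemma~\ref{lem_smooth_F} along the segment joining $w_k$ and $w_{k+1}=w_k-\beta_k g_k^{FO}$ to obtain
\begin{equation*}
F(w_{k+1})\leq F(w_k)-\beta_k\nabla F(w_k)^\top g_k^{FO}+\frac{L(w_k)}{2}\beta_k^2\|g_k^{FO}\|^2,
\end{equation*}
where $L(w_k)=4L+2\rho\alpha\,\E_{i\sim p}\|\nabla f_i(w_k)\|$. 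Since $\beta_k=\tilde\beta(w_k)/18$ is random and, by construction, independent of the batches used in $g_k^{FO}$, the two moment bounds of Lemma~\ref{tilde_beta} let me replace $\E[\beta_k]$ and $\E[\beta_k^2]$ by $\Theta(1/L(w_k))$ and $\bigO(1/L(w_k)^2)$ after conditioning on $w_k$.

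The central estimate is the decomposition $\E[g_k^{FO}\mid\F_k]=\nabla F(w_k)+b_k^{\mathrm{H}}+b_k^{\mathrm{st}}$, where $b_k^{\mathrm{H}}=\alpha\,\E_{i\sim p}[\nabla^2 f_i(w_k)\nabla f_i(w_k-\alpha\nabla f_i(w_k))]$ is the Hessian-omission bias absent from the MAML analysis, and $b_k^{\mathrm{st}}$ is the inner-gradient stochastic bias controlled by $\alpha L\tsigma/\sqrt{D_{in}}$ through Lemma~\ref{lemma:moments}. To bound $\|b_k^{\mathrm{H}}\|$, I would use Assumption~\ref{asm_smooth} to dominate it by $\alpha L\,\E_{i\sim p}\|\nabla f_i(w_k-\alpha\nabla f_i(w_k))\|$, then apply smoothness of the inner step together with Assumption~\ref{asm_bounded_var} to convert this into an estimate of the form $C_1\|\nabla F(w_k)\|+C_2\alpha L\sigma$, paralleling the bound on $\|\nabla f(w_k)\|$ used implicitly in the MAML proof. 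The second-moment estimate of $\E[\|g_k^{FO}\|^2\mid\F_k]$ comes from Lemma~\ref{lemma:moments} combined with the usual $1/B$ variance reduction over the independent task batch, yielding $\bigO(\|\nabla F(w_k)\|^2+\sigma^2/B+\tsigma^2/(BD_o)+\tsigma^2/D_{in})$.

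Plugging these into the descent inequality, applying Young's inequality to the cross term $\beta_k\nabla F(w_k)^\top(b_k^{\mathrm{H}}+b_k^{\mathrm{st}})$, taking full expectation, and then invoking Lemma~\ref{tilde_beta}, I expect a per-step bound of the form
\begin{equation*}
\E[F(w_{k+1})]\leq\E[F(w_k)]-\frac{c_1}{L(w_k)}\E\|\nabla F(w_k)\|^2+\frac{c_2}{L(w_k)}E,
\end{equation*}
with noise floor $E=\bigO(\sigma^2(\alpha^2 L^2+B^{-1})+\tsigma^2/(BD_o)+\tsigma^2/D_{in})$. Telescoping, using Assumption~\ref{ass:boundedness} in the form $F(w_T)\geq\min F$, passing from the average of squared gradient norms to $\E\|\nabla F(w_\eps)\|$ at a uniformly random index via Jensen's inequality, and finally balancing the ``progress-dominated'' and ``$E$-dominated'' regimes then produces the two-term minimum in the stated iteration complexity.

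The main obstacle will be absorbing $b_k^{\mathrm{H}}$ cleanly into the negative quadratic term: after Young's inequality, $\|b_k^{\mathrm{H}}\|^2$ contributes an $\bigO(\alpha^2 L^2)\,\|\nabla F(w_k)\|^2$ piece that must be dominated by the leading $\|\nabla F(w_k)\|^2/L(w_k)$---this is precisely why the step-size restriction tightens from $\alpha\leq1/(6L)$ in MAML to $\alpha\leq1/(10L)$ here---together with an additive $\alpha^2 L^2\sigma^2$ piece that feeds directly into $E$. This last term is irreducible: no choice of $B$, $D_o$, $D_h$, or $D_{in}$ can remove it, which is the formal content of the Table~\ref{table-1} entry that FO-MAML cannot drive $\|\nabla F\|$ below $\bigO(\alpha\sigma)$ and is the qualitative reason HF-MAML will be needed in the next section.
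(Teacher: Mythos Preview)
Your proposal is correct and follows essentially the same route as the paper: the paper's proof likewise writes the FO-MAML direction as $\nabla F(w_k)+r_k$ with $r_k=\E_{i\sim p}[((I-\alpha\nabla^2 f_i(w_k))^{-1}-I)\nabla F_i(w_k)+e_{i,k}]$, which is algebraically identical to your $b_k^{\mathrm H}+b_k^{\mathrm{st}}$, and bounds it by a multiple of $\|\nabla F(w_k)\|+\alpha L\sigma+\tsigma/\sqrt{D_{in}}$ before plugging into the same descent inequality from Lemma~\ref{lem_smooth_F}/Corollary~\ref{bergman_bound} and Lemma~\ref{tilde_beta}. The only cosmetic differences are that the paper expands $b_k^{\mathrm H}$ via a Neumann series and finishes with a stopping-time argument rather than your random-index/Jensen step; both yield the same conclusion.
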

%\begin{proof}
%Check Appendix \ref{First_order_MAML_det_proof} for the general statement of the theorem and its proof.
%\end{proof}
%%%%%%%%%%%%%%%%%%%%%%%%%%%%%%%%%%%%%%%%%%%%%%%%%%%%%%%%%%%%%%%%%%%%%%%%%%%%%%%%%%%%%%%%%%%%%%%%%%%%%%%%
%%%%%%%%%%%%%%%%%%%%%%%%%%%%%%%%%%%%%%%%%%%%%%%%%%%%%%%%%%%%%%%%%%%%%%%%%%%%%%%%%%%%%%%%%%%%%%%%%%%%%%%%
Comparing Theorem~\ref{First_order_MAML_det_simple} with Theorem \ref{Thm_SGD_general_simple} implies that FO-MAML, in contrast to MAML, may not converge to an exact first-order stationary point even when we use large batch sizes (see the subsection below). Specifically, even if we choose large batch sizes $B$, $D_{in}$, and $D_o$ for FO-MAML, the  gradient norm cannot become smaller than $\mathcal{O}(\alpha \sigma)$. This is because of the $\alpha^2 L^2 \sigma^2$ term in \eqref{result_FOMAML_general_simple} which does not decrease by increasing the batch sizes for the tasks and stochastic gradient evaluations.
Now we state the results for FO-MAML when, as in corollary \ref{cor_large_batch_MAML}, we use batch sizes of $\mathcal{O}(1/\eps^2)$. 

%%%%%%%%%%%%%%%%%%%%%%%%%%%%%%%
%%%%%%%%%%%%%%%%%%%%%%%%%%%%%%%
%%%%       COROLLARY       %%%%%%%%%%%%%%%%    
%%%%%%%%%%%%%%%%%%%%%%%%%%%%%%%
%%%%%%%%%%%%%%%%%%%%%%%%%%%%%%%
\begin{corollary}
Suppose the condition in Theorem~\ref{First_order_MAML_det_simple} are satisfied. Then, if the batch sizes $B$, $D_o$, and $D_{in}$ satisfy the following conditions, 
%%%
\begin{comment}

\begin{equation}
D_{in}\geq \frac{8\tsigma^2}{L} \left(\frac{L + \rho \al (\sigma + \eps) }{\eps^2} \right),
\ 
B \geq \frac{35\sigma^2}{L} \left(\frac{L + \rho \al (\sigma + \eps) }{\eps^2} \right),
\ 
BD_o \geq \frac{5\tsigma^2}{L} \left(\frac{L + \rho \al (\sigma + \eps) }{\eps^2} \right),
\end{equation}
\end{comment}

\begin{equation}
B \geq  C_1 \frac{1}{\alpha^2 L^2}, \quad D_{in}, BD_o \geq C_2 \frac{\tsigma^2}{\alpha^2 \sigma^2 L^2},
\end{equation}
\sloppy for some constants $C_1$ and $C_2$,
then FO-MAML finds a point $w^{\dag}$ satisfying the condition $\E[\|\nabla F(w^{\dag})\|]\leq \mathcal{O}(\alpha \sigma L)$,  after at most 
$\Delta \bigO({1}/{(\alpha^2 \sigma^2 L)})$ iterations.
\end{corollary}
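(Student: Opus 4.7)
The plan is to show that this corollary is a direct consequence of Theorem \ref{First_order_MAML_det_simple} by choosing $\eps$ at the floor imposed by the $\alpha^2 L^2 \sigma^2$ term and then picking the batch sizes just large enough to bring every other error contribution below that same floor. Because FO-MAML's accuracy bound \eqref{result_FOMAML_general_simple} contains the persistent term $\alpha^2 L^2 \sigma^2$ under the square root, no amount of extra sampling can push the gradient norm below order $\alpha \sigma L$, so it is natural to target exactly $\eps = \Theta(\alpha \sigma L)$.

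First I would fix $\eps$ to be a small constant multiple of $\alpha \sigma L$, so that $\eps^2 = \Theta(\alpha^2 \sigma^2 L^2)$. With this target, each of the remaining noise terms inside the square root in \eqref{result_FOMAML_general_simple} must be made $\bigO(\alpha^2 \sigma^2 L^2)$. The choice $B \geq C_1 / (\alpha^2 L^2)$ ensures $\sigma^2/B \leq \sigma^2 \alpha^2 L^2 / C_1$, so that $\sigma^2(\alpha^2 L^2 + 1/B) = \bigO(\alpha^2 \sigma^2 L^2)$. Likewise, $D_{in} \geq C_2 \tsigma^2/(\alpha^2 \sigma^2 L^2)$ gives $\tsigma^2/D_{in} = \bigO(\alpha^2 \sigma^2 L^2)$, and $B D_o \geq C_2 \tsigma^2/(\alpha^2 \sigma^2 L^2)$ gives $\tsigma^2/(B D_o) = \bigO(\alpha^2 \sigma^2 L^2)$. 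Summing these inside the square root and taking the root produces the claimed bound $\E[\|\nabla F(w^\dagger)\|] = \bigO(\alpha \sigma L)$. Note that these conditions are also compatible with the hypotheses $B \geq 20$ and $D_h \geq \lceil 2\alpha^2 \sigma_H^2\rceil$ from the theorem (if $C_1$ is taken large enough in the regime where $\alpha L$ is small, and $D_h$ is chosen independently).

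For the iteration count, I would substitute $\eps = \Theta(\alpha \sigma L)$ into the first branch of the minimum in Theorem \ref{First_order_MAML_det_simple}, obtaining
\[
\bigO(1) \cdot \Delta \cdot \frac{L}{\eps^2} = \bigO(1) \cdot \Delta \cdot \frac{L}{\alpha^2 \sigma^2 L^2} = \Delta \cdot \bigO\!\left(\frac{1}{\alpha^2 \sigma^2 L}\right),
\]
which is exactly the stated bound. One can also verify that with the above choices of $B$, $D_{in}$, $D_o$, the second branch of the minimum is of the same order, so the minimum is attained at this value. Since every step is a direct substitution into Theorem \ref{First_order_MAML_det_simple}, there is no substantive obstacle; the only thing to be careful about is confirming that the constants $C_1, C_2$ can be taken absolute (independent of $\eps$) and that the parameter assumptions of the theorem, in particular Assumption \ref{asm_simplify}, are preserved under our choice $\eps = \Theta(\alpha \sigma L)$.
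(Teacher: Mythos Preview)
Your proposal is correct and matches the paper's (implicit) approach: the corollary is stated without proof as an immediate consequence of Theorem~\ref{First_order_MAML_det_simple}, obtained exactly by setting $\eps=\Theta(\alpha\sigma L)$ and choosing the batch sizes so that each remaining term under the square root in \eqref{result_FOMAML_general_simple} is $\bigO(\alpha^2\sigma^2 L^2)$. Your verification of both the accuracy bound and the iteration count via direct substitution is precisely what is intended.
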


%%%%%%%%%%%%%%%%%%%%%%%%%%%%%%%%%%%%%%%%%%%%%%%%%%%%%%%%%%%%%%%%%%%%%%%%%%%%%%%%%%%%%%%%%%%%%%%%
%%%%%%%%%%%%%%%%%%%%%%%%%%%%%%%%%%%%%%%%%%%%%%%%%%%%%%%%%%%%%%%%%%%%%%%%%%%%%%%%%%%%%%%%%%%%%%%%
%%%%%%%%%%%%%%%%%%%%%%%%%%%%%%%%%%%%%%%%%%%%%%%%%%%%%%%%%%%%%%%%%%%%%%%%%%%%%%%%%%%%%%%%%%%%%%%%
%%%%%%%%%%%%%%%%%%%%%%%%%%%%%%%%%%%%%%%%%%%%%%%%%%%%%%%%%%%%%%%%%%%%%%%%%%%%%%%%%%%%%%%%%%%%%%%%
\subsubsection{Convex Quadratic Case}\label{Qudratic_MAML}
Theorem \ref{First_order_MAML_det_simple} suggests that FO-MAML might not be able to achieve any arbitrary level of accuracy, even when exact gradients and Hessians are available. In this subsection, we provide an example to show that this is indeed the case. In particular, we consider the case that we have $n$ equally likely tasks, where the loss function corresponding to task $i$ is given by 
\begin{equation}
f_i(w) = \frac{1}{2} w^\top A_i w + b_i^\top w + c_i	
\end{equation} 
where $c_i \in \R$, $b_i \in \R^d$, , and $A_i \in \R^{d \times d}$ is a symmetric and positive definite matrix with $\|A_i\| \leq L$. Note that, in this case, we have
\begin{equation}
\nabla f_i(w) = A_i w + b_i, \quad \nabla^2 f_i(w) = A_i. 	
\end{equation}
Also, throughout this section, we assume we have access to the exact value of Hessians and gradients, and also we go over all tasks at each iteration, i.e., $\B_k$ is equal to the set of all tasks. We further assume $\alpha < 1/L$.

\vspace{2mm}

\noindent $\bullet$ \textbf{Solution of MAML Problem}

Let us first derive the solution to problem \eqref{main_prob}. Setting $\nabla F(w^*)$ equal to $0$ implies that
\begin{equation}
\frac{1}{n} \sum_{i=1}^n (I - \alpha A_i) \left ( A_i \left (w^* - \alpha (A_i w^* + b_i) \right ) + b_i\right ) = 0.
\end{equation}
Simplifying this equation yields
\begin{equation}\label{w_MAML_quad}
\left ( \frac{1}{n} \sum_{i=1}^n (I - \alpha A_i)^2 A_i \right ) w^* = - \left ( \frac{1}{n} \sum_{i=1}^n (I - \alpha A_i)^2 b_i \right ).	
\end{equation}
Note that, for any $i$, $(I - \alpha A_i)^2 A_i$ is positive definite. This is due to the fact that, for any $u \in \R^d$, we have
\begin{align*}
u^\top 	(I - \alpha A_i)^2 A_i u = \left ( (I - \alpha A_i) u \right )^\top A_i \left ( (I - \alpha A_i) u \right ) > 0
\end{align*}
where the first equality follows from the fact that $A_i$ is symmetric and the last inequality is obtained using positive definiteness of $A_i$. As a result, and since sum of positive definite matrices is also positive definite, $\left ( \frac{1}{n} \sum_{i=1}^n (I - \alpha A_i)^2 A_i \right )$ is positive definite, and hence invertible. Thus, from \eqref{w_MAML_quad}, we have
\begin{equation}\label{w_MAML_quad_2}
w^* = 	- \left ( \frac{1}{n} \sum_{i=1}^n (I - \alpha A_i)^2 A_i \right )^{-1} \left ( \frac{1}{n} \sum_{i=1}^n (I - \alpha A_i)^2 b_i \right ).
\end{equation}

\vspace{2mm}

\noindent $\bullet$ \textbf{What FO-MAML Converges to?}

FO-MAML update is given by
\begin{align}\label{w_FOMAML_quad}
w_{k+1} &= w_k - \alpha 	\frac{1}{n} \sum_{i=1}^n \left ( A_i \left (w_k - \alpha (A_i w_k + b_i) \right ) + b_i\right ) \nonumber\\
& = \left ( I - \alpha \frac{1}{n} \sum_{i=1}^n (I - \alpha A_i) A_i \right ) w_k - \alpha \frac{1}{n} \sum_{i=1}^n (I - \alpha A_i) b_i.
\end{align}
Note that, since $\alpha < 1/L$, for any $i$, $I-\alpha A_i$ is symmetric and positive definite. Note that since both $I-\alpha A_i$ and $A_i$ are positive definite and $(I-\alpha A_i)A_i$ can be written as $A_i^{1/2} (I-\alpha A_i) A_i^{1/2}$, then we can show that $(I-\alpha A_i)A_i$ is also positive definite. In addition, since $\|A_i\| \leq L$ and $\|I - \alpha A_i\| \leq 1$, we have $\| (I-\alpha A_i)A_i\| \leq L$. Therefore, the term $\frac{1}{n} \sum_{i=1}^n (I - \alpha A_i) A_i$ is  positive definite and its norm is upper bounded by $L$. Since $\alpha < 1/L$, this implies that $I - \alpha \frac{1}{n} \sum_{i=1}^n (I - \alpha A_i) A_i$ is positive definite, and thus invertible.

Using this result, it is immediate to see $\{w_k\}_k$ \eqref{w_FOMAML_quad} converges to
\begin{equation}\label{w_FOMAML_quad_2}
w_{FO} = - \left ( \frac{1}{n} \sum_{i=1}^n (I - \alpha A_i) A_i \right )^{-1} \left ( \frac{1}{n} \sum_{i=1}^n (I - \alpha A_i) b_i \right ).	
\end{equation}
Comparing \eqref{w_MAML_quad_2} and \eqref{w_FOMAML_quad_2}, it is clear that FO-MAML does not converge to the solution of MAML problem denoted by $w^*$. However, if either $\alpha$ is very small or if $A_i$ are all close to some $A$ and $b_i$ are all close to some $b$ which leads to $\sigma$ being very small, then $w_{FO}$ would be very close to $w^*$, which is in line with what we observed in Theorem \ref{First_order_MAML_det_simple}. 
%%%%%%%%%%%%%%%%%%%%%%%%%%%%%%%%%%%%%%%%%%%%%%%%%%%%%%%%%%%%%%%%%%%%%%%%%%%%%%%%%%%%%%%%%%%%%%%%%%%%%%%%
%%%%%%%%%%%%%%%%%%%%%%%%%%%%%%%%%%%%%%%%%%%%%%%%%%%%%%%%%%%%%%%%%%%%%%%%%%%%%%%%%%%%%%%%%%%%%%%%%%%%%%%%
%%%%%%%%%%%%%%%%%%%%%%%%%%
%%%%%%%%%%%%%%%%%%%%%%%%%%
%%%%% S E C T I ON  %%%%%%
%%%%%%%%%%%%%%%%%%%%%%%%%%
%%%%%%%%%%%%%%%%%%%%%%%%%%

\subsection{Convergence of HF-MAML}

Now we proceed to analyze the overall complexity of our proposed HF-MAML method.
\begin{theorem}\label{Thm_HF_MAML_simple}
%%%
Consider the function $F$ defined in \eqref{main_prob} for the case that $\al \in (0,\frac{1}{6L}]$. Suppose Assumptions~\ref{ass:boundedness}-\ref{asm_simplify} hold. Consider running HF-MAML with batch sizes satisfying the conditions $D_h \geq \ceil{36 (\al \rho \tsigma)^2}$ and $B \geq 20$.
Let $\beta_k = \tilde{\beta}(w_k)/25$
where $\tilde{\beta}(w)$ is defined in \eqref{beta_update}. Also, we choose the approximation parameter $\delta_k^i$ in HF-MAML as

\begin{equation*}
\delta_k^i = \frac{1}{6 \rho \alpha \| \tilde{\nabla} f_i(w_k\!-\!\al \tilde{\nabla} f_i(w_k,\D_{in}^i),\D_{o}^i) \|}.
\end{equation*}
Then, HF-MAML finds a solution $w_\eps$ such that
%%%%

\begin{equation}
 \E[ \| \nabla F(w_\eps) \|] \leq \bigO \bigg( \sqrt{\frac{\sigma^2}{B} + \frac{\tsigma^2}{B D_o} + \frac{\tsigma^2}{D_{in}}} \bigg) + \eps	
\end{equation}
with a total number of iterations of

\begin{equation}
\bigO(1) \Delta \min \left \{ \frac{L}{\eps^2},
\frac{LB}{\sigma^2} + \frac{L(B D_o+D_{in})}{\tsigma^2} \right \} .
\end{equation}
%iterations, where $\Delta:= (F(w_0) - \min_{w \in \R^d} F(w) )$.
\end{theorem}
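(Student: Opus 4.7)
The plan is to view HF-MAML as a perturbation of MAML and reuse the machinery of Theorem~\ref{Thm_SGD_general_simple}, showing that the extra error from the finite-difference approximation is of the same order as the noise terms already present for MAML. Writing $v_{i,k} := \tilde{\nabla} f_i(w_{k+1}^i, \D_o^i)$, the HF-MAML direction decomposes as
\begin{equation*}
g_k^{HF} = g_k^{M} + \frac{\alpha}{B} \sum_{i \in \B_k} \bigl( \tilde{\nabla}^2 f_i(w_k, \D_h^i)\, v_{i,k} - d_k^i \bigr),
\end{equation*}
where $g_k^{M}$ is the MAML direction in \eqref{hassan}. Once the second term is shown to be small, the descent inequality
\begin{equation*}
F(w_{k+1}) \leq F(w_k) - \beta_k \langle \nabla F(w_k), g_k^{HF}\rangle + \frac{L(w_k)\beta_k^2}{2} \|g_k^{HF}\|^2
\end{equation*}
(using Lemma~\ref{lem_smooth_F}) produces a per-iteration decrease of the same order as for MAML, and the stepsize-moment bookkeeping via Lemma~\ref{tilde_beta} and the bias/variance control from Lemma~\ref{lemma:moments} transfer verbatim.

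Next I would bound the finite-difference error task by task, conditioning on $v_{i,k}$, i.e.\ on $\F_k$ together with $\D_{in}^i$ and $\D_o^i$; this is legitimate because $\D_h^i$ is drawn independently. Unbiasedness of stochastic gradients gives
\begin{equation*}
\E[d_k^i \mid v_{i,k}] = \frac{1}{2\delta_k^i}\bigl(\nabla f_i(w_k + \delta_k^i v_{i,k}) - \nabla f_i(w_k - \delta_k^i v_{i,k})\bigr),
\end{equation*}
and an integral Taylor expansion using $\rho$-Lipschitzness of $\nabla^2 f_i$ shows that this differs from $\nabla^2 f_i(w_k)v_{i,k}$ by at most $\rho \delta_k^i \|v_{i,k}\|^2/2$. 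Since $\E[\tilde{\nabla}^2 f_i(w_k,\D_h^i)v_{i,k} \mid v_{i,k}] = \nabla^2 f_i(w_k)v_{i,k}$, the conditional bias of the per-task correction is bounded by $\rho \delta_k^i \|v_{i,k}\|^2$, and the prescribed $\delta_k^i = 1/(6\rho\alpha\|v_{i,k}\|)$ turns this into the clean bound $\alpha\rho\delta_k^i\|v_{i,k}\|^2 = \|v_{i,k}\|/6$. For the variance I would expand $d_k^i$ as a sample average over $\D_h^i$, control the variance of each summand $(\tilde{\nabla} f_i(w_k+\delta_k^i v_{i,k},\theta) - \tilde{\nabla} f_i(w_k-\delta_k^i v_{i,k},\theta))/(2\delta_k^i)$ by $\tsigma^2/(\delta_k^i)^2$ using Assumption~\ref{asm_bounded_var_i}, and hence bound the conditional variance of $\alpha(d_k^i - \tilde{\nabla}^2 f_i(w_k,\D_h^i)v_{i,k})$ by $O(\rho^2\alpha^4\tsigma^2\|v_{i,k}\|^2/D_h)$; the hypothesis $D_h \geq \lceil 36(\alpha\rho\tsigma)^2 \rceil$ absorbs this into an $O(\alpha^2\|v_{i,k}\|^2)$ term, of the same order as the $\alpha^2\sigma_H^2\|v_{i,k}\|^2/D_h$ Hessian-variance term that appears in the MAML proof.

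Plugging back into the descent inequality, both $\langle \nabla F(w_k), g_k^{HF}\rangle$ and $\|g_k^{HF}\|^2$ differ from their MAML counterparts only by terms controlled by $\E[\|v_{i,k}\|^2 \mid \F_k]$; Lemma~\ref{lemma:moments} bounds the latter by $(1+1/\phi)\|\nabla f_i(w_k-\alpha\nabla f_i(w_k))\|^2 + O(\tsigma^2)$, and a triangle-inequality step converts this into $\|\nabla F(w_k)\|^2$ plus the expected $\sigma^2$, $\tsigma^2/D_{in}$, and $\tsigma^2/(BD_o)$ noise terms. The net effect is an inequality of the same shape as in the MAML proof but with slightly larger absolute constants, which is exactly why $\beta_k$ is rescaled from $\tilde\beta(w_k)/12$ to $\tilde\beta(w_k)/25$. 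Telescoping the resulting recursion using Lemma~\ref{tilde_beta} to bound $\E[\beta_k]$ below by $\Omega(1/L(w_k))$ and $\E[\beta_k^2]$ above by $O(1/L(w_k)^2)$ then delivers the stated iteration count. The main obstacle is the coupling between $\delta_k^i$ and $v_{i,k}$: since $\delta_k^i \propto 1/\|v_{i,k}\|$ is random, one cannot pull it outside expectations naively, so the entire error analysis must be carried out conditional on $v_{i,k}$ before outer averaging. The specific choice of $\delta_k^i$ is what makes both the bias $\|v_{i,k}\|/6$ and the conditional variance scale as $\|v_{i,k}\|^2$, which is what lets them be absorbed into the descent terms without inflating the asymptotic rate.
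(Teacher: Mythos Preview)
Your high-level strategy---bound the bias and variance of the HF-MAML direction, apply the descent inequality via Lemma~\ref{lem_smooth_F}, control the stochastic stepsize through Lemma~\ref{tilde_beta}, and telescope as in the MAML analysis---is exactly the paper's. The difference lies in the decomposition you choose.

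You write $g_k^{HF} = g_k^{M} + \frac{\alpha}{B}\sum_i(\tilde{\nabla}^2 f_i(w_k,\D_h^i)v_{i,k} - d_k^i)$ and plan to reuse the MAML bounds on $g_k^{M}$. This introduces a stochastic Hessian $\tilde{\nabla}^2 f_i(w_k,\D_h^i)$ that HF-MAML never computes, and with it the Hessian-variance parameter $\sigma_H$. Two issues follow. First, your claimed bound on the conditional variance of $\alpha(d_k^i - \tilde{\nabla}^2 f_i(w_k,\D_h^i)v_{i,k})$ by $O(\rho^2\alpha^4\tsigma^2\|v_{i,k}\|^2/D_h)$ drops the $\alpha^2\sigma_H^2\|v_{i,k}\|^2/D_h$ contribution from the stochastic-Hessian term; the variance of a difference is not controlled by the variance of one summand alone. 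Second, reusing the MAML proof for $g_k^M$ requires the MAML batch condition $D_h \geq \lceil 2\alpha^2\sigma_H^2\rceil$, whereas the HF-MAML theorem assumes only $D_h \geq \lceil 36(\alpha\rho\tsigma)^2\rceil$; these need not coincide.

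The paper sidesteps this by comparing $d_k^i$ directly to the \emph{exact} Hessian-vector product $\nabla^2 f_i(w_k)v_{i,k}$ rather than to a stochastic one: it writes $\E_{\D_h^i}[d_k^i \mid v_{i,k}] = \nabla^2 f_i(w_k)v_{i,k} + \tilde e_k^i$ with $\|\tilde e_k^i\| \leq \rho\delta_k^i\|v_{i,k}\|^2$, and then analyzes $G_i = (I-\alpha\nabla^2 f_i(w_k))v_{i,k} - \alpha\tilde e_k^i + (\text{noise from }\D_h^i)$. This keeps $\sigma_H$ out of the analysis entirely and explains why the $D_h$ condition involves $\tsigma$ instead. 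Your approach can be repaired by switching the baseline from $\tilde{\nabla}^2 f_i(w_k,\D_h^i)v_{i,k}$ to $\nabla^2 f_i(w_k)v_{i,k}$; everything downstream (the $\|v_{i,k}\|/6$ bias, the $\tsigma^2/(D_h(\delta_k^i)^2)$ variance, the absorption via $D_h \geq 36(\alpha\rho\tsigma)^2$) then goes through exactly as you describe.
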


%%%%%%%%%%%%%%%%%%%%%%
%\begin{proof}
%Check Appendix \ref{Thm_HF_MAML_proof} for the general statement of the theorem and its proof.
%\end{proof}

Comparing the results in Theorem~\ref{Thm_HF_MAML_simple} for HF-MAML with the result in Theorem~\ref{Thm_SGD_general_simple} for MAML shows that the complexity of these methods and the resulted accuracy are the same, up to a constant factor. Hence, HF-MAML recovers the complexity of MAML \textit{without} computing second-order information or performing any update that has a complexity of $\mathcal{O}(d^2)$. %As the results for HF-MAML are similar to the ones for MAML, we can show that similar results to the ones in Corollary~\ref{cor_large_batch_MAML} also hold for HF-MAML (Appendix~\ref{Thm_HF_MAML_proof}).
Also, as stated after Theorem \ref{Thm_HF_MAML_simple}, the following result holds as an immediate consequence:
%%%%%%%%%%%%%%%%%%%%%%%%%%%%%%%
%%%%%%%%%%%%%%%%%%%%%%%%%%%%%%%
%%%%       COROLLARY       %%%%%%%%%%%%%%%%    
%%%%%%%%%%%%%%%%%%%%%%%%%%%%%%%
%%%%%%%%%%%%%%%%%%%%%%%%%%%%%%%
\begin{corollary}\label{cor_large_batch_HF_MAML}
Suppose the condition in Theorem~\ref{Thm_HF_MAML_simple} are satisfied. Then, if the batch sizes $B$, $D_o$, and $D_{in}$ satisfy the following conditions, 
%%%
\begin{equation*}
B \geq  C_1 \frac{\sigma^2}{\eps^2}, \quad D_{in}, BD_o \geq C_2 \frac{\tsigma^2}{\eps^2},
\end{equation*}
for some constants $C_1$ and $C_2$, then the iterates generated by HF-MAML finds an $\eps$-FOSP, i.e., $\E[\|\nabla F(w)\|]\leq \eps$,  after $\Delta \bigO ({L}/{\eps^2})$ iterations.
\end{corollary}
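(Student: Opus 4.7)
The plan is to derive the corollary purely as a direct substitution into Theorem~\ref{Thm_HF_MAML_simple}; no new analysis is required. Since the theorem already delivers both the gradient-norm bound and the iteration complexity as explicit functions of $B$, $D_o$, $D_{in}$, all that remains is to check that the batch-size choices stated in the corollary make the right-hand sides collapse to the claimed $\eps$ and $\Delta\,\bigO(L/\eps^2)$, respectively.

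First I would handle the accuracy bound. Theorem~\ref{Thm_HF_MAML_simple} guarantees
\[
\E[\|\nabla F(w_\eps)\|] \;\le\; \bigO\!\left(\sqrt{\tfrac{\sigma^2}{B} + \tfrac{\tsigma^2}{BD_o} + \tfrac{\tsigma^2}{D_{in}}}\right) + \eps'
\]
for any target tolerance $\eps' \in (0,1)$. Setting $\eps' = \eps/2$ and substituting the corollary's hypotheses $B \ge C_1\sigma^2/\eps^2$ and $D_{in},\, B D_o \ge C_2 \tsigma^2/\eps^2$ bounds each of the three summands inside the square root by $\eps^2/C_1$ or $\eps^2/C_2$. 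For $C_1,C_2$ chosen large enough (absorbing the hidden constant in the $\bigO(\cdot)$), the square-root term is at most $\eps/2$, giving $\E[\|\nabla F(w_\eps)\|] \le \eps$ as required. Thus the output is indeed an $\eps$-FOSP.

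Next I would handle the iteration count. Theorem~\ref{Thm_HF_MAML_simple} states that $w_\eps$ is produced after at most
\[
\bigO(1)\,\Delta \min\!\left\{\tfrac{L}{\eps'^{\,2}},\; \tfrac{LB}{\sigma^2}+\tfrac{L(BD_o+D_{in})}{\tsigma^2}\right\}
\]
iterations, again with $\eps' = \eps/2$. The first argument of the minimum is immediately $\bigO(L/\eps^2)$, so the minimum is at most $\bigO(L/\eps^2)$, yielding the total iteration bound $\Delta\,\bigO(L/\eps^2)$ claimed in the corollary. (As a consistency check, the second argument evaluates to $\bigO(L/\eps^2)$ as well under the prescribed batch sizes, so the bound is tight in this regime.)

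No step is expected to be a genuine obstacle: the corollary is a bookkeeping consequence of the theorem, and the only minor care needed is the factor-of-two rescaling $\eps \mapsto \eps/2$ (together with a corresponding adjustment of $C_1, C_2$) so that the residual $\eps'$ term and the stochastic square-root term together do not exceed $\eps$. Verifying that the hypotheses of Theorem~\ref{Thm_HF_MAML_simple} (i.e.\ Assumptions~\ref{ass:boundedness}--\ref{asm_simplify}, the stepsize rule $\beta_k = \tilde\beta(w_k)/25$, the choice of $\delta_k^i$, and $D_h \ge \lceil 36(\al\rho\tsigma)^2\rceil$, $B \ge 20$) remain in force is immediate because the corollary inherits them verbatim.
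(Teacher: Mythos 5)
Your proposal is correct and matches the paper's treatment: the paper states this corollary as an immediate consequence of Theorem~\ref{Thm_HF_MAML_simple}, obtained by exactly the substitution you describe (bounding each term under the square root by $\eps^2/C_1$ or $\eps^2/C_2$, rescaling the residual tolerance, and noting the minimum in the iteration bound is at most its first argument $\bigO(L/\eps^2)$). The only bookkeeping care you flag --- the factor-of-two split between the stochastic term and the residual $\eps'$, absorbed into $C_1,C_2$ --- is precisely what the paper's implicit argument requires.
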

%\begin{remark}
%MAML and HF-MAML find an $\epsilon$-FOSP after $\bigO(1/\epsilon^{2})$ iterations with $\bigO(1/\epsilon^{4})$ samples per iteration which means $\bigO(1/\epsilon^6)$ total sample complexity. This is different from classic nonconvex results where, for instance, Natasha 2 \citep{Natasha2}, finds an $\epsilon$-FOSP using $\bigO(1/\epsilon^{3.25}) $ samples. However, note that the latter has one source of randomness, while MAML includes two sources (tasks and data corresponding to tasks). By only considering one source of randomness, total sample complexity of MAML and HF-MAML is $\bigO(1/\epsilon^ 4)$ which is close to Natasha 2 but not stronger. Indeed, this complexity can be improved using variance reduction techniques. 
%\end{remark}

%!TEX root = main.tex

\section{A Numerical Example}\label{Numerical}

%%%%%%%%%%%%%%%%%%%%%%%%%%%%%%%%%%%%%%%%%
%%%%%%%%%%%%%%%%%%%%%%%%%%%%%%%%%%%%%%%%%
%%%%%%%%%%%%%%%% Figure %%%%%%%%%%%%%%%%%
%%%%%%%%%%%%%%%%%%%%%%%%%%%%%%%%%%%%%%%%%
%%%%%%%%%%%%%%%%%%%%%%%%%%%%%%%%%%%%%%%%%
 \begin{figure}
   \centering
     \begin{subfigure}[H]{0.45\textwidth}
       \includegraphics[width=1\columnwidth]{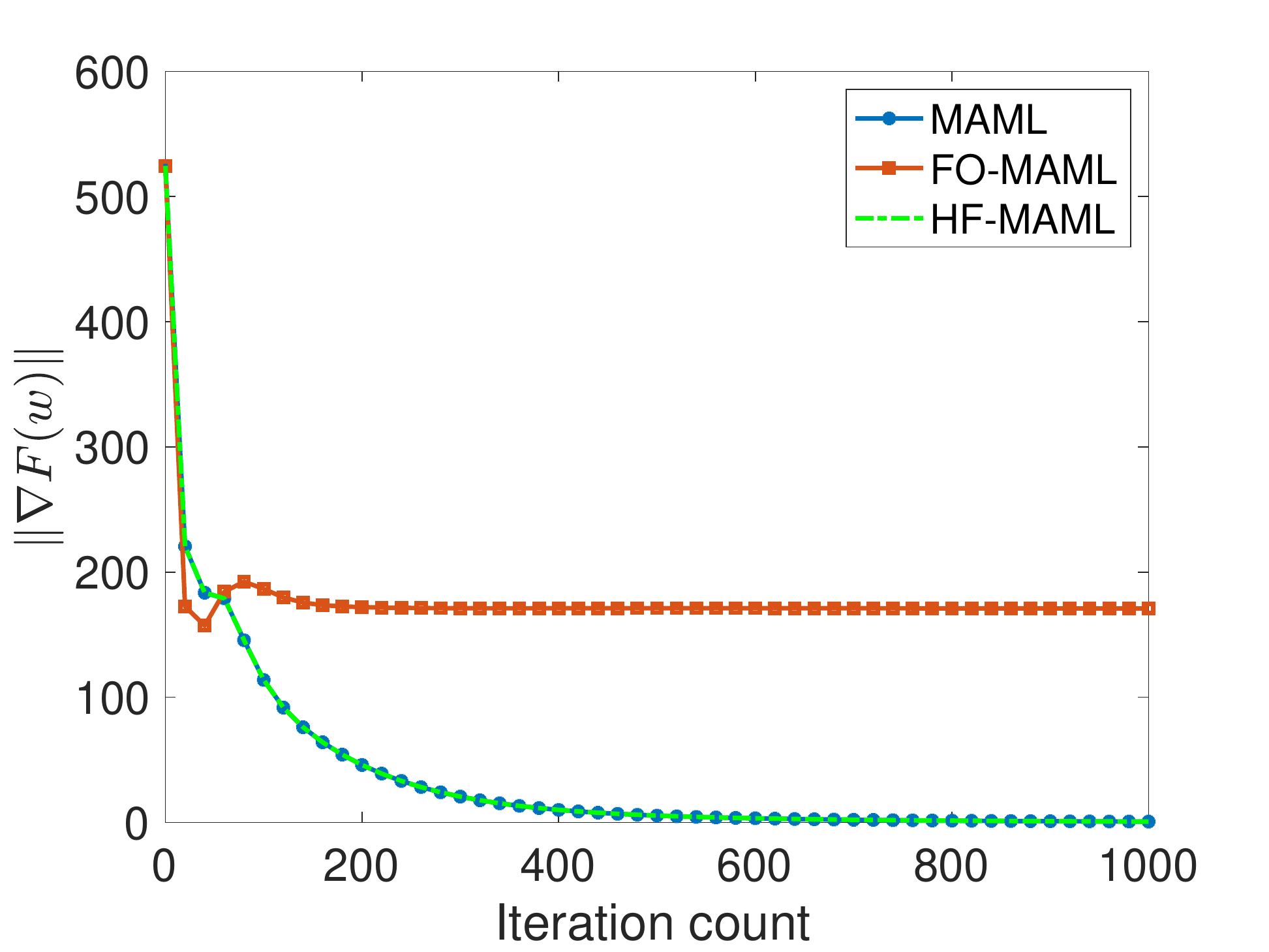}
             \subcaption{Having exact gradients \& Hessians available}
       \label{Fig_1}
       \end{subfigure}
       \qquad 
 %%%%%%%%%%%%%%%%%%%
     \begin{subfigure}[H]{0.45\textwidth}
      \includegraphics[width=1\columnwidth]{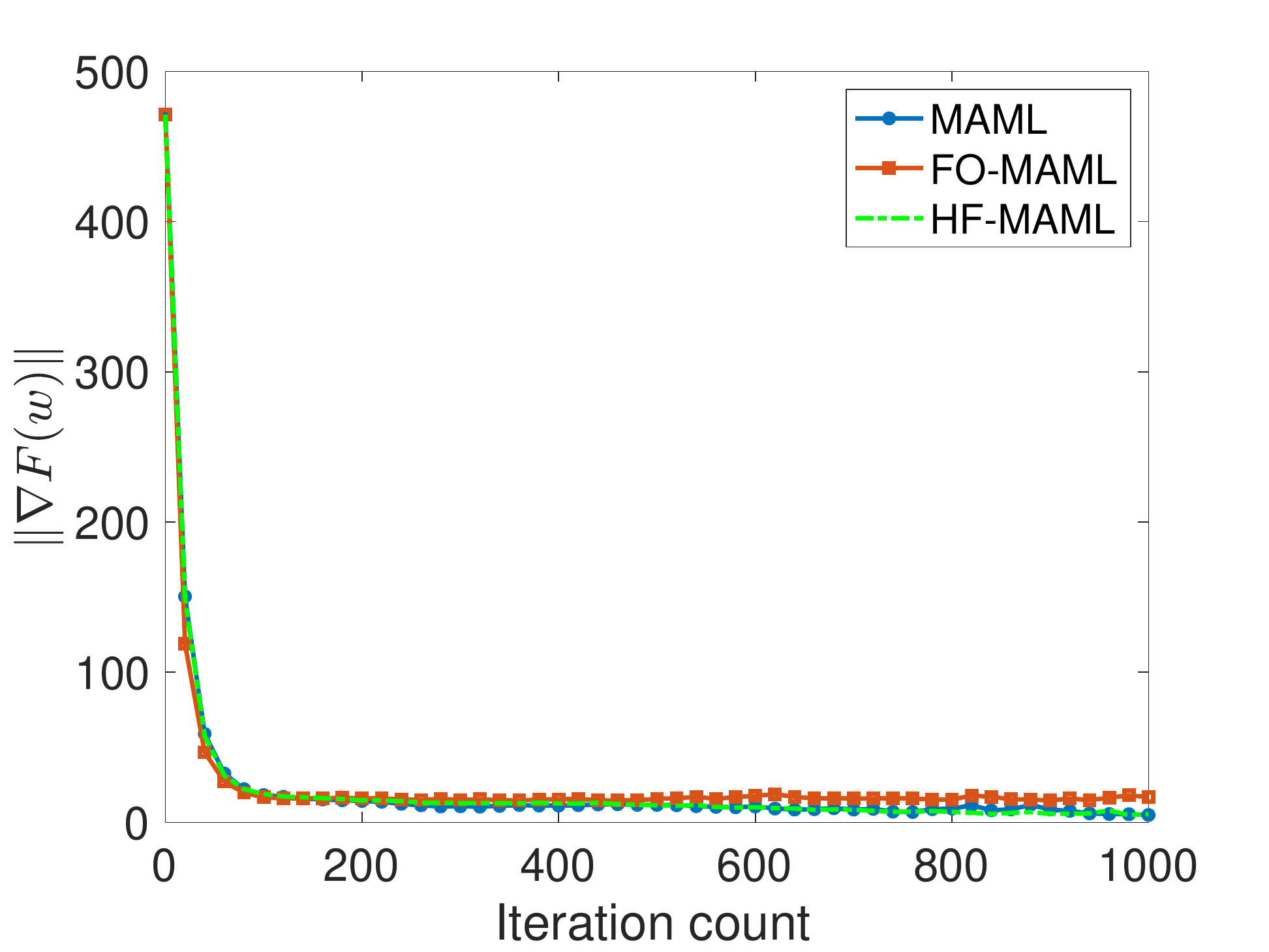}
       \subcaption{Stochastic gradients \& Hessians with similar tasks}
       \label{Fig_2}
   \end{subfigure}\\
 %%%%%%%%%%%%%%%%%%%   
     \begin{subfigure}[H]{0.45\textwidth}
       \includegraphics[width=1\columnwidth]{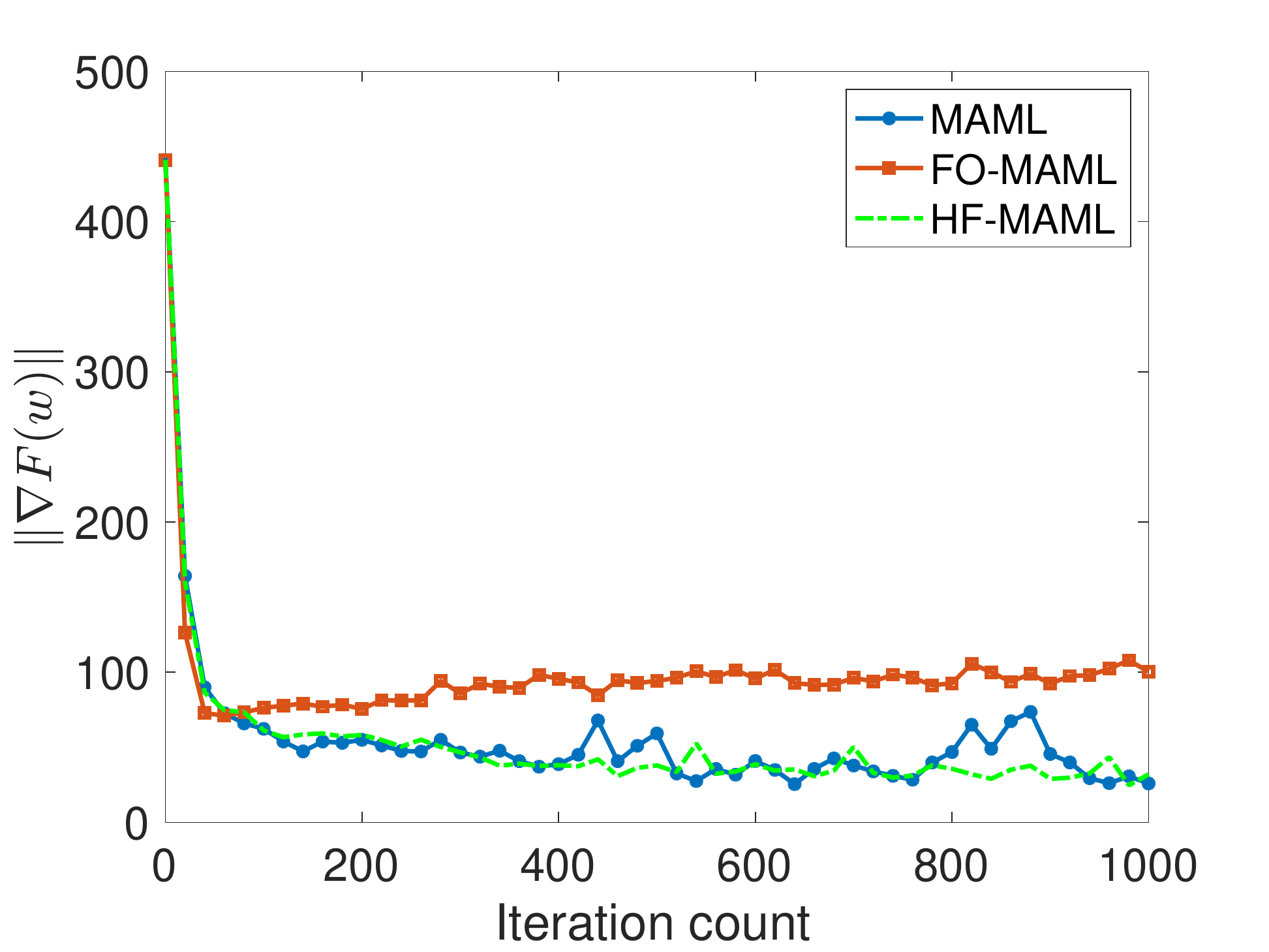}
             \subcaption{Stochastic gradients \& Hessians with less similar tasks}
       \label{Fig_3}
       \end{subfigure}
 %%%%%%%%%%%%%%%%%%%
     \caption{Comparison of MAML, FO-MAML, and HF-MAML for a 1-rank matrix factorization problem}\label{Fig_Main} 
  \end{figure}
%%%%%%%%%%%%%%%%%%%%%%%%%%%%%%%%%%%%%%%%%
%%%%%%%%%%%%%%%%%%%%%%%%%%%%%%%%%%%%%%%%%
%%%%%%%%%%%%%%%%%%%%%%%%%%%%%%%%%%%%%%%%%
%%%%%%%%%%%%%%%%%%%%%%%%%%%%%%%%%%%%%%%%%
%%%%%%%%%%%%%%%%%%%%%%%%%%%%%%%%%%%%%%%%%
We consider a 1-rank matrix factorization problem where the loss for task $i$ is 
\begin{equation*}
f_i(x) = \frac{1}{4} \| x x^\top - M_i \|_F^2	
\end{equation*}
where $\|.\|_F$ is the Frobenius norm, and, $M_i$ is a rank one matrix, generated as $M_i=g_i g_i^\top$ where $g_i$ is a random zero-mean Gaussian vector. The variance of $g_i$ controls task similarity ($\sigma^2$). To capture noisy gradients and Hessians we add random Gaussian noise with variance $\tsigma^2$ to the tasks gradients and Hessians.

Recall from our results that the best accuracy for MAML and HF-MAML is $\bigO(\tilde{\sigma}/\sqrt{D})$ while the one for FO-MAML is $\bigO(\alpha \sigma + \tilde{\sigma}/\sqrt{D})$, which has the additional term $\bigO(\alpha \sigma)$, and hence, it does not go to zero even with exact gradients and Hessians.
\begin{itemize}
\item In Figure \ref{Fig_1}, we assume gradients and Hessians are exact ($\tilde{\sigma}=0$) and focus on task variation. We assume that the batch-size for tasks $B$ equal to the number of tasks which is 20. In this case, even though the gradients are exact, FO-MAML converges to an error level with a gap compared to two others. This is consistent with our results.
\item We next consider noisy gradients and Hessians. We also set $B$=10 and number of tasks equal to 50. In Figure \ref{Fig_2}, we choose the variance of $g_i$ small to ensure the tasks are relatively similar. Here the additional persistent error of FO-MAML is negligible compared to the other term and all three methods behave similarly.
\item In Figure \ref{Fig_3}, we increase the variance of $g_i$, i.e., tasks are less similar. In this case, the $\bigO(\alpha \sigma)$ term dominates the other term in the error of FO-MAML, and it has worse performance compared to two others.
\end{itemize}

%!TEX root = main.tex

\section{Conclusion}
In this work, we studied the convergence properties of MAML, its first-order approximation (FO-MAML), and our proposed Hessian-free MAML (HF-MAML) for non-convex functions. In particular, we characterized their best achievable accuracy in terms of gradient norm when we have access to enough samples and further showed their best possible accuracy when the number of available samples is limited. Our results indicate that MAML can find an $\eps$-first-order stationary point, for any positive $\eps$ %, while it suffers from an iteration cost of $\bigO(d^2)$. 
{at the cost of using the second-order information of loss functions.}
On the other hand, we illustrated that although the iteration cost of FO-MAML is $ \bigO(d)$, it cannot reach any desired level of accuracy. That said, we next showed that  HF-MAML has the best of both worlds, i.e., it {does not require access to the second-order derivative and} has a cost of $\bigO(d)$ at each iteration, while it can find an $\eps$-first-order stationary point, for any positive $\eps$.

%!TEX root = main.tex

\section{Acknowledgment}
Research was sponsored by the United States Air Force Research Laboratory and was accomplished under Cooperative Agreement Number FA8750-19-2-1000. The views and conclusions contained in this document are those of the authors and should not be interpreted as representing the official policies, either expressed or implied, of the United States Air Force or the U.S. Government. The U.S. Government is authorized to reproduce and distribute reprints for Government purposes notwithstanding any copyright notation herein. Alireza Fallah acknowledges support from MathWorks Engineering Fellowship. The authors would like to thank Chelsea Finn and Zhanyu Wang for their comments on the first draft of this paper.

\vspace{6mm}

%%%%%%%%%%%%%%%%%%%%
%!TEX root = main.tex

\appendix

\begin{center}
\textbf{\huge{Appendix}}
\end{center}

%%%%%%%%%%%%%%%%%%%%%%%%%%%%%%%%%%%%%%%%%%%%%%%%%%%%
%%%%%%%%%%%%%%%%%%%%%%%%%%%%%%%%%%%%%%%%%%%%%%%%%%%%

\section{Intermediate results}
In this subsection, we prove some results that will be used in the rest of our proofs.

First, note that since we also assume that the functions $f_i$ are twice differentiable, the $L_i$-smoothness assumption also implies that for every $w,u \in \R^d$ we have
\begin{subequations} \label{smooth_2}
\begin{align}
&-L_i I_d \preceq \nabla^2 f_i(w) \preceq L_i I_d \quad \forall w \in \R^d,  \label{smooth_2:a}\\
%- \frac{L_i}{2} \|w-u\|^2\leq  & \ 
& f_i(w) - f_i(u) \!- \!\nabla f_i(u)^\top (w-u) \!\leq\! \frac{L_i}{2} \|w-u\|^2. \label{smooth_2:b}
\end{align}
\end{subequations}
We use these relations in the subsequent analysis. 
Next, we use Lemma~\ref{lem_smooth_F} to show the following result which is analogous to \eqref{smooth_2:b} for $F$. We skip the proof as it is very similar to the proof of Lemma 1.2.3 in \citep{nesterov_convex}.
%%
%%%%%%%%%%%%%%%%%%%%%%%%%%%%%%%%
%%%%%%%%%%%%%%%%%%%%%%%%%%%%%%%%
%%%%%        C  O  R  O  L  L  A  R  Y         %%%%%%%%%%    
%%%%%%%%%%%%%%%%%%%%%%%%%%%%%%%%
%%%%%%%%%%%%%%%%%%%%%%%%%%%%%%%%
\begin{corollary}\label{bergman_bound}
Let $\al \in [0, \frac{1}{L}]$. Then, for  $w,u \in \R^d$,
\begin{equation}\label{descent_ineq}
F(u) - F(w) - \nabla{F}(w)^\top (u-w) \leq \frac{L(w)}{2} \|u-w \|^2,
\end{equation}
where $L(w)  = 4L + 2 \rho \al \E_{i \sim p} \|\nabla f_i(w)\|$. 
\end{corollary}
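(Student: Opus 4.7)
The plan is to mimic the standard descent-lemma proof (Lemma 1.2.3 in Nesterov's convex optimization book), but carefully exploit the $\min\{L(w),L(u)\}$ structure in Lemma \ref{lem_smooth_F} so that the upper bound comes out in terms of $L(w)$ rather than an unbounded global constant. The key observation that makes this work is that even though $F$ is not globally $L(w)$-smooth (its smoothness parameter varies over space and can blow up), the local Lipschitz bound on $\nabla F$ between $w$ and any nearby point $v$ is controlled by $\min\{L(w),L(v)\} \le L(w)$, i.e., we always get to pick the base point's smoothness constant.

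First, I would write the standard fundamental-theorem-of-calculus identity along the segment joining $w$ and $u$:
\begin{equation*}
F(u) - F(w) - \nabla F(w)^\top (u-w) = \int_0^1 \bigl(\nabla F(w + t(u-w)) - \nabla F(w)\bigr)^\top (u-w)\, dt.
\end{equation*}
Next, by Cauchy–Schwarz, the integrand is bounded above by $\|\nabla F(w+t(u-w)) - \nabla F(w)\| \cdot \|u-w\|$.

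Then I would invoke Lemma \ref{lem_smooth_F} with the pair of points $w$ and $v_t := w + t(u-w)$. The lemma gives
\begin{equation*}
\|\nabla F(v_t) - \nabla F(w)\| \le \min\{L(w),L(v_t)\}\, \|v_t - w\| \le L(w)\cdot t\|u-w\|.
\end{equation*}
The crucial step here is dropping $L(v_t)$ in favor of $L(w)$ via the $\min$: this is exactly the feature of Lemma \ref{lem_smooth_F} that lets the argument go through without any global smoothness assumption on $F$. Plugging back in and evaluating $\int_0^1 t\, dt = 1/2$ yields the claimed inequality.

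I do not expect any real obstacle: once Lemma \ref{lem_smooth_F} is available, the argument is three lines. The only subtlety worth flagging is making sure to apply the lemma to the pair $(w, v_t)$ and not $(v_t, u)$ or $(w,u)$, since only the former produces the factor $L(w)$ on the right-hand side (as opposed to $L(u)$ or $\min\{L(w),L(u)\}$); the asymmetry in the statement of the corollary is precisely a reflection of this choice of base point in the integral representation.
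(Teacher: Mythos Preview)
Your proposal is correct and matches the paper's intended approach: the paper explicitly says the proof is omitted because it is ``very similar to the proof of Lemma 1.2.3 in \citep{nesterov_convex},'' which is precisely the integral-along-the-segment argument you outline. You have also correctly isolated the one point that is not entirely standard, namely using the $\min\{L(w),L(v_t)\}\le L(w)$ feature of Lemma~\ref{lem_smooth_F} so that the base point's constant appears on the right-hand side.
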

%%%%%%%%%%%%%%%%%%%%%%%%%%%
%%%%%%%%%%%%%%%%%%%%%%%%%%%
Now, we state the following theorem from \citep{wooff1985bounds}.
%%%%%%%%%%%%
%%%%%%%%%%%%
%%%%%%%%%%%%%%%%%%%%%%%%%%%%%%%%%%%%%%%%%%%%%%%%%%%%%%%%%%%%%%%%%%%%%%%%%%%%%%%%%%%%%%%%%%%%%%%%
%%%%%%%%%%%%%%%%%%%%%%%%%%%%%%%%%%%%%%%%%%%%%%%%%%%%%%%%%%%%%%%%%%%%%%%%%%%%%%%%%%%%%%%%%%%%%%%%
\begin{theorem}\label{inter_thm}
Let $X$ be random variable with left extremity zero, and let $c$ be a positive constant. Suppose that $\mu_{\scaleto{X}{4pt}} = \E[X]$ and $\sigma_{\scaleto{X}{4pt}}^2 = \text{Var }(X)$ are finite. Then, for every positive integer $k$, 
\begin{equation}	
\frac{1}{(\mu_{\scaleto{X}{4pt}} + c)^k} \leq \E\left[\frac{1}{(X+c)^k}\right] \leq \frac{\sigma_{\scaleto{X}{4pt}}^2/c^k + \mu_{\scaleto{X}{4pt}}^2 \gamma^k}{\sigma_{\scaleto{X}{4pt}}^2 + \mu_{\scaleto{X}{4pt}}^2}	
\end{equation}
where $\gamma = \mu_{\scaleto{X}{4pt}}/(\sigma_{\scaleto{X}{4pt}}^2 + \mu_{\scaleto{X}{4pt}}(\mu_{\scaleto{X}{4pt}}+c))$.
\end{theorem}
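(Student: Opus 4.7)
My plan is to establish the two bounds separately, handling the lower bound by Jensen's inequality and the upper bound by a quadratic majorant argument that corresponds to an extremal two-point distribution. The function $g(x) := 1/(x+c)^k$ is strictly convex on $[0,\infty)$ because $g''(x) = k(k+1)/(x+c)^{k+2} > 0$, and since $X \geq 0$ almost surely, Jensen's inequality gives $\E[g(X)] \geq g(\E[X]) = 1/(\mu_X + c)^k$, which is exactly the claimed lower bound.

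For the upper bound, I would construct a quadratic polynomial $P(x) = Ax^2 + Bx + C$ that pointwise dominates $g(x)$ on $[0,\infty)$ and agrees with $g$ at the two points $x = 0$ and $x = a$, where $a := (\sigma_X^2 + \mu_X^2)/\mu_X$. The coefficients $A, B, C$ are uniquely determined by the three interpolation and tangency conditions $P(0) = 1/c^k$, $P(a) = 1/(a+c)^k$, and $P'(a) = -k/(a+c)^{k+1}$. Granting the majorization $P \geq g$, taking expectations yields $\E[g(X)] \leq \E[P(X)] = A(\sigma_X^2 + \mu_X^2) + B\mu_X + C$. To identify this quantity with the theorem's right-hand side, I would introduce the two-point random variable $X^*$ placing mass $p := \sigma_X^2/(\sigma_X^2 + \mu_X^2)$ at $0$ and mass $1-p$ at $a$; one checks $\E[X^*] = \mu_X$ and $\text{Var}(X^*) = \sigma_X^2$, so $\E[P(X^*)]$ equals the same expression in the moments of $X$. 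Since $P$ and $g$ coincide on the support of $X^*$, we also have $\E[P(X^*)] = \E[g(X^*)] = p/c^k + (1-p)/(a+c)^k$, and substituting $1/(a+c) = \mu_X/(\sigma_X^2 + \mu_X(\mu_X+c)) = \gamma$ produces exactly $(\sigma_X^2/c^k + \mu_X^2 \gamma^k)/(\sigma_X^2 + \mu_X^2)$.

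The main obstacle, and the only delicate step, is verifying that the constructed quadratic actually dominates $g$ throughout $[0,\infty)$. Writing $h := P - g$, the construction gives $h(0) = 0$ and $h(a) = h'(a) = 0$, i.e.\ a simple zero at $0$ and a double zero at $a$. The second derivative $h''(x) = 2A - k(k+1)/(x+c)^{k+2}$ is strictly increasing, so $h'$ is convex and can have at most two zeros on $[0,\infty)$; combined with Rolle's theorem, this rules out any further zero of $h$ beyond $\{0,a\}$, and so $h$ has a definite sign on $(0,a)$ and on $(a,\infty)$, which must agree because $a$ is an even-multiplicity zero. The leading coefficient $A = (g(0) - g(a) + a\, g'(a))/a^2$ is strictly positive by strict convexity of $g$, so $h(x) \sim A x^2 \to +\infty$ as $x \to \infty$, forcing $h \geq 0$ on $[0,\infty)$. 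Once this sign analysis is in hand, the upper bound follows immediately by taking expectations of $P \geq g$ under the law of $X$.
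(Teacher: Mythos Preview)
Your argument is correct. The lower bound via Jensen is immediate, and your upper-bound argument---building a quadratic majorant $P$ for $g(x)=1/(x+c)^k$ with contact at $0$ and tangential contact at $a=(\sigma_X^2+\mu_X^2)/\mu_X$, then identifying $\E[P(X)]$ with the value of $\E[g(X^*)]$ for the two-point extremal distribution matching the first two moments---is the standard moment-problem approach to such inequalities, and your sign analysis of $h=P-g$ (via strict convexity of $h'$ forcing at most two zeros of $h'$, hence no extra zeros of $h$, combined with $A>0$ from strict convexity of $g$) is sound. One small point worth making explicit is that the construction requires $\mu_X>0$; when $\mu_X=0$ the hypothesis $X\ge 0$ forces $X\equiv 0$ and the inequality is trivial.

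As for comparison with the paper: the paper does not actually prove this theorem. It simply records the statement and cites \emph{Wooff (1985)}, Theorem~1, as the source. So you have supplied a self-contained proof where the paper only gives a reference; your argument is essentially the one underlying the cited result (extremal two-point distribution on $\{0,a\}$ among all laws on $[0,\infty)$ with prescribed mean and variance, realized via a dominating quadratic), so there is no methodological divergence to discuss---you have just filled in what the paper outsourced.
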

\begin{proof}
See Theorem 1 in \citep{wooff1985bounds}.	
\end{proof}
%%%%%%%%%%%%
%%%%%%%%%%%%
%%%%%%%%%%%%%%%%%%%%%%%%%%%%%%%%%%%%%%%%%%%%%%%%%%%%%%%%%%%%%%%%%%%%%%%%%%%%%%%%%%%%%%%%%%%%%%%%
%%%%%%%%%%%%%%%%%%%%%%%%%%%%%%%%%%%%%%%%%%%%%%%%%%%%%%%%%%%%%%%%%%%%%%%%%%%%%%%%%%%%%%%%%%%%%%%%
\begin{lemma}\label{lemma_bound_by_F}
Consider the definitions of $f$ in \eqref{f_def} an $F$ in \eqref{main_prob} for the case that $\al \in [0,\frac{\sqrt{2}-1}{L})$. Suppose that the conditions in Assumptions~\ref{asm_smooth}-\ref{asm_bounded_var} are satisfied. Further, recall the definitions $L:=\max L_i$ and $\rho:=\max \rho_i$. Then, for any $w \in \R^d$ we have
\begin{align}
& \| \nabla{f}(w) \| \leq C_1 \| \nabla{F}(w) \| + C_2 \sigma, \label{bound_f_by_F} \\  
&\E_{i \sim p}[\|\nabla F_i(w)\|^2] \leq 2(1+\al L)^2 C_1^2 \|\nabla F(w)\|^2 + (1+\al L)^2 (2 C_2^2+1) \sigma^2, \label{bound_F_i_by_F}     
\end{align}
where 
\begin{equation*}
C_1 = \frac{1}{1- 2 \al L - \al^2 L^2}, \quad C_2 = \frac{2 \al L + \al^2 L^2}{1-2 \al L - \al^2 L^2}.    
\end{equation*}
\end{lemma}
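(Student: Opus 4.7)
The plan is to compute $\nabla F_i(w)$ explicitly via the chain rule, bound how much it deviates from $\nabla f_i(w)$, and then transfer that bound to the averaged quantity $\nabla F(w)$ using the variance bound in Assumption~\ref{asm_bounded_var}. Since $F_i(w) = f_i(w - \alpha \nabla f_i(w))$, differentiating gives $\nabla F_i(w) = (I - \alpha \nabla^2 f_i(w))\,\nabla f_i(w - \alpha \nabla f_i(w))$. Adding and subtracting $\nabla f_i(w)$, I would write
\begin{equation*}
\nabla F_i(w) - \nabla f_i(w) = (I-\alpha\nabla^2 f_i(w))\bigl[\nabla f_i(w-\alpha\nabla f_i(w))-\nabla f_i(w)\bigr] - \alpha \nabla^2 f_i(w)\,\nabla f_i(w),
\end{equation*}
and then apply $\|I - \alpha \nabla^2 f_i(w)\| \leq 1+\alpha L$ (from \eqref{smooth_2:a}) together with the $L$-smoothness bound $\|\nabla f_i(w-\alpha\nabla f_i(w))-\nabla f_i(w)\| \leq \alpha L \|\nabla f_i(w)\|$ to conclude $\|\nabla F_i(w)-\nabla f_i(w)\| \leq (2\alpha L+\alpha^2 L^2)\|\nabla f_i(w)\|$.

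For \eqref{bound_f_by_F}, I would take expectations in $i \sim p$ and use Jensen: $\|\nabla F(w)-\nabla f(w)\| \leq \E_i[\|\nabla F_i(w)-\nabla f_i(w)\|]$. The right-hand side is bounded by $(2\alpha L+\alpha^2 L^2)\E_i[\|\nabla f_i(w)\|]$, and Cauchy--Schwarz combined with Assumption~\ref{asm_bounded_var} gives $\E_i[\|\nabla f_i(w)\|] \leq \|\nabla f(w)\| + \sigma$. Applying the reverse triangle inequality yields
\begin{equation*}
\|\nabla f(w)\| \leq \|\nabla F(w)\| + (2\alpha L + \alpha^2 L^2)\bigl(\|\nabla f(w)\|+\sigma\bigr),
\end{equation*}
which rearranges directly into \eqref{bound_f_by_F} once one observes that the hypothesis $\alpha L < \sqrt{2}-1$ is exactly what makes $1 - 2\alpha L - \alpha^2 L^2$ strictly positive, so the coefficient of $\|\nabla f(w)\|$ on the right can be moved to the left.

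For \eqref{bound_F_i_by_F}, the strategy is to first control $\|\nabla F_i(w)\|^2$ pointwise in terms of $\|\nabla f_i(w)\|^2$, then pass to expectations using the variance assumption, and finally invoke \eqref{bound_f_by_F}. Writing $\nabla F_i(w) = (I-\alpha\nabla^2 f_i(w))\,\nabla f_i(w-\alpha\nabla f_i(w))$ gives $\|\nabla F_i(w)\|^2 \leq (1+\alpha L)^2 \|\nabla f_i(w-\alpha\nabla f_i(w))\|^2$. I would then express the inner vector as $\nabla f_i(w) + [\nabla f_i(w-\alpha\nabla f_i(w))-\nabla f_i(w)]$ and apply smoothness to relate $\|\nabla f_i(w-\alpha\nabla f_i(w))\|^2$ to $\|\nabla f_i(w)\|^2$ up to multiplicative constants, and then bound $\E_i[\|\nabla f_i(w)\|^2]$ by $2\|\nabla f(w)\|^2 + 2\sigma^2$ via $\|a\|^2 \leq 2\|a-b\|^2+2\|b\|^2$ with $b = \nabla f(w)$. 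Finally, squaring \eqref{bound_f_by_F} (again using $(x+y)^2 \leq 2x^2+2y^2$) substitutes $\|\nabla f(w)\|^2$ in favor of $\|\nabla F(w)\|^2$ and $\sigma^2$, producing a bound of the claimed form.

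The main obstacle I anticipate is bookkeeping: keeping constants under control so that the end result has exactly the coefficients $2(1+\alpha L)^2 C_1^2$ and $(1+\alpha L)^2(2C_2^2+1)$ rather than looser versions. In particular, a naive bound on $\|\nabla f_i(w-\alpha\nabla f_i(w))\|$ as $(1+\alpha L)\|\nabla f_i(w)\|$ produces an extra $(1+\alpha L)^2$ factor, so tightening the argument likely requires splitting $\nabla F_i(w)$ additively (rather than multiplicatively) and carefully grouping the quadratic cross terms. The other small point that needs attention is verifying that the range $\alpha \in [0,(\sqrt{2}-1)/L)$ guarantees both that $C_1, C_2$ are finite and positive and that $\|I-\alpha\nabla^2 f_i(w)\|$ stays bounded uniformly in $i$ and $w$.
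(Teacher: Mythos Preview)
Your proposal is correct and follows essentially the same route as the paper. The only cosmetic difference is that, where you add and subtract $\nabla f_i(w)$ and invoke $L$-smoothness directly to get $\|\nabla F_i(w)-\nabla f_i(w)\|\le(2\alpha L+\alpha^2 L^2)\|\nabla f_i(w)\|$, the paper instead writes $\nabla f_i(w-\alpha\nabla f_i(w))=(I-\alpha\nabla^2 f_i(\tilde w_i))\nabla f_i(w)$ via the mean value theorem, obtaining $\nabla F_i(w)=A_i(w)A_i(\tilde w_i)\nabla f_i(w)$ and then bounding $\|I-A_i(w)A_i(\tilde w_i)\|\le 2\alpha L+\alpha^2 L^2$; the two computations are equivalent. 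Your bookkeeping worry about the extra $(1+\alpha L)^2$ factor in \eqref{bound_F_i_by_F} is well-founded: the paper's own derivation passes through $\|A_i(w)\|^2\|A_i(\tilde w_i)\|^2$, which is really $(1+\alpha L)^4$, so the stated constant $(1+\alpha L)^2$ appears to be a slip there as well rather than something achievable by a sharper decomposition.
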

\begin{proof}
The gradient of the function $F(w)$ is given by
\begin{subequations}\label{gradient_F}
\begin{align}
\nabla F(w) &= \E_{i \sim p}[ \nabla F_i(w)], \label{gradient_F:a}\\
\nabla F_i(w) &= A_i(w) \nabla f_i(w-\al \nabla f_i(w)) \label{gradient_F:b}
\end{align}
\end{subequations}
with $A_i(w) := (I-\al \nabla^2 f_i(w))$. Note that using the mean value theorem we can write the gradient $\nabla f_i(w-\al \nabla f_i(w)) $ as
\begin{align}\label{mean_value}
\nabla f_i(w-\al \nabla f_i(w))  &= \nabla f_i(w) -\al \nabla^2 f_i (\tilde{w}_i)\nabla f_i(w)\nonumber\\
&= (I -\al \nabla^2 f_i (\tilde{w}_i))\nabla f_i(w)
\end{align}
for some $\tilde{w}_i$ which can be written as a convex combination of $w$ and $w-\al \nabla f_i(w)$.
Using \eqref{gradient_F:b} and the result in \eqref{mean_value} we can write
\begin{align}\label{eq_lem2_1}
\nabla F_i(w) = A_i(w) \nabla f_i(w-\al \nabla f_i(w)) = A_i(w) A_i(\tilde{w}_i) \nabla{f_i}(w)   ,
\end{align}
where $A_i(\tilde{w}_i) := (I-\al \nabla^2 f_i(\tilde{w}_i))$. Now, we have
\begin{align}
\| \nabla{f}(w) \| = \| \E_{i \sim p} \nabla{f_i}(w) \| &= \| \E_{i \sim p} \left [ \nabla{F_i}(w) + \left(\nabla{f_i}(w) - \nabla{F_i(w)} \right) \right ] \| \nonumber \\
& \leq \| \E_{i \sim p} \nabla{F_i}(w) \| + \|\E_{i \sim p} \left [ \left (I - A_i(w) A_i(\tilde{w}_i) \right) \nabla{f_i}(w) \right ] \| \label{lem2_ineq1}\\
& \leq \|\nabla{F}(w) \| + \E_{i \sim p} \left [ \| I - A_i(w) A_i(\tilde{w}_i) \| \| \nabla{f_i}(w) \| \right ] ,\label{lem2_ineq2}
\end{align}
where \eqref{lem2_ineq1} is obtained by substituting $\nabla F_i(w)$ from \eqref{eq_lem2_1}. Next, note that
\begin{equation*}
\| I - A_i(w) A_i(\tilde{w}_i) \| = \| \al \nabla^2 f_i(w) + \al \nabla^2 f_i(\tilde{w}_i) + \al^2 \nabla^2 f_i(w) \nabla^2 f_i(\tilde{w}_i) \| \leq 2 \al L + \al^2 L^2     ,
\end{equation*}
where the last inequality can be shown by using \eqref{smooth_2:a} and triangle inequality. Using this bound in \eqref{lem2_ineq2} yields 
\begin{align}
\| \nabla{f}(w) \| & \leq \|\nabla{F}(w) \| + (2 \al L + \al^2 L^2) \E_{i \sim p} \| \nabla{f_i}(w) \| \nonumber \\
& \leq \|\nabla{F}(w) \| + (2 \al L + \al^2 L^2) \left ( \| \E_{i \sim p} \nabla{f_i}(w) \| + \E_{i \sim p} \left [ \| \nabla{f_i}(w) - \E_{i \sim p} \nabla{f_i}(w) \| \right ] \right )  \nonumber \\
& \leq \|\nabla{F}(w) \| + (2 \al L + \al^2 L^2) \left (\| \nabla{f}(w) \| + \sigma \right), \label{lem2_ineq3}
\end{align}
where \eqref{lem2_ineq3} holds since $\E_{i \sim p} \nabla{f_i}(w) = \nabla{f}(w)$, and also, by Assumption \ref{asm_bounded_var},
\begin{equation}\label{lem2_ineq4}
\E_{i \sim p} \left [ \| \nabla{f_i}(w) - \E_{i \sim p} \nabla{f_i}(w) \| \right ] \leq \sqrt{\E_{i \sim p} \left [ \| \nabla{f_i}(w) - \nabla{f}(w) \|^2 \right ]} \leq \sigma.    
\end{equation}
Finally, moving the  term  $\| \nabla{f}(w) \| $ from the right hand side of \eqref{lem2_ineq3} to the left hand side and dividing both sides by $1/(1- 2 \al L - \al^2 L^2)$ completes the proof of \eqref{bound_f_by_F}. To show \eqref{bound_F_i_by_F}, note that, using \eqref{eq_lem2_1}, and the fact that $\|A_i(w)\|\leq (1+\al L)$ and $\|A_i(\tilde{w})\|\leq (1+\al L)$ we can write
\begin{align*}
\E_{i \sim p}[\|\nabla F_i(w)\|^2] & \leq \E_{i \sim p}[\|A_i(w)\|^2 \|A_i(\tilde{w}_i)\|^2 \|\nabla f_i(w)\|^2]\\
&\leq (1+\al L)^2 \E_{i \sim p}[\|\nabla f_i(w)\|^2] \\
& \leq (1+\al L)^2 \left ( \|\nabla f(w)\|^2 + \sigma^2 \right ) \\
& \leq (1+\al L)^2 \left ( 2 C_1^2 \|\nabla F(w)\|^2 + 2C_2^2 \sigma^2 + \sigma^2 \right )
\end{align*}
where the last inequality follows from \eqref{bound_f_by_F} along with the fact that $(a+b)^2 \leq 2a^2 + 2b^2$.
\end{proof}
%%%%%%%%%%%%%%%%%%%%%%%%%%%%%%%%%%%%%%%%%%%%%%%%%%%%
%%%%%%%%%%%%%%%%%%%%%%%%%%%%%%%%%%%%%%%%%%%%%%%%%%%%
%%%%%%%%%%%%%%%%%%%%%%%%%%%%%%%%%%%%%%%%%%%%%%%%%%%%
\section{Proof of Lemma \ref{lem_smooth_F}}\label{lem_smooth_F_proof}

By considering the definition $\nabla F(w) = \E_{i \sim p}[ \nabla F_i(w)]$ where $\nabla F_i(w) = (I-\al \nabla^2 f_i({w})) \nabla f_i(w-\al \nabla f_i(w))$ we can show that 
%%%%
\begin{align}
\Vert \nabla F(w) - \nabla F(u) \Vert &\leq \sum_{i \in \I} p_i \Vert \nabla F_i(w) - \nabla F_i(u) \Vert  \nonumber \\
&\leq \sum_{i \in \I} p_i (\Vert \nabla f_i(w-\al \nabla f_i(w)) - \nabla f_i(u-\al \nabla f_i(u)) \Vert \label{term_1} \\ 
& \quad + \al \Vert \nabla^2 f_i(w) \nabla f_i(w-\al \nabla f_i(w)) - \nabla^2 f_i(u) \nabla f_i(u-\al \nabla f_i(u)) \Vert \label{term_2}
\end{align}
To show the desired result, it suffices to bound both terms in \eqref{term_1} and \eqref{term_2}. For \eqref{term_1}, we have
\begin{align}
\Vert \nabla f_i(w-\al \nabla f_i(w)) - \nabla f_i(u-\al \nabla f_i(u)) \Vert  & \leq L \| w- u + \al (\nabla f_i(w) - \nabla f_i(u)) \|  \nonumber \\
& \leq L (1+ \al L) \| w- u \|, \label{term_1_bound}
\end{align}
where we used the smoothness assumption in Assumption~\ref{asm_smooth} for both inequalities. 
To bound \eqref{term_2}, note that
\begin{align}
\Vert & \nabla^2 f_i(w) \nabla f_i(w-\al \nabla f_i(w)) - \nabla^2 f_i(u) \nabla f_i(u-\al \nabla f_i(u)) \Vert  \nonumber \\
& =  \Vert \nabla^2 f_i(w) \nabla f_i(w-\al \nabla f_i(w)) - \nabla^2 f_i(w) \nabla f_i(u-\al \nabla f_i(u))  \nonumber \\
& \quad + \nabla^2 f_i(w) \nabla f_i(u-\al \nabla f_i(u)) - \nabla^2 f_i(u) \nabla f_i(u-\al \nabla f_i(u)) \Vert  \nonumber \\
& \leq \| \nabla^2 f_i(w) \| \|  \nabla f_i(w-\al \nabla f_i(w)) -  \nabla f_i(u-\al \nabla f_i(u))\|  \nonumber \\
& \quad + \| \nabla^2 f_i(w) - \nabla^2 f_i(u) \| \| \nabla f_i(u-\al \nabla f_i(u)) \|  \nonumber \\
&  \leq \left ( L^2 (1+\al L)  + \rho \| \nabla f_i(u-\al \nabla f_i(u)) \| \right ) \| w-u \|, \label{ineq1_term_2}
\end{align}
where \eqref{ineq1_term_2} follows from \eqref{term_1_bound}, \eqref{smooth_2:a}, and Assumption \ref{asm_Hesian_Lip}. To bound the gradient term in \eqref{ineq1_term_2}, we use the mean value theorem which implies that
\begin{align*}
\nabla f_i(u-\al \nabla f_i(u)) = \left (I - \al \nabla^2 f_i(\tilde{u}_i)\right) \nabla f_i(u)   
\end{align*}
holds for some $\tilde{u}_i$ which can be written as a convex combination of $u$ and $u-\al \nabla f_i(u)$. As a result, and by using \eqref{smooth_2:a}, we obtain 
\begin{equation}\label{ineq2_term_2}
\| \nabla f_i(u-\al \nabla f_i(u)) \| \leq (1+ \al L) \| \nabla f_i(u) \|.    
\end{equation}
Next, plugging \eqref{ineq2_term_2} in \eqref{ineq1_term_2} leads to
\begin{equation}\label{term_2_bound}
\Vert \nabla^2 f_i(w) \nabla f_i(w-\al \nabla f_i(w)) - \nabla^2 f_i(u) \nabla f_i(u-\al \nabla f_i(u)) \Vert \leq \left ( L^2  + \rho \| \nabla f_i(u) \| \right ) (1+\al L) \| w-u \|.   
\end{equation}
Using bounds \eqref{term_1_bound} and \eqref{term_2_bound} in \eqref{term_1} and \eqref{term_2}, respectively, along with the fact that $\al L \leq 1$, yields
\begin{equation*}
\Vert \nabla F(w) - \nabla F(u) \Vert \leq (4L + 2 \rho \al \E_{i \sim p} \|\nabla f_i(u)\|) \Vert w - u \Vert.   
\end{equation*}
We can show a similar bound with $\nabla f_i(u)$ replaced by $\nabla f_i(w)$ in the right hand side, and these two together complete the proof.
%%%%%%%%%%%%%%%%%%%%%%%%%%%%%%%%%%%%%%%%%%%%%%%%%%%%
%%%%%%%%%%%%%%%%%%%%%%%%%%%%%%%%%%%%%%%%%%%%%%%%%%%%
\section{Proof of Lemma \ref{tilde_beta}}\label{tilde_beta_proof}
First, note that as $\tilde{\nabla} f_j(w, \D_\beta^j)=\frac{1}{D_\beta}\sum_{\theta\in \D_\beta^j}\tilde{\nabla} f_j(w, \theta)$ and each $\tilde{\nabla} f_j(w, \theta)$ is an unbiased estimator of $\nabla f_j(w)$ with a bounded variance of $\tilde{\sigma}^2$, then for each task $\T_j$ we have
%%%
\begin{align}
\E_{\D_\beta^j}[\|\tilde{\nabla} f_j(w, \D_\beta^j)- \nabla f_j(w)\|^2] \leq  \frac{\tsigma^2}{{D_\beta}},
\end{align}
 and, therefore, $\E_{\D_\beta^j}[\|\tilde{\nabla} f_j(w, \D_\beta^j)- \nabla f_j(w)\|] \leq  \frac{\tsigma}{\sqrt{D_\beta}}$ we can write
\begin{align} \label{D_j_beta_bound}
\|\nabla f_j(w)\| - \frac{\tsigma}{\sqrt{D_\beta}} \leq  \E_{\D_\beta^j} [\|\tilde{\nabla}f_j(w,\D_\beta^j)\|] \leq \|\nabla f_j(w)\| +	 \frac{\tsigma}{\sqrt{D_\beta}}. 
\end{align}
%%%%
To derive a bound on the second moment of $\tilde{\be}(w)$, we use the result of Theorem~\ref{inter_thm} for $X={ 2 \rho \al \sum_{j \in \B'} \|\tilde{\nabla} f_j(w, \D_\beta^j)\| /{B'} }$, $c=4L$, and $k=2$ we obtain that 
\begin{align}\label{moment_bound000}
\E[\tilde{\be}(w)^2] 
&= \E\left[ \left(\frac{1}{4L + 2 \rho \al \sum_{j \in \B'} \|\tilde{\nabla} f_j(w, \D_\beta^j)\| /{B'} }\right)^2\right]\nonumber\\
&\leq \frac{\sigma_b^2\frac{1}{(4L)^2} + \mu_b^2 ( \frac{\mu_b}{(\sigma_b^2 + \mu_b(\mu_b+4L))})^2}{\sigma_b^2 + \mu_b^2}
\end{align}
%%%
where $\mu_b$ and $\sigma_b^2$ are the mean and variance of random variable
$X=2 \rho \al \frac{1}{B'} \sum_{j \in \B'} \|\tilde{\nabla}f_j(w,\D_\beta^j)\|$. Now replace $\sigma_b^2 + \mu_b(\mu_b+4L)$ by its lower bound $ \mu_b(\mu_b+4L)$ and simplify the terms to obtain
%%%%
\begin{equation}\label{moment_bound1}
\E[\tilde{\be}(w)^2] \leq \frac{\sigma_b^2/(4L)^2 + \mu_b^2 / (\mu_b+4L)^2}{\sigma_b^2 + \mu_b^2}.   
\end{equation}
%%%%
Now recall the result in \eqref{D_j_beta_bound} use the fact that the batch size $ D_\beta$ is larger than
$$  D_\beta \geq \ceil[\bigg]{\left(\frac{2 \rho \al \tsigma}{L}\right)^2}$$
to write that
\begin{equation}\label{bounds_on_mu_b}
2 \rho \al \E_{i \sim p} \|\nabla f_i(w)\| - L \leq \mu_b \leq 2 \rho \al \E_{i \sim p} \|\nabla f_i(w)\|	 + L
\end{equation}
%%%
Now based on the definition $L(w)  = 4L + 2 \rho \al \E_{i \sim p} \|\nabla f_i(w)\|$ and the first inequality in \eqref{bounds_on_mu_b} we can show that
\begin{equation}
\mu_b + 5L \geq L(w).
\end{equation}
Therefore, using \eqref{moment_bound1}, we have 
\begin{align}
L(w)^2 \E[\tilde{\be}(w)^2] & \leq \frac{\sigma_b^2 (\mu_b+5L)^2/(4L)^2 + \mu_b^2 (\mu_b+5L)^2/ (\mu_b+4L)^2}{\sigma_b^2 + \mu_b^2 } \nonumber \\
& \leq \frac{\mu_b^2 ((5/4)^2+ 2\sigma_b^2/(4L)^2) + 2 (5/4)^2 \sigma_b^2}{\sigma_b^2 +  \mu_b^2} \label{moment_bound2}
\end{align}
where for the last inequality we used the fact that $(\mu_b+5L)^2 \leq 2 \mu_b^2 + 2 (5L)^2$. 
Now considering \eqref{moment_bound2}, to prove the second result in \eqref{first_second_moment} we only need to show that
\begin{equation}\label{goal_var_batch}
2\sigma_b^2/(4L)^2 \leq (5/4)^2.    
\end{equation}
Note that 
\begin{align}
\sigma_b^2 & = \frac{(2 \rho \al)^2}{B'} \text{Var}\left ( \|\tilde{\nabla}f_j(w,\D_\beta^j)\| \right ) = \frac{(2 \rho \al)^2}{B'} \left ( \E \left [ \|\tilde{\nabla}f_j(w,\D_\beta^j)\| ^2 \right] - (\E_{i \sim p} \|\tilde{\nabla}f_j(w,\D_\beta^j)\|)^2\right ) \nonumber \\
& = \frac{(2 \rho \al)^2}{B'} \left ( \text{Var} \left (\tilde{\nabla}f_j(w,\D_\beta^j) \right ) + \|\E \tilde{\nabla}f_j(w,\D_\beta^j) \|^2 - (\E_{i \sim p} \|\tilde{\nabla}f_j(w,\D_\beta^j)\|)^2 \right) \nonumber \\
& \leq \frac{(2 \rho \al)^2}{B'} \left ( \sigma^2 + \frac{\tsigma^2}{|\D_\beta|} + \|\E \tilde{\nabla}f_j(w,\D_\beta^j) \|^2 - (\E_{i \sim p} \|\tilde{\nabla}f_j(w,\D_\beta^j)\|)^2 \right) \label{moment_bound_ineq1}
\end{align}
where the last inequality follows from the law of total variance which states 
\begin{equation}
\text{Var}(Y) = \E \left [ \text{Var}(Y|X) \right] + \text{Var}\left (\E[Y|X] \right )
\end{equation}
for any two random variables $X$ and $Y$ (here $X= \nabla f_j(w)$ and $Y=\tilde{\nabla}f_j(w,\D_\beta^j)$).
Now, using the fact that $|\E[X]| \leq \E[|X|]$ for any random variable $X$, we obtain the following result from \eqref{moment_bound_ineq1}
 \begin{equation}\label{moment_bound_ineq2}	
 \sigma_b^2 \leq \frac{(2 \rho \al)^2}{B'} \left(\sigma^2 + \frac{\tsigma^2}{|\D_\beta|}\right). 
 \end{equation} 
Finally, plugging \eqref{moment_bound_ineq2} in \eqref{moment_bound2} and using the assumption \eqref{size_B'} on size of $B', D_\beta$ completes the proof and the second result in \eqref{first_second_moment} follows. 

To prove the first result in \eqref{first_second_moment} which is a bound on the first moment of $\tilde{\beta}(w)$, note that, using Jensen's inequality we know that $\E[1/X]\geq 1/\E[X]$ and hence by replacing $X$ with $\tilde{L}(w)$ which is defined in \eqref{el_tilde_def} and can be written as  $\tilde{L}(w):=1/\tilde{\be}(w)$ we can show that
\begin{equation}
\E[\tilde{\be}(w)] = \E[\frac{1}{\tilde{L}(w)}] \geq \frac{1}{\E[\tilde{L}(w)]} = \frac{1}{4L+ \mu_b},
\end{equation}
%%%
where $\mu_b$ is the mean of $2 \rho \al \frac{1}{B'} \sum_{j \in \B'} \|\tilde{\nabla}f_j(w,\D_\beta^j)\|$.
%%%
Now by using this result and the upper bound for $\mu_b$ in \eqref{bounds_on_mu_b} we obtain that
\begin{equation}
\E[\tilde{\be}(w)] \geq \frac{1}{5L + 2 \rho \al \E_{i \sim p} \|\nabla f_i(w)\|} .
\end{equation}
%%%%
As $L(w)  = 4L + 2 \rho \al \E_{i \sim p} \|\nabla f_i(w)\|$ we can show that
\begin{equation}
\E[\tilde{\be}(w)]  \geq \frac{1}{L + L(w)} \geq \frac{1}{L(w)/4 + L(w)} =\frac{4/5}{L(w)}    
\end{equation}
and the first claim in \eqref{first_second_moment} follows. 
%%%%
%%%%%%%%%%%%%%%%%%%%%%%%%%%%%%%%%%%%%%%%%%%%%%%%%%%%%
%%%%%%%%%%%%%%%%%%%%%%%%%%%%%%%%%%%%%%%%%%%%%%%%%%%%%
\section{Proof of Lemma \ref{lemma:moments}}\label{lemma:moments_proof}
Note that
\begin{align*}
& \E_{\D_{in},\D_o}[\tilde{\nabla} f_i (w_k-\al \tilde{\nabla} f_i(w_k,\D_{in}^i),\D_{o}^i) \mid \F_k] \\
& = \E_{\D_{in}} \left [ \nabla f_i\!\left(w_k-\al \tilde{\nabla} f_i(w_k,\D_{in}^i) \right) \mid \F_k \right ] \\
& = \E[ \nabla f_{i} \left  (w_k - \alpha \nabla f_{i}(w_k) \right ) \mid \F_k] + \E_{\D_{in}} \left [ \nabla f_i\!\left(w_k-\al \tilde{\nabla} f_i(w_k,\D_{in}^i) \right) - \nabla f_{i} \left  (w_k - \alpha \nabla f_{i}(w_k) \right ) \mid \F_k \right] \\
& = \E [ \nabla f_{i} \left  (w_k - \alpha \nabla f_{i}(w_k) \right ) \mid \F_k] + e_{i,k}
\end{align*}
where 
\begin{equation*}
e_{i,k} = \E_{\D_{in}} \left [ \nabla f_i\!\left(w_k-\al \tilde{\nabla} f_i(w_k,\D_{in}^i) \right) - \nabla f_{i} \left  (w_k - \alpha \nabla f_{i}(w_k) \right )  \mid \F_k \right ]
\end{equation*}
and its norm is bounded by
\begin{align}
\|e_{i,k} \| & \leq \E_{\D_{in}^i} \left [ \left \|\nabla f_i\!\left(w_k-\al \tilde{\nabla} f_i(w_k,\D_{in}^i) \right) - \nabla f_{i} \left  (w_k - \alpha \nabla f_{i}(w_k) \right ) \right \| \mid \F_k \right ] \nonumber \\
& \leq  \al L  \E_{\D_{in}^i} \left [ \left \| \tilde{\nabla} f_i(w_k,\D_{in}^i) - \nabla f_{i}(w_k) \right \| \mid \F_k \right ] \label{lemma_moment_ineq1} \\
& \leq \al L \frac{\tsigma}{\sqrt{D_{in}}}
\end{align}
where \eqref{lemma_moment_ineq1} follows from the Lipschitz property of gradient (Assumption \ref{asm_smooth} and \eqref{smooth_2:a}), and the last line is obtained using Assumption \ref{asm_bounded_var_i}.
To bound the second moment, note that
\begin{align}
& \E_{\D_{in},\D_o} \!\left [ \|\tilde{\nabla} f_i (w_k\!-\!\al \tilde{\nabla} f_i(w_k,\D_{in}^i),\D_{o}^i)\|^2  \mid \F_k \right] \nonumber 	\\
& = \E_{\D_{in}^i} \left [ \| \nabla f_{i} (w_k - \alpha \tilde{\nabla} f_i(w_k,\D_{in}^i) )\|^2 + \frac{\tsigma^2}{D_o} \mid \F_k \right ] \nonumber \\
& \leq (1+ \frac{1}{\phi}) \|\nabla f_{i} (w_k - \alpha \nabla f_{i}(w_k)\|^2 \nonumber \\
& + (1+\phi) \E_{\D_{in}^i} \left [ \| \nabla f_{i} (w_k - \alpha \tilde{\nabla} f_i(w_k,\D_{in}^i) ) - \nabla f_{i}(w_k - \alpha \nabla f_{i}(w_k)\|^2 \mid \F_k \right ] + \frac{\tsigma^2}{D_o} \label{lemma_moment_ineq2} \\
& \leq (1+ \frac{1}{\phi}) \|\nabla f_{i} (w_k - \alpha \nabla f_{i}(w_k)\|^2 + (1+\phi) \al^2 L^2 \frac{\tsigma^2}{D_{in}} + \frac{\tsigma^2}{D_o} \label{lemma_moment_ineq3}
\end{align}
where \eqref{lemma_moment_ineq2} follows from the inequality $(a+b)^2 \leq (1+1/\phi) a^2 + (1+\phi) b^2$ and  \eqref{lemma_moment_ineq3} is obtained similar to \eqref{lemma_moment_ineq1}.
%%%%%%%%%%%%%%%%%%%%%%%%%%%%%%%%%%%%%%%%%%%%%%%%%%%%
%%%%%%%%%%%%%%%%%%%%%%%%%%%%%%%%%%%%%%%%%%%%%%%%%%%%
%%%%%%%%%%%%%%%%%%%%%%%%%%%%%%%%%%%%%%%%%%%%%%%%%%%%
%%%%%%%%%%%%%%%%%%%%%%%%%%%%%%%%%%%%%%%%%%%%%%%%%%%%
\section{Proof of Theorem \ref{Thm_F_hatF}} \label{proof_Thm_F_hatF}
First, note that
\begin{align}\label{grad_hatF}
\nabla \hat{F}(w) =\E_{i \sim p}\left[\E_{\D_{test}^{i}} \left[ (I - \alpha \tilde{\nabla}^2 f_i(w,\D_{test}^i)) \nabla f_i(w - \al \tilde{\nabla} f_i(w,\D_{test}^i))\right] \right]	
\end{align}
Next, using Assumption \ref{asm_bounded_var_i}, we have
\begin{align}\label{hessian_hatF}
I - \alpha \tilde{\nabla}^2 f_i(w,\D_{test}^i)	= I - \alpha \nabla^2 f_i(w) + e_{H,i}
\end{align}
where 
\begin{equation}
\E_{\D_{test}^{i}}[e_{H,i}] = 0, \quad \E_{\D_{test}^i}[\|e_{H,i}\|^2] \leq \frac{\alpha^2 \sigma_H^2}{D_{test}}. 	
\end{equation}
In addition,
\begin{align}\label{gradient_hatF}
\nabla f_i(w - \al \tilde{\nabla} f_i(w,\D_{test}^i))	= \nabla f_i(w - \al \nabla f_i(w)) + e_{G,i}
\end{align}
where 
\begin{equation*}
e_{G,i} =  \nabla f_i(w - \al \tilde{\nabla} f_i(w,\D_{test}^i)) - \nabla f_{i} \left  (w - \alpha \nabla f_{i}(w_k) \right )
\end{equation*}
and the expectation of its norm squared is bounded by
\begin{align}
\E_{\D_{test}^{i}}[\|e_{G,i} \|^2] & \leq \E_{\D_{test}^i} \left [ \left \|\nabla f_i\!\left(w-\al \tilde{\nabla} f_i(w,\D_{test}^i) \right) - \nabla f_{i} \left  (w - \alpha \nabla f_{i}(w) \right ) \right \|^2 \right ] \nonumber \\
& \leq  \al^2 L^2  \E_{\D_{test}^i} \left [ \left \| \tilde{\nabla} f_i(w,\D_{test}^i) - \nabla f_{i}(w) \right \|^2 \right ] \label{Gradient_hatF_ineq1} \\
& \leq \al^2 L^2 \frac{\tsigma^2}{D_{test}}
\end{align}
where \eqref{Gradient_hatF_ineq1} follows from the Lipschitz property of gradient (Assumption \ref{asm_smooth} and \eqref{smooth_2:a}), and the last line is obtained using Assumption \ref{asm_bounded_var_i}. Now plugging \eqref{hessian_hatF} and \eqref{gradient_hatF} in \eqref{grad_hatF} implies
\begin{align}
\nabla \hat{F}(w) &= \E_{i \sim p}\left[\E_{\D_{test}^i} \left[  (I - \alpha \nabla^2 f_i(w) + e_{H,i}) (\nabla f_i(w - \al \nabla f_i(w)) + e_{G,i})\right] \right]	\\
&= \E_{i \sim p}\left[ (I - \alpha \nabla^2 f_i(w))\nabla f_i(w - \al \nabla f_i(w)) \right ] \nonumber \\
& + \E_{i \sim p}\left[ (I - \alpha \nabla^2 f_i(w)) \E_{\D_{test}^i}[ e_{G,i}] + \nabla f_i(w - \al \nabla f_i(w)) \E_{\D_{test}^i}[  e_{H,i}] \right] \nonumber \\
& + 	\E_{i \sim p}\left[\E_{\D_{test}^i} [e_{G,i} e_{H,i}] \right ].
\end{align}
Using $\nabla F(w) = E_{i \sim p}\left[ (I - \alpha \nabla^2 f_i(w))\nabla f_i(w - \al \nabla f_i(w)) \right ]$ along with $\E_{\D_{test}^i}[e_{H,i}] = 0$ yields
\begin{align}
\nabla \hat{F}(w) &=  \nabla F(w) + \E_{i \sim p}\left[ (I - \alpha \nabla^2 f_i(w)) \E_{\D_{test}^i}[ e_{G,i}]\right] + \E_{i \sim p}\left[\E_{\D_{test}^i} [e_{G,i} e_{H,i}] \right ].
\end{align}
As a result, using the fact that $\|I - \alpha \nabla^2 f_i(w)\| \leq 1 +\alpha L$ along with Cauchy-Schwarz inequality implies
\begin{align}
\|\nabla \hat{F}(w) - \nabla F(w)\| & \leq  (1+\alpha L) \E_{i \sim p} \left [\E_{\D_{test}^i}[ \|e_{G,i}\|] \right ] + \E_{i \sim p} \left [\sqrt{\E_{\D_{test}^i}[\|e_{H,i}\|^2] \E_{\D_{test}^i}[\|e_{G,i}\|^2]} ~ \right ] \nonumber \\
& \leq   (1+\alpha L) \alpha L \frac{\tsigma}{\sqrt{D_{test}}} + \alpha^2 L \frac{\sigma_H \tilde{\sigma}}{D_{test}} \nonumber \\
& \leq 2 \alpha L \frac{\tsigma}{\sqrt{D_{test}}} + \alpha^2 L \frac{\sigma_H \tilde{\sigma}}{D_{test}}
\end{align}
where the last inequality follows from $\alpha \leq \frac{1}{L}$.
%%%%%%%%%%%%%%%%%%%%%%%%%%%%%%%%%%%%%%%%%%%%%%%%%%%%
%%%%%%%%%%%%%%%%%%%%%%%%%%%%%%%%%%%%%%%%%%%%%%%%%%%%
%%%%%%%%%%%%%%%%%%%%%%%%%%%%%%%%%%%%%%%%%%%%%%%%%%%%
%%%%%%%%%%%%%%%%%%%%%%%%%%%%%%%%%%%%%%%%%%%%%%%%%%%%
\section{Proof of Theorem \ref{Thm_SGD_general_simple} (General Version)} \label{Thm_SGD_general_proof}
%%%%%%%%%%%%%%%%%% Theorem %%%%%%%%%%%%%%%%%
\begin{theorem}\label{Thm_SGD_general} 
Consider the objective function $F$ defined in \eqref{main_prob} for the case that $\al \in (0,\frac{1}{6L}]$. Suppose that the conditions in Assumptions~\ref{ass:boundedness}-\ref{asm_bounded_var_i} are satisfied, and recall the definitions $L:=\max L_i$ and $\rho:=\max \rho_i$. Consider running MAML with batch sizes satisfying the conditions $D_h \geq \ceil{2 \al^2 \sigma_H^2}$ and $B \geq 20$.
Let $\beta_k = \tilde{\beta}(w_k)/12$
where $\tilde{\beta}(w)$ is given in defined in \eqref{beta_update}.
Then, for any $\eps >0$, MAML finds a solution $w_\eps$ such that
%%%%
\begin{equation}\label{result_MAML_general}
 \E[ \| \nabla F(w_\eps) \|] \leq \max\left\{\sqrt{61\left(1 + \frac{\rho \alpha}{L} \sigma\right) \left( \frac{\sigma^2}{B} + \frac{\tsigma^2}{B D_o} + \frac{\tsigma^2}{D_{in}} \right)} , \frac{61\rho \alpha}{L}  \left( \frac{\sigma^2}{B} + \frac{\tsigma^2}{B D_o} + \frac{\tsigma^2}{D_{in}} \right), \eps\right\}	
\end{equation}
%%%
%where
%\begin{equation}\label{gamma}
%\gamma_1 = 61 \left( \frac{\sigma^2}{B} + \tsigma^2\left(\frac{1}{B D_o} + \frac{1}{D_{in}}\right) \right)
%\end{equation}
after at most running for 
\begin{equation}
\bigO(1) \Delta \min \left \{ \frac{L + \rho \al (\sigma + \eps) }{\eps^2},
\frac{LB}{\sigma^2} + \frac{L(B D_o+D_{in})}{\tsigma^2}
 %\frac{L}{\left( \frac{\sigma^2}{B} + \tsigma^2\left(\frac{1}{B D_o} + \frac{1}{D_{in}}\right) \right)} 
 \right \} 
\end{equation}
iterations, where $\Delta:= (F(w_0) - \min_{w \in \R^d} F(w) )$.
\end{theorem}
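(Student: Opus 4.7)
The plan is to derive a one-step descent inequality for $F$ along the MAML iterates by combining the smoothness bound of Corollary~\ref{bergman_bound}, the moment bounds on the MAML direction from Lemma~\ref{lemma:moments}, and the stochastic stepsize bounds from Lemma~\ref{tilde_beta}, and then to telescope and average. Denote the MAML descent direction by $g_k := \frac{1}{B}\sum_{i\in\B_k}(I-\alpha\tilde{\nabla}^2 f_i(w_k,\D_h^i))\tilde{\nabla}f_i(w_{k+1}^i,\D_o^i)$, and let $\F_k$ be the sigma-algebra generated by all randomness up to step $k$ (so that $w_k$ is $\F_k$-measurable). Crucially, the batches $\B'_k,\{\D_\beta^j\}$ used to form $\tilde{\beta}(w_k)$ are drawn independently of the batches defining $g_k$, so $\beta_k$ and $g_k$ are conditionally independent given $\F_k$.

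First I would apply Corollary~\ref{bergman_bound} with $u=w_{k+1}=w_k-\beta_k g_k$, take conditional expectation, and use independence of $\beta_k$ and $g_k$ to obtain
\begin{equation*}
\E[F(w_{k+1})\mid \F_k] \leq F(w_k) - \E[\beta_k\mid\F_k]\,\nabla F(w_k)^\top \E[g_k\mid\F_k] + \tfrac{L(w_k)}{2}\E[\beta_k^2\mid\F_k]\,\E[\|g_k\|^2\mid\F_k].
\end{equation*}
Next I would bound $\E[g_k\mid\F_k]$ and $\E[\|g_k\|^2\mid\F_k]$. Using $\E[\tilde{\nabla}^2 f_i(w_k,\D_h^i)]=\nabla^2 f_i(w_k)$ with variance $\sigma_H^2/D_h$, combined with Lemma~\ref{lemma:moments} applied to the inner stochastic gradient, and the fact that the Hessian batch $\D_h^i$ is independent of $\D_{in}^i,\D_o^i$, gives $\E[g_k\mid\F_k] = \nabla F(w_k) + b_k$ with a bias $\|b_k\|\lesssim \alpha L\tilde{\sigma}/\sqrt{D_{in}}$. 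For the second moment, I would expand and repeatedly apply Young's inequality $(a+b)^2 \leq (1+1/\phi)a^2+(1+\phi)b^2$ with a small $\phi$, together with $\|I-\alpha\nabla^2 f_i\|\leq 1+\alpha L$, the batch averaging factor $1/B$ for task variance, the factor $1/(BD_o)$ for the $\tilde\sigma^2/D_o$ term, and the factor $1/D_{in}$ for the inner-gradient variance; combined with Lemma~\ref{lemma_bound_by_F} this yields $\E[\|g_k\|^2\mid\F_k] \lesssim c_1\|\nabla F(w_k)\|^2 + c_2\sigma^2/B + c_3\tilde\sigma^2/(BD_o) + c_4\tilde\sigma^2/D_{in}$ with constants independent of $\eps$ (the assumption $D_h\geq 2\alpha^2\sigma_H^2$ absorbs the Hessian noise).

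Plugging these into the descent inequality, using the key bounds $\E[\beta_k\mid\F_k]\geq 0.8/(12L(w_k))$ and $\E[\beta_k^2\mid\F_k]\leq 3.125/(12L(w_k))^2$ from Lemma~\ref{tilde_beta}, and absorbing the cross term $\beta_k \nabla F(w_k)^\top b_k$ via Cauchy--Schwarz and Young's inequality, I would obtain
\begin{equation*}
\E[F(w_{k+1})\mid \F_k] \leq F(w_k) - \frac{c}{L(w_k)}\|\nabla F(w_k)\|^2 + \frac{c'}{L(w_k)}\left(\frac{\sigma^2}{B}+\frac{\tilde\sigma^2}{BD_o}+\frac{\tilde\sigma^2}{D_{in}}\right)
\end{equation*}
for appropriately small constants $c,c'>0$, provided $\alpha\leq 1/(6L)$ and $B\geq 20$ are used to keep Young's constants under control.

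Finally I would take total expectations, telescope over $k=0,\dots,T-1$, and note $L(w_k)\leq 4L + 2\rho\alpha(\|\nabla f(w_k)\|+\sigma)\leq 4L+2\rho\alpha(3\|\nabla F(w_k)\|+3\sigma)$ by the sub-Gaussian-style bound \eqref{bound_f_by_F} of Lemma~\ref{lemma_bound_by_F}. This yields a lower bound of the form $\frac{\|\nabla F(w_k)\|^2}{L+\rho\alpha\sigma+\rho\alpha\|\nabla F(w_k)\|}$ on the per-iteration decrease (minus noise), so averaging over $T$ iterations gives $\min_k \E\|\nabla F(w_k)\| \leq \max\{\cdot,\cdot,\eps\}$ exactly as in \eqref{result_MAML_general} whenever $T$ exceeds the claimed iteration bound; the $\min$ over two terms arises from splitting the denominator into its constant part (yielding the $1/\eps^2$ rate) versus the part proportional to $\|\nabla F(w_k)\|$ (yielding the $1/\eps$ rate that becomes $B/\sigma^2$ etc.\ after substituting the best accuracy). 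The main obstacle is the careful handling of the randomness: the stepsize $\beta_k$, the biased direction $g_k$, and the random smoothness constant $L(w_k)$ must all be disentangled, and the Young's-inequality constants in the second-moment expansion must be tuned so that $c>0$ survives; this is what dictates the $1/12$ factor in $\beta_k$ and the batch-size requirement $B\geq 20$.
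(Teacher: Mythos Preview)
Your proposal is correct and follows essentially the same route as the paper: descent inequality from Corollary~\ref{bergman_bound}, conditional independence of $\beta_k$ and $g_k$, bias and second-moment control via Lemma~\ref{lemma:moments} together with Lemma~\ref{lemma_bound_by_F}, stepsize moments from Lemma~\ref{tilde_beta}, and then a telescoping argument. The one step you gloss over is the passage from the per-iteration decrease $\E\bigl[\|\nabla F(w_k)\|^2/(L+\rho\alpha\sigma+\rho\alpha\|\nabla F(w_k)\|)\bigr]$ to a bound on $\E[\|\nabla F(w_k)\|]$; the paper does this by the Cauchy--Schwarz inequality $\E[X]\E[Y]\geq(\E[\sqrt{XY}])^2$ (equivalently, Jensen for the convex map $x\mapsto x^2/(a+bx)$), after which it uses a stopping-time argument rather than averaging, but either finishes the proof.
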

%%%%%%%%%%%%%%% Remark %%%%%%%%%%%%%%%%%%%%
\begin{remark}
It is worth noting that the condition $B \geq 20$ can be dropped, i.e., $B$ can be any positive integer, at the cost of 	decreasing the ratio $\beta_k / \tilde{\beta}(w_k)$.
\end{remark}
%%%%%%%%%%%%%%%%%% Proof %%%%%%%%%%%%%%%%%
\begin{proof}
To simplify the notation, we denote $L(w_k)$ {(defined in Lemma \ref{lem_smooth_F})} by $L_k$. Also, let $\mathcal{F}_k$ be the information up to iteration $k$. Note that, conditioning on $\mathcal{F}_k$, the iterate $w_k$, and hence, $F(w_k)$ and $\nabla{F}(w_k)$, are not random variables anymore {, but $\B_k$ and $\D_{in}^i$ used for computing $w_{k+1}^i$ for any $i \in \B_k$ are yet random.}
{In a nutshell, the idea behind this proof (and in fact the other results as well) is to bound the first and second moment of the gradient estimate used in update of MAML by approximating its difference from an unbiased estimator. Next, we apply the descent inequality \eqref{descent_ineq} in Corollary \ref{bergman_bound} to obtain the desired result. More formally, }let
\begin{equation*}
G_i(w_k) := \left(I-\al \tilde{\nabla}^2 f_i(w_{k},D_{h}^i)\right)\ \! \tilde{\nabla} f_i\!\left(w_k-\al \tilde{\nabla} f_i(w_k,\D_{in}^i),\D_{o}^i\right)
\end{equation*}
First, we characterize the first and second moment of $G_i(w_k)$ conditioning on $\F_k$\footnote{we suppress the conditioning on $\F_k$ to simplify the notation} . Note that, since $\D_{in}^i, \D_o^i,$ and $\D_h^i$ are drawn independently, we have
\begin{align}\label{proof_000_001}
\E[G_i(w_k)] 
& = \E_{i \sim p} \left [ \E_{\D_h^i} \left [ I-\al \tilde{\nabla}^2 f_i(w_{k},D_{h}^i) \right ] \E_{\D_o^i, \D_{in}^i} \left [ \tilde{\nabla} f_i\!\left(w_k-\al \tilde{\nabla} f_i(w_k,\D_{in}^i),\D_{o}^i\right) \right ] \right] \nonumber \\
& = \E_{i \sim p} \left [ \left ( I - \al \nabla^2 f_i(w_k) \right ) \left ( \nabla f_{i} \left  (w_k - \alpha \nabla f_{i}(w_k) \right )  + e_{i,k} \right ) \right]
\end{align}
%%%
where $e_{i,k}$ as defined in Lemma \eqref{lemma:moments} is given by 
$$e_{i,k}:= \E_{\D_{in},\D_o}[\tilde{\nabla} f_i (w_k-\al \tilde{\nabla} f_i(w_k,\D_{in}^i),\D_{o}^i)]-\nabla f_{i} \left  (w_k - \alpha \nabla f_{i}(w_k) \right ). $$
%%%
By simplifying the right hand side of \eqref{proof_000_001} we obtain that
%%%
\begin{align}\label{proof_000_002}
\E[G_i(w_k)] 
& = \E_{i \sim p} \left [\left ( I - \al \nabla^2 f_i(w_k) \right ) \nabla f_{i} \left  (w_k - \alpha \nabla f_{i}(w_k) \right )  + \left ( I - \al \nabla^2 f_i(w_k) \right ) e_{i,k} \right] \nonumber \\
& = \E_{i \sim p} \left [ \nabla F_i(w_k) + \left ( I - \al \nabla^2 f_i(w_k) \right ) e_{i,k} \right] \nonumber \\
& = \nabla F(w_k) + r_k 
\end{align}
 and $r_k$ is given by $r_k = \E_{i \sim p} \left [( I - \al \nabla^2 f_i(w_k) ) e_{i,k} \right]$. Note that the second equality in \eqref{proof_000_002} due to definition $F_i(w) := f_i(w - \al \nabla{f_i(w)})$. 
 Next, we derive an upper bound on the norm of $r_k$ as
 %%%%%%%
\begin{align}
\|r_k \| & \leq \E_{i \sim p} \left [\| I - \al \nabla^2 f_i(w_k) \| \|e_{i,k}\| \right ] \nonumber \\
& \leq (1+\al L) \al L \frac{\tsigma}{\sqrt{D_{in}}} \label{general_SGD_ineq1} \\
& \leq 0.2 \frac{\tsigma}{\sqrt{D_{in}}}, \label{general_SGD_ineq2}
\end{align}
%%%%%%%
where \eqref{general_SGD_ineq1} follows from Lemma \eqref{lemma:moments} along with the Lipschitz property of gradient (Assumption \ref{asm_smooth} and \eqref{smooth_2:a}), and the last line is obtained using the fact that $\al L \leq \frac{1}{6}$. Hence, we have 
%%%%
$$\|\E[G_i(w_k)] \|\leq  \|\nabla F(w_k)\|+0.2 \frac{\tsigma}{\sqrt{D_{in}}} $$
%%%%
Now, note that this inequality and the fact that $a\leq b + c$ yields $a^2\leq 2b^2 + 2c^2$ for any positive scalars $a,b,c$, imply that
%%%%%%%
\begin{equation} \label{general_SGD_ineq1n}
\|\E[G_i(w_k)]\|^2 \leq 2 \| \nabla F(w_k) \|^2 + 0.08 \frac{\tsigma^2}{D_{in}}. 
\end{equation}
%%%%%%
To bound the variance of $G_i(w_k)$, we  bound its second moment. A similar argument to what we did above implies
\begin{align}\label{proof_000_003}
\E[\|G_i(w_k)\|^2]
& = \E_{i \sim p} \left [ \E_{\D_h^i} \left \| I-\al \tilde{\nabla}^2 f_i(w_{k},D_{h}^i)\right \|^2 \E_{\D_o^i, D_{in}^i} \left \| \tilde{\nabla} f_i\!\left(w_k-\al \tilde{\nabla} f_i(w_k,\D_{in}^i),\D_{o}^i\right) \right \|^2 \right] 
\end{align}
To simplify the right hand side we first use the fact that 
%%%%
\begin{align}\label{proof_000_004}
\E_{\D_h^i} \left \| I-\al \tilde{\nabla}^2 f_i(w_{k},D_{h}^i)\right \|^2
&= Var\left[ I-\al \tilde{\nabla}^2 f_i(w_{k},D_{h}^i) \right] + \|I - \al \nabla^2 f_i(w_k)\|^2
\nonumber\\
&= \al^2 Var\left[ \tilde{\nabla}^2 f_i(w_{k},D_{h}^i) \right] + \|I - \al \nabla^2 f_i(w_k)\|^2
\nonumber\\
&\leq \frac{\al^2 \sigma_H^2}{D_h}+ \|I - \al \nabla^2 f_i(w_k)\|^2
\end{align}
%%%
where the last inequality follows from Assumption \ref{asm_bounded_var_i}. Substitute the upper bound in \eqref{proof_000_004} into \eqref{proof_000_003} to obtain 
%%%
\begin{align}\label{proof_000_005}
\E[\|G_i(w_k)\|^2]
 \leq \E_{i \sim p} \left [ \left ( \|I - \al \nabla^2 f_i(w_k)\|^2 + \frac{\al^2 \sigma_H^2}{D_h} \right ) \E_{\D_o^i, D_{in}^i} \left \| \tilde{\nabla} f_i\!\left(w_k-\al \tilde{\nabla} f_i(w_k,\D_{in}^i),\D_{o}^i\right) \right \|^2 \right]
\end{align}
%%%
Note that using the fact that $\|I - \al \nabla^2 f_i(w_k)\| \leq 1+ \al L$ and the assumption that  $\al L \leq \frac{1}{6}$ we can show that $\|I - \al \nabla^2 f_i(w_k)\| \leq 7/6$. Further, we know that $D_h \geq 2 \al^2 \sigma_H^2$ which implies that ${\al^2 \sigma_H^2}/{D_h} \leq 1/2$. By combining these two bounds we can show that
%%%
\begin{equation}\label{proof_000_006}
 \|I - \al \nabla^2 f_i(w_k)\|^2 + \frac{\al^2 \sigma_H^2}{D_h} \leq 2 
 \end{equation}
 As a result of \eqref{proof_000_006}, we can simplify the right hand side of \eqref{proof_000_005} to
\begin{align}\label{general_SGD_ineq3}
\E&[\|G_i(w_k)\|^2] \leq 2 \E_{i \sim p} \left [ \E_{\D_o^i, D_{in}^i} \left \| \tilde{\nabla} f_i\!\left(w_k-\al \tilde{\nabla} f_i(w_k,\D_{in}^i),\D_{o}^i\right) \right \|^2 \right].
\end{align}
Note that, using Lemma \ref{lemma:moments} with $\phi =1$, we have
\begin{align}
\E_{\D_o^i, D_{in}^i} \left \| \tilde{\nabla} f_i\!\left(w_k-\al \tilde{\nabla} f_i(w_k,\D_{in}^i),\D_{o}^i\right) \right \|^2 & \leq 2\|\nabla f_{i} (w_k - \alpha \nabla f_{i}(w_k)\|^2 + 2 \al^2 L^2 \frac{\tsigma^2}{D_{in}} + \frac{\tsigma^2}{D_o} \nonumber \\
&\leq 2 \frac{\|\nabla F_i(w_k)\|^2}{(1-\al L)^2} + 2 \al^2 L^2 \frac{\tsigma^2}{D_{in}} + \frac{\tsigma^2}{D_o} \label{general_SGD_ineq5}
\end{align}
where the last inequality follows from \eqref{gradient_F:b} and the fact that $\|I - \al \nabla^2 f_i(w)\| \geq 1- \al L$. Plugging \eqref{general_SGD_ineq5} in \eqref{general_SGD_ineq3} and using \eqref{bound_F_i_by_F} in Lemma \ref{lemma_bound_by_F} yields
\begin{align}\label{general_SGD_ineq6}
\E[\|G_i(w_k)\|^2] \leq 40 \| \nabla F(w_k)\|^2 + 14 \sigma^2  + \tsigma^2 \left(\frac{2}{D_o} + \frac{1}{6D_{in}}\right). 
\end{align}
Now that we have upper bounds on $\E[\|G_i(w_k)\|] $ and $\E[\|G_i(w_k)\|^2] $, we proceed to prove the main result. According to Corollary \ref{bergman_bound}, we have
\begin{align}
F(w_{k+1}) & \leq  F(w_k) + \nabla{F}(w_k)^\top (w_{k+1}-w_k) + \frac{L_k}{2} \| w_{k+1}-w_k \|^2  \nonumber \\
& =  F(w_k) - \beta_{k} \nabla{F}(w_k)^\top \left ( \frac{1}{B} \sum_{i \in \B_k} G_i(w_k)  \right) + \frac{L_k}{2} \beta_{k}^2 \left \| \frac{1}{B} \sum_{i \in \B_k} G_i(w_k) \right \|^2. \label{general_SGD_ineq8}
\end{align}
By computing the expectation of both sides of \eqref{general_SGD_ineq8} conditioning on $\F_k$, we obtain that
\begin{align}
\E[F(w_{k+1})|\F_k] & \leq  F(w_k) - \E[\beta_{k}|\F_k] \nabla{F}(w_k)^\top \E[G_i(w_k)|\F_k] \nonumber \\
& +\frac{L_k}{2} \E[\beta_{k}^2|\F_k] \left ( \|\E[G_i(w_k)|\F_k]\|^2  + \frac{1}{B} \E[\|G_i(w_k)\|^2| \F_k] \right) \nonumber
\end{align}
where we used the fact that batches $\B_k$ and $\B'_k$ are independently drawn. Now, using the expression $$\E[G_i(w_k)|\F_k] = \nabla F(w_k) + r_k$$ in \eqref{proof_000_002}
along with \eqref{general_SGD_ineq1n} and \eqref{general_SGD_ineq6} we can write that
\begin{align}\label{general_SGD_ineq9}
\E & [F(w_{k+1}) |\F_k] \leq  F(w_k) - \|\nabla{F}(w_k)\|^2 \left ( \E[\beta_{k}|\F_k] - \frac{L_k}{2} \E[\beta_{k}^2|\F_k] \left(2+ \frac{40}{B}\right)\right) \nonumber \\
& + \E[\beta_{k}|\F_k] \|\nabla{F}(w_k)\| \|r_k\| +  \frac{L_k}{2} \E[\beta_{k}^2|\F_k] \left ( \frac{1}{B} \left ( 14 \sigma^2  + \tsigma^2 \left(\frac{2}{D_o} + \frac{0.2}{D_{in}}\right) \right ) + 0.08 \frac{\tsigma^2}{D_{in}} \right ).
\end{align}
Note that, using \eqref{general_SGD_ineq2}, we can show that
\begin{equation*}
\|\nabla{F}(w_k)\| \|r_k\| \leq \frac{\|\nabla{F}(w_k)\|^2}{10} +  10 \|r_k\|^2 \leq \frac{\|\nabla{F}(w_k)\|^2}{10} +  0.4 \frac{\tsigma^2}{D_{in}}.   
\end{equation*}
Plugging this bound in \eqref{general_SGD_ineq9} implies
\begin{align}\label{descent_SGD_condition}
\E&[F(w_{k+1})|\F_k]  \leq  F(w_k) - \|\nabla{F}(w_k)\|^2 \left ( \frac{9}{10} \E[\beta_{k}|\F_k] - \frac{L_k}{2} \E[\beta_{k}^2|\F_k] \left(2+ \frac{40}{B}\right)\right) \nonumber \\
& +  \frac{L_k}{2} \E[\beta_{k}^2|\F_k] \left ( \frac{1}{B} \left ( 14 \sigma^2  + \tsigma^2 \left(\frac{2}{D_o} + \frac{0.2}{D_{in}}\right) \right ) + 0.08 \frac{\tsigma^2}{D_{in}} \right ) + 0.4 \E[\beta_{k}|\F_k] \frac{\tsigma^2}{D_{in}}.
\end{align}
Note that $\beta_k = \tilde{\beta}(w_k)/12$, and hence, by using Lemma \ref{tilde_beta} along with $1/{\tilde{\beta}(w_k)} ,L_k \geq 4L$, we have
\begin{equation*}
\frac{1}{48 L} \geq \E[\beta_{k}|\F_k] \geq \frac{1}{15 L_k}, \quad \frac{L_k}{2} \E[\beta_{k}^2|\F_k] \leq \frac{1}{92 L_k} \leq \frac{1}{368 L}.
\end{equation*}
Plugging these bounds in \eqref{descent_SGD_condition} and using the assumption $B \geq 20$ 
yields
\begin{align}
\E & [F(w_{k+1})|\F_k] \nonumber \\
& \leq  F(w_k) - \frac{1}{100 L_k} \|\nabla{F}(w_k)\|^2  + \frac{1}{368 L B}\left ( 14 \sigma^2  +\tsigma^2 \left(\frac{2}{D_o} + \frac{0.2}{D_{in}}\right) \right ) + \frac{({0.4}/{48} + {0.08}/{368})\tsigma^2}{L D_{in}} \nonumber \\
& \leq  F(w_k) - \frac{1}{100 L_k} \|\nabla{F}(w_k)\|^2  + \frac{14 \sigma^2  + {2 \tsigma^2}/{D_o}}{368 L B}+ \frac{\tsigma^2}{96 L D_{in}},\label{general_SGD_ineq10}
\end{align}
where the last inequality is obtained by taking the ${0.2 \tsigma^2}/{D_{in}}$ from the second term and merging it with the third term. 
Next, note that
\begin{align}\label{thm1_ineq1}
\frac{1}{L_k} \| \nabla{F}(w_k)\|^2 = \frac{\| \nabla{F}(w_k)\|^2}{4L + 2 \rho \al \E_{i \sim p} \|\nabla f_i(w_k)\|} \geq \frac{\| \nabla{F}(w_k)\|^2}{4L + 2 \rho \al \sigma + 2 \rho \al \|\nabla f(w_k)\|} 
\end{align}
where the last inequality follows from \eqref{lem2_ineq4}. Using Lemma \ref{lemma_bound_by_F} along with the fact that $\al \leq \frac{1}{6L}$, implies
\begin{equation}\label{thm1_ineq2}
\|\nabla f(w_k)\| \leq 2 \| \nabla{F}(w_k)\| + \sigma.
\end{equation}
%%%%%%%%%%%%%%%%%%%%%%%%%%%%%%%%%%%%%%%%%%%%%%%%
%%%%%%%%%%%%%%%%%%%%%%%%%%%%%%%%%%%%%%%%%%%%%%%%
Plugging \eqref{thm1_ineq2} in \eqref{thm1_ineq1} yields
\begin{align}
\frac{1}{L_k} \| \nabla{F}(w_k)\|^2 & \geq \frac{\| \nabla{F}(w_k)\|^2}{4L + 4 \rho \al \sigma + 4 \rho \al \| \nabla{F}(w_k)\|} \label{thm1_ineq3}
\end{align}
Now, plugging \eqref{thm1_ineq3} in \eqref{general_SGD_ineq10} and taking expectation from both sides with respect to $\F_k$ along with using tower rule implies
\begin{align}\label{general_SGD_ineq11_1}
\E[F(w_{k+1})] \leq  \E[F(w_k)] - \frac{1}{100} \E \left [ \frac{\| \nabla{F}(w_k)\|^2}{4L + 4 \rho \al \sigma + 4 \rho \al \| \nabla{F}(w_k)\|} \right ] + \frac{14 \sigma^2  + {2 \tsigma^2}/{D_o}}{368 L B}+ \frac{\tsigma^2}{96 L D_{in}}.   
\end{align}
Note that, by Cauchy-Schwartz inequality, we have $E[X] E[Y] \geq \E[\sqrt{XY}]^2$ for nonnegative random variables $X$ and $Y$. Choosing $X = \| \nabla{F}(w_k)\|^2/ \left (4L + 4 \rho \al \sigma + 4 \rho \al \| \nabla{F}(w_k)\| \right )$ and $Y=4L + 4 \rho \al \sigma + 4 \rho \al \| \nabla{F}(w_k)\|$, we obtain
\begin{align}\label{general_SGD_ineq11_11}
\E & \left [ \frac{\| \nabla{F}(w_k)\|^2}{4L + 4 \rho \al \sigma + 4 \rho \al \| \nabla{F}(w_k)\|} \right ] \geq \frac{\E[ \| \nabla{F}(w_k)\|]^2}{\E \left [4L + 4 \rho \al \sigma + 4 \rho \al \| \nabla{F}(w_k)\| \right ]} 
\end{align}
As a result, we have 
\begin{align}\label{general_SGD_ineq11_2}
\E & \left [ \frac{\| \nabla{F}(w_k)\|^2}{4L + 4 \rho \al \sigma + 4 \rho \al \| \nabla{F}(w_k)\|} \right ] \geq \frac{\E[ \| \nabla{F}(w_k)\|]^2}{4L + 4 \rho \al \sigma + 4 \rho \al \E[\| \nabla{F}(w_k)\| ]} \nonumber \\
& \geq \frac{\E[\| \nabla{F}(w_k)\|]^2}{2 \max \{ 4L + 4\rho \al \sigma, 4\rho \al \E[\| \nabla{F}(w_k)\|] \} }  = \min \left \{  \frac{\E[\| \nabla{F}(w_k)\|]^2}{8L + 8\rho \al \sigma }, \frac{\E[\| \nabla{F}(w_k)\|]}{8 \rho \al}   \right \}.	
\end{align}
Plugging \eqref{general_SGD_ineq11_2} in \eqref{general_SGD_ineq11_1} implies
\begin{align}\label{general_SGD_ineq11}
\E[F(w_{k+1})] \leq  \E[F(w_k)] - \frac{1}{800} \min \left \{  \frac{\E[ \| \nabla{F}(w_k)\|]^2}{L + \rho \al \sigma }, \frac{\E[\| \nabla{F}(w_k)\|]}{\rho \al}   \right \} + \frac{14 \sigma^2  + {2 \tsigma^2}/{D_o}}{368 L B}+ \frac{\tsigma^2}{96 L D_{in}}.   
\end{align}
%Plugging \eqref{thm1_ineq2} in \eqref{thm1_ineq1} yields
%\begin{align}
%\frac{1}{L_k} \| \nabla{F}(w_k)\|^2 & \geq \frac{\| \nabla{F}(w_k)\|^2}{4L + 4 \rho \al \sigma + 4 \rho \al \| \nabla{F}(w_k)\|} \nonumber \\
%& \geq \frac{\| \nabla{F}(w_k)\|^2}{2 \max \{ 4L + 4\rho \al \sigma, 4\rho \al \| \nabla{F}(w_k)\| \} }  = \min \left \{  \frac{\| \nabla{F}(w_k)\|^2}{8L + 8\rho \al \sigma }, \frac{\| \nabla{F}(w_k)\|}{8 \rho \al}   \right \}. \label{thm1_ineq3}
%\end{align}
%Now, plugging \eqref{thm1_ineq3} in \eqref{general_SGD_ineq10} and taking expectation from both sides with respect to $\F_k$ along with using tower rule implies
%\begin{align}\label{general_SGD_ineq11}
%\E[F(w_{k+1})] \leq  \E[F(w_k)] - \frac{1}{800} \min \left \{  \frac{\E[ \| \nabla{F}(w_k)\|^2]}{L + \rho \al \sigma }, \frac{\E[\| \nabla{F}(w_k)\|]}{\rho \al}   \right \} + \frac{14 \sigma^2  + {2 \tsigma^2}/{D_o}}{368 L B}+ \frac{\tsigma^2}{96 L D_{in}}.   
%\end{align}
%%%%%%%%%%%%%%%%%%%%%%%%%%%%%%%%%%%%%%%%%%%%%%%%
%%%%%%%%%%%%%%%%%%%%%%%%%%%%%%%%%%%%%%%%%%%%%%%%
Assume \eqref{result_MAML_general} does not hold at iteration $k$. Then, we have
\begin{equation*}
\E[\| \nabla F(w_k) \|] \geq \max\{\sqrt{(1 + \frac{\rho \alpha}{L} \sigma)\gamma_1} , \frac{\rho \alpha}{L} \gamma_1\}	
\end{equation*}
with $\gamma_1$ given by
\begin{equation}
\gamma_1 = 61 \left( \frac{\sigma^2}{B} + \frac{\tsigma^2}{B D_o} + \frac{\tsigma^2}{D_{in}} \right).	
\end{equation}
This implies 
\begin{equation*}
\frac{1}{1600} \min \left \{  \frac{\E[ \| \nabla{F}(w_k)\|]^2}{L + \rho \al \sigma }, \frac{\E[\| \nabla{F}(w_k)\|]}{\rho \al}   \right \} \geq  \frac {\gamma_1}{1600 L} \geq \frac{14 \sigma^2  + { 2\tsigma^2}/{D_o}}{368 L B}+ \frac{ \tsigma^2}{96 L D_{in}},	
\end{equation*}
and hence, using \eqref{general_SGD_ineq11}, we obtain
\begin{align*}
\E[F(x_{w+1})] \leq \E[F(w_k)] - \frac{1}{1600} \min \left \{  \frac{\E[ \| \nabla{F}(w_k)\|]^2}{L + \rho \al \sigma }, \frac{\E[\| \nabla{F}(w_k)\|]}{\rho \al}   \right \} \leq  \E[F(w_k)] - \frac {\gamma_1}{1600 L}.
\end{align*}
Based on the assumption that  \eqref{result_MAML_general} does not hold at iteration $k$
we also know that $\E[\| \nabla{F}(w_k)\|] \geq \eps$ which implies that
\begin{align}\label{telescope}
\E[F(w_{k+1})] \leq  \E[F(w_k)] - \frac{1}{1600} \min \left \{ \frac{\eps ^2}{L+ \rho \al \sigma }, \frac{\eps}{\rho \al} \right \} \leq \E[F(w_k)] - \frac{1}{1600} \frac{\eps ^2}{L+ \rho \al (\sigma + \eps)}.
\end{align}
This result shows that if the condition in \eqref{result_MAML_general} is not satisfied the objective function value decreases by a constant value in expectation. If we assume that for all iterations $0,\dots,T-1$ this condition does not hold then by summing both sides of \eqref{telescope} from $0$ to $T-1$ we obtain that
\begin{align}\label{telescope2}
\sum_{k=0}^{T-1}\E[F(w_{k+1})]  \leq \sum_{k=0}^{T-1}\E[F(w_k)] - \sum_{k=0}^{T-1}\frac{1}{1600} \frac{\eps ^2}{L+ \rho \al (\sigma + \eps)}.
\end{align}
which implies that 
\begin{align}\label{telescope3}
\E[F(w_{T})]  \leq \E[F(w_0)] -\frac{T}{1600} \frac{\eps ^2}{L+ \rho \al (\sigma + \eps)}.
\end{align}
and hence
\begin{align}\label{telescope4}
 T & \leq (\E[F(w_0)] -\E[F(w_{T})]) {1600} \frac{L+ \rho \al (\sigma + \eps)}{\eps ^2}\nonumber\\
 &\leq (F(w_0) -F(w^*)) {1600} \frac{L+ \rho \al (\sigma + \eps)}{\eps ^2}
\end{align}
This argument shows that if the condition in \eqref{result_MAML_general} is not satisfied for all $k$ form $0$ to $T-1$, then the time $T$ can not be larger than $(F(w_0) -F(w^*)) {1600} \frac{L+ \rho \al (\sigma + \eps)}{\eps ^2}$. Hence, after $(F(w_0) -F(w^*)) {1600} \frac{L+ \rho \al (\sigma + \eps)}{\eps ^2}$ iterations at least one of the iterates generated by MAML satisfies the condition in \eqref{result_MAML_general}, and the proof is complete. 
\end{proof}
%%%%%%%%%%%%%%%%%%%%%%%%%%%%%%%%%%%%%%%%%%%%%%%%%%%%%%%%%%%%%%%%%%%%%%%%%%%%%%%%%%%%%%%%%%%%%%%%
%%%%%%%%%%%%%%%%%%%%%%%%%%%%%%%%%%%%%%%%%%%%%%%%%%%%%%%%%%%%%%%%%%%%%%%%%%%%%%%%%%%%%%%%%%%%%%%%
\section{Proof of Theorem \ref{First_order_MAML_det_simple} (General Version)}\label{First_order_MAML_det_proof}
%%%%%%%%%%%%%%%%%%%%%%%%%%%%%% Theorem %%%%%%%%%%%%%%%%%%%%%%%%%%%%%%
\begin{theorem}\label{First_order_MAML_det}
Consider the objective function $F$ defined in \eqref{main_prob} for the case that $\al \in (0,\frac{1}{10L}]$. Suppose that the conditions in Assumptions~\ref{ass:boundedness}-\ref{asm_bounded_var_i} are satisfied, and recall the definitions $L:=\max L_i$ and $\rho:=\max \rho_i$. Consider running FO-MAML with batch sizes satisfying the conditions $D_h \geq \ceil{2 \al^2 \sigma_H^2}$ and $B \geq 20$.
Let $\beta_k = \tilde{\beta}(w_k)/18$
where $\tilde{\beta}(w)$ is given in defined in \eqref{beta_update}.
Then, for any $\eps >0$, first order MAML finds a solution $w_\eps$ such that
%%%%
{\small \begin{align}\label{result_FOMAML_general}
& \E [ \| \nabla F(w_\eps) \|] \leq \nonumber \\
 &\max\left\{\sqrt{14\left(1 + \frac{\rho \alpha}{L} \sigma\right) \left ( \sigma^2 (1/B+20 \alpha^2 L^2) + \frac{\tsigma^2}{B D_o} + \frac{\tsigma^2}{D_{in}}\right )	} , \frac{14\rho \alpha}{L}  \left ( \sigma^2 (\frac{1}{B}+20 \alpha^2 L^2) + \frac{\tsigma^2}{B D_o} + \frac{\tsigma^2}{D_{in}}\right )	, \eps \right\}	
 \end{align}}
%%%
%where
%\begin{equation}\label{gamma2}
%\gamma_2 = 14 \left ( \sigma^2 + \frac{\tsigma^2}{B D_o} + \frac{\tsigma^2}{D_{in}}\right )	
%\end{equation}
after at most running for 
\begin{equation}
\bigO(1) \Delta \min \left \{ \frac{L + \rho \al (\sigma + \eps) }{\eps^2}, \frac{L}{\sigma^2 (1/B+20 \alpha^2 L^2) }+\frac{L(BD_o+D_{in})}{\tsigma^2} \right \}
\end{equation}
iterations, where $\Delta:= (F(w_0) - \min_{w \in \R^d} F(w) )$.
\end{theorem}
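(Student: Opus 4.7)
The plan is to mimic the structure of the MAML analysis in Appendix~\ref{Thm_SGD_general_proof}, but with the FO-MAML descent direction $G_i^{FO}(w_k) := \tilde{\nabla} f_i(w_k - \alpha \tilde{\nabla} f_i(w_k,\D_{in}^i), \D_o^i)$ in place of $G_i(w_k)$. The first step is to compute $\E[G_i^{FO}(w_k)\mid \F_k]$: by Lemma~\ref{lemma:moments} this equals $\E_{i\sim p}[\nabla f_i(w_k - \alpha \nabla f_i(w_k))] + \tilde{r}_k$ with $\|\tilde r_k\| \le \alpha L \tilde\sigma/\sqrt{D_{in}}$. Comparing against $\nabla F(w_k) = \E_{i \sim p}[(I-\alpha \nabla^2 f_i(w_k))\nabla f_i(w_k - \alpha \nabla f_i(w_k))]$, the key new bias, beyond what MAML already has, is the missing Hessian correction $b_k := \alpha \E_{i \sim p}[\nabla^2 f_i(w_k)\nabla f_i(w_k - \alpha \nabla f_i(w_k))]$. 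Using $\|\nabla^2 f_i\| \le L$ and Lemma~\ref{lemma_bound_by_F} (which, via \eqref{gradient_F:b} and $\|I-\alpha\nabla^2 f_i\|\ge 1-\alpha L$, also bounds $\E_{i \sim p}\|\nabla f_i(w_k - \alpha \nabla f_i(w_k))\|^2$ by a multiple of $\|\nabla F(w_k)\|^2 + \sigma^2$), one gets $\|b_k\|^2 \lesssim \alpha^2 L^2(\|\nabla F(w_k)\|^2 + \sigma^2)$. This bias is precisely what produces the persistent $\alpha^2 L^2 \sigma^2$ term in \eqref{result_FOMAML_general} and cannot be driven to zero by increasing $B$, $D_o$, or $D_{in}$.

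Next I would bound the conditional second moment $\E[\|G_i^{FO}(w_k)\|^2 \mid \F_k]$ directly from Lemma~\ref{lemma:moments} (with $\phi = 1$), giving a bound of the same flavor as \eqref{general_SGD_ineq5}, namely a multiple of $\|\nabla F_i(w_k)\|^2/(1-\alpha L)^2 + \alpha^2 L^2 \tilde\sigma^2/D_{in} + \tilde\sigma^2/D_o$, which by Lemma~\ref{lemma_bound_by_F} is dominated by a multiple of $\|\nabla F(w_k)\|^2 + \sigma^2 + \tilde\sigma^2(1/D_o + \alpha^2 L^2/D_{in})$. Note that FO-MAML drops the $I - \alpha \tilde{\nabla}^2 f_i$ factor, so the analogue of \eqref{proof_000_006} is immediate and the condition $D_h \ge \lceil 2\alpha^2 \sigma_H^2\rceil$ is preserved only for cosmetic consistency; the Hessian batch plays no role in FO-MAML's variance.

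Now I would invoke the descent inequality in Corollary~\ref{bergman_bound} along the FO-MAML iterates, take conditional expectation, and expand the cross term $\nabla F(w_k)^\top \E[G_i^{FO}(w_k)\mid\F_k]$. The inner product splits as $\|\nabla F(w_k)\|^2 + \nabla F(w_k)^\top b_k + \nabla F(w_k)^\top \tilde r_k$; applying Young's inequality $|\nabla F(w_k)^\top b_k| \le \eta \|\nabla F(w_k)\|^2 + (1/4\eta)\|b_k\|^2$ with a small constant $\eta$, and similarly for $\tilde r_k$, absorbs a small multiple of $\|\nabla F(w_k)\|^2$ into the negative term while contributing an additive $\alpha^2 L^2 \sigma^2$ from $\|b_k\|^2$. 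Then, exactly as in \eqref{general_SGD_ineq9}--\eqref{general_SGD_ineq10}, I use Lemma~\ref{tilde_beta} with $\beta_k = \tilde\beta(w_k)/18$ to obtain lower/upper bounds on $\E[\beta_k\mid \F_k]$ and $\E[\beta_k^2 \mid \F_k]$ of order $1/L_k$ and $1/L_k^2$ respectively, where the slightly smaller factor $1/18$ (vs $1/12$ for MAML) and the stronger restriction $\alpha \le 1/(10L)$ are dictated by the extra $\|b_k\|^2/\eta$ term that we must dominate. This yields a one-step descent of the form $\E[F(w_{k+1})] \le \E[F(w_k)] - c_1\|\nabla F(w_k)\|^2/L_k + c_2[\sigma^2(1/B + 20\alpha^2 L^2) + \tilde\sigma^2/(BD_o) + \tilde\sigma^2/D_{in}]/L$.

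Finally, I would replicate verbatim the Cauchy--Schwarz and min-of-two-rates manipulation in \eqref{thm1_ineq1}--\eqref{general_SGD_ineq11} (using $\|\nabla f(w_k)\| \le 2\|\nabla F(w_k)\| + \sigma$ from Lemma~\ref{lemma_bound_by_F}, which requires only $\alpha \le 1/(10L)$), and then apply the same telescoping / stopping-time argument as in \eqref{telescope}--\eqref{telescope4}: as long as the target bound \eqref{result_FOMAML_general} fails, the per-step decrease dominates the additive noise floor, giving the stated iteration complexity. The main obstacle I expect is bookkeeping the constants so that the persistent error in \eqref{result_FOMAML_general} comes out cleanly as $\sigma^2(1/B + 20\alpha^2 L^2) + \tilde\sigma^2/(BD_o) + \tilde\sigma^2/D_{in}$; in particular, the coefficient $20$ on $\alpha^2 L^2 \sigma^2$ arises from combining the Young parameter $\eta$ in the $b_k$ splitting with the $1/18$ stepsize factor, and choosing $\alpha \le 1/(10L)$ instead of $1/(6L)$ is exactly the slack needed to absorb $\|b_k\|^2$ without spoiling the negative drift term.
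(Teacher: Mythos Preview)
Your proposal is correct and follows essentially the same approach as the paper: bound the bias and second moment of the FO-MAML direction via Lemma~\ref{lemma:moments} and Lemma~\ref{lemma_bound_by_F}, plug into the descent inequality of Corollary~\ref{bergman_bound}, control the stepsize moments via Lemma~\ref{tilde_beta}, and finish with the same Cauchy--Schwarz/telescoping argument as in \eqref{thm1_ineq1}--\eqref{telescope4}. The only cosmetic difference is that the paper expresses the bias as $r_k=\E_{i\sim p}\big[\big((I-\alpha\nabla^2 f_i(w_k))^{-1}-I\big)\nabla F_i(w_k)+e_{i,k}\big]$ and bounds it via the Neumann series $(I-\alpha\nabla^2 f_i)^{-1}-I=\sum_{j\ge1}\alpha^j(\nabla^2 f_i)^j$, whereas you write the same quantity directly as $b_k=\alpha\,\E_{i\sim p}[\nabla^2 f_i(w_k)\nabla f_i(w_k-\alpha\nabla f_i(w_k))]$ and bound it using $\|\nabla^2 f_i\|\le L$; both routes yield the identical intermediate bound $\frac{\alpha L}{1-\alpha L}\,\E_{i\sim p}\|\nabla F_i(w_k)\|$, so your path is a slight simplification rather than a different argument.
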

%%%%%%%%%%%%%%%%%%%%%%%%%%%%%% Proof %%%%%%%%%%%%%%%%%%%%%%%%%%%%%%
\begin{proof}
First, note that the update of the first-order approximation of MAML can be written as $w_{k+1}=w_k- \frac{\beta_{k}}{B} \sum_{i \in \B_k} G_i(w_k)$, where
%%%
\begin{equation*}
G_i(w_k) := \tilde{\nabla} f_i\!\left(w_k-\al \tilde{\nabla} f_i(w_k,\D_{in}^i),\D_{o}^i\right).
\end{equation*}
%%%%
To analyze this update,  similar to the proof of Theorem \ref{Thm_SGD_general}, we first characterize the first and second moment of the descent direction $G_i(w)$ conditioning on $\F_k$. Using the definition
$$
e_{i,k} = \nabla f_{i} \left  (w_k - \alpha \nabla f_{i}(w_k) \right ) -\E_{\D_{in},\D_o}[\tilde{\nabla} f_i (w_k-\al \tilde{\nabla} f_i(w_k,\D_{in}^i),\D_{o}^i)] 
$$
we can write that 
\begin{align}\label{proof_001_001}
\E[G_i(w_k)] = \E_{i \sim p} \left [ \nabla f_{i} \left  (w_k - \alpha \nabla f_{i}(w_k) \right ) +e_{i,k} \right]. 
\end{align}
Further, based on the definition of $F_i$ and the fact that its gradient is given by $\nabla F_i (w)= ( I - \al \nabla^2 f_i(w))\nabla f_{i} \left  (w - \alpha \nabla f_{i}(w) \right ) $, we can rewrite the right hand side of \eqref{proof_001_001} as 
%%%
\begin{align}\label{proof_001_002}
\E[G_i(w_k)] 
= \E_{i \sim p} \left [ \left ( I - \al \nabla^2 f_i(w_k) \right )^{-1} \nabla F_i(w_k) + e_{i,k} \right] 
\end{align}
%%%
Now add and subtract $\nabla F_i(w_k)$ to the right hand side of \eqref{proof_001_002} and use the fact that $\E_{i \sim p}[ \nabla F_i(w_k)] = \nabla F(w_k)$ to obtain 
%%%
\begin{align}\label{proof_001_003}
\E[G_i(w_k)] 
&= \E_{i \sim p} \left [ \left ( I - \al \nabla^2 f_i(w_k) \right )^{-1} \nabla F_i(w_k) - \nabla F_i(w_k) +\nabla F_i(w_k)+ e_{i,k} \right] \nonumber\\
&=  \nabla F(w_k) + \E_{i \sim p} \left [ \left ( I - \al \nabla^2 f_i(w_k) \right )^{-1} \nabla F_i(w_k) - \nabla F_i(w_k)+ e_{i,k} \right] \nonumber\\
&=  \nabla F(w_k) + \E_{i \sim p} \left [ \left(\left ( I - \al \nabla^2 f_i(w_k) \right )^{-1} -I\right) \nabla F_i(w_k) + e_{i,k} \right]
\end{align}
%%%
To simplify the expressions let us define $ r_k $ as 
\begin{align}\label{proof_001_004}
 r_k = \E_{i \sim p} \left [ \left ( (I - \al \nabla^2 f_i(w_k))^{-1} - I \right ) \nabla F_i(w_k) +e_{i,k} \right ] 
\end{align}
%%%
Using the definition of $r_k$ in \eqref{proof_001_004} we can rewrite \eqref{proof_001_003} as 
%%%%
\begin{align}\label{proof_001_005}
\E[G_i(w_k)]  = \nabla F(w_k) + r_k 
\end{align}
%%%
Now we proceed to simplify the expression for $r_k$. Note that using the expansion 
%%%%
\begin{equation}\label{inverse_expansion}
(I - \al \nabla^2 f_i(w_k))^{-1} = I + \sum_{j=1}^\infty \al^j (\nabla^2 f_i(w_k))^j .    
\end{equation}
%%%%
we can rewrite $r_k$ defined in \eqref{proof_001_004} as
%%%%
\begin{align}\label{proof_001_006}
 r_k = \sum_{j=1}^\infty \al^j  \E_{i \sim p} \left [(\nabla^2 f_i(w_k))^j \nabla F_i(w_k) \right ] + \E_{i \sim p}[e_{i,k}]
\end{align}
%%%%
Next we derive an upper bound on the norm of $r_k$. The $l_2$ norm of the first term in \eqref{proof_001_006} can be upper bounded by 
%%%%
\begin{align}\label{proof_001_007}
\left\|\sum_{j=1}^\infty \al^j  \E_{i \sim p} \left [(\nabla^2 f_i(w_k))^j \nabla F_i(w_k) \right ]\right\| 
&\leq \sum_{j=1}^\infty \al^j L^j \E_{i \sim p} \| \nabla F_i(w_k)\| \nonumber\\
&\leq \frac{\al L}{1- \al L} \E_{i \sim p} \| \nabla F_i(w_k)\| \nonumber \\
& \leq 0.22 \|\nabla F(w_k)\| + 2 \alpha L \sigma
\end{align}
where the last inequality follows from Lemma \ref{lemma_bound_by_F} and the fact that $\alpha L \leq 1/10$.  Further, based on the result in Lemma \ref{lemma:moments}  we know that $\|e_{i,k}\|$ for any $i$ is bounded above by $ \frac{\al L\tsigma}{\sqrt{D_{in}}}$. Indeed, when norm of a random variable is bounded above by a constant, norm of its expectation is also upper bounded by that constant. Hence, we can write
\begin{align}\label{proof_001_008}
\|\E_{i \sim p}[e_{i,k}]\|\leq \frac{\al L\tsigma}{\sqrt{D_{in}}}
\end{align}
%%%
Using the inequalities in \eqref{proof_001_007} and \eqref{proof_001_008} and the definition of $r_k$ in \eqref{proof_001_006} we can show that $\|r_k\|$ is upper bounded by
%%%
\begin{align}\label{thm_FMAML_ineq1}
\|r_k \| \leq  0.22 \|\nabla F(w_k)\| + 2 \alpha L \sigma + 0.1 \frac{\tsigma}{\sqrt{D_{in}}}. 
\end{align}
%%%
Hence, by using the inequality $(a+b+c)^2\leq 3a^2+3b^2+3c^2$ we can show that
%%%
\begin{equation}\label{thm_FMAML_ineq1.5}
\|r_k\|^2 \leq 	0.15 \| \nabla F(w_k) \|^2 + 12 \alpha^2 L^2 \sigma^2 + 0.03 \frac{\tsigma^2}{D_{in}}
\end{equation}
%%%
Considering this result and the expression in \eqref{proof_001_005} we can write
%%%
\begin{align}
\|\E[G_i(w_k)]\|^2 \leq 2 \| \nabla F(w_k) \|^2 + 2 \|r_k\|^2 \leq 2.3 \| \nabla F(w_k) \|^2 +  24 \alpha^2 L^2 \sigma^2  +  0.06 \frac{\tsigma^2}{D_{in}}. \label{FMAML_ineq1n} 
\end{align}
%%%
Next, we can derive an upper bound on the second moment of $\|G_i(w_k)\|^2$ similar to the way that we derived \eqref{general_SGD_ineq6} in the proof of Theorem \ref{Thm_SGD_general}. More precisely, note that 
\begin{align}\label{general_FMAML_ineq3}
\E&[\|G_i(w_k)\|^2] = \E_{i \sim p} \left [ \E_{\D_o^i, D_{in}^i} \left \| \tilde{\nabla} f_i\!\left(w_k-\al \tilde{\nabla} f_i(w_k,\D_{in}^i),\D_{o}^i\right) \right \|^2 \right].
\end{align}
Using Lemma \ref{lemma:moments} with $\phi =1$, we have
\begin{align}
\E_{\D_o^i, D_{in}^i} \left \| \tilde{\nabla} f_i\!\left(w_k-\al \tilde{\nabla} f_i(w_k,\D_{in}^i),\D_{o}^i\right) \right \|^2 & \leq 2\|\nabla f_{i} (w_k - \alpha \nabla f_{i}(w_k)\|^2 + 2 \al^2 L^2 \frac{\tsigma^2}{D_{in}} + \frac{\tsigma^2}{D_o} \nonumber \\
&\leq 2 \frac{\|\nabla F_i(w_k)\|^2}{(1-\al L)^2} + 2 \al^2 L^2 \frac{\tsigma^2}{D_{in}} + \frac{\tsigma^2}{D_o} \label{general_FMAML_ineq5}
\end{align}
where the last inequality follows from \eqref{gradient_F:b} and the fact that $\|I - \al \nabla^2 f_i(w)\| \geq 1- \al L$. Plugging \eqref{general_FMAML_ineq5} in \eqref{general_FMAML_ineq3} and using \eqref{bound_F_i_by_F} in Lemma \ref{lemma_bound_by_F} yields
\begin{align}
\E[\|G_i(w_k)\|^2] & \leq 20 \| \nabla F(w_k)\|^2 + 7 \sigma^2  + \tsigma^2 (\frac{1}{D_o} + \frac{0.02}{D_{in}}).
\end{align}
Also, using the same argument in deriving \eqref{general_SGD_ineq8}, \eqref{general_SGD_ineq9}, and \eqref{descent_SGD_condition} in the proof of Theorem \ref{Thm_SGD_general}, we obtain
\begin{align}\label{thm_FMAML_ineq2}
& \E[F(w_{k+1})|\F_k] \nonumber \\
& \leq  F(w_k) - \|\nabla{F}(w_k)\|^2 \left ( \E[\beta_{k}|\F_k] - \frac{L_k}{2} \E[\beta_{k}^2|\F_k] (2.3+ \frac{20}{B})\right) \nonumber \\
& + \E[\beta_{k}|\F_k] \|\nabla{F}(w_k)\| \|r_k\| +  \frac{L_k}{2} \E[\beta_{k}^2|\F_k] \left ( \frac{1}{B} \left ( 7 \sigma^2  + \tsigma^2 (\frac{1}{D_o} + \frac{0.02}{D_{in}}) \right ) + 24 \alpha^2 L^2 \sigma^2  +  0.06 \frac{\tsigma^2}{D_{in}} \right ).
\end{align}
Note that, using \eqref{thm_FMAML_ineq1.5}, we have 
\begin{equation*}
\|\nabla{F}(w_k)\| \|r_k\| \leq \frac{1}{2} \left (\frac{\|\nabla{F}(w_k)\|^2}{2} +  2 \|r_k\|^2 \right ) \leq 0.4{\|\nabla{F}(w_k)\|^2} +  0.03 \frac{\tsigma^2}{D_{in}} + 12 \alpha^2 L^2 \sigma^2.   
\end{equation*}
Plugging this bound in \eqref{thm_FMAML_ineq2} implies
\begin{align*}
\E & [F(w_{k+1})|\F_k] \leq  F(w_k) - \|\nabla{F}(w_k)\|^2 \left ( 0.6 \E[\beta_{k}|\F_k] - \frac{L_k}{2} \E[\beta_{k}^2|\F_k] (2.3+ \frac{20}{B})\right) \\
& +  \frac{L_k}{2} \E[\beta_{k}^2|\F_k] \left ( \frac{1}{B} \left ( 7 \sigma^2  + \tsigma^2 (\frac{1}{D_o} + \frac{0.02}{D_{in}}) \right ) + 24 \alpha^2 L^2 \sigma^2  +  0.06 \frac{\tsigma^2}{D_{in}} \right ) \\
& + \E[\beta_{k}|\F_k] (12 \alpha^2 L^2 \sigma^2 + 0.03 \frac{\tsigma^2}{D_{in}}).
\end{align*}
Using $\beta_k = \tilde{\beta}(w_k)/18$, and with similar analysis as Theorem \ref{Thm_SGD_general}, we obtain
\begin{align*}
\E[F(w_{k+1})|\F_k] & \leq  F(w_k) - \frac{1}{100 L_k} \|\nabla{F}(w_k)\|^2  + \sigma^2(\frac{7}{828LB} + \frac{\alpha^2 L}{6} ) + \frac{{\tsigma^2}/{D_o}}{828 L B}+ \frac{\tsigma^2/D_{in}}{600 L}
\end{align*}
which is similar to \eqref{general_SGD_ineq10}, and the rest of proof follows same as the way that we derived \eqref{thm1_ineq1}- \eqref{telescope4} in the proof of Theorem \ref{Thm_SGD_general}.
\end{proof}
%!TEX root = main.tex

%%%%%%%%%%%%%%%%%%%%%%%%%%%%%%%%%%%%%%%%%%%%%%%%%%%%%%%%%%%%%%%%%%%%%%%%%%%%%%%%%%%%%%%%%%%%%%%%
%%%%%%%%%%%%%%%%%%%%%%%%%%%%%%%%%%%%%%%%%%%%%%%%%%%%%%%%%%%%%%%%%%%%%%%%%%%%%%%%%%%%%%%%%%%%%%%%
\section{Proof of Theorem \ref{Thm_HF_MAML_simple} (General Version)}\label{Thm_HF_MAML_proof}
\begin{theorem}\label{Thm_HF_MAML}
%%%
Consider the objective function $F$ defined in \eqref{main_prob} for the case that $\al \in (0,\frac{1}{6L}]$. Suppose that the conditions in Assumptions~\ref{ass:boundedness}-\ref{asm_bounded_var_i} are satisfied, and recall the definitions $L:=\max L_i$ and $\rho:=\max \rho_i$. Consider running HF-MAML with batch sizes satisfying the conditions $D_h \geq \ceil{36 (\al \rho \tsigma)^2}$ and $B \geq 20$.
Let $\beta_k = \tilde{\beta}(w_k)/25$
where $\tilde{\beta}(w)$ is defined in \eqref{beta_update}. Also, we choose the approximation parameter $\delta_k^i$ in HF-MAML as
\begin{equation*}
\delta_k^i = \frac{1}{6 \rho \alpha \| \tilde{\nabla} f_i(w_k\!-\!\al \tilde{\nabla} f_i(w_k,\D_{in}^i),\D_{o}^i) \|}
\end{equation*}
Then, for any $\eps >0$, HF-MAML finds a solution $w_\eps$ such that
%%%%
\begin{equation}\label{result_HFMAML_general}
 \E[ \| \nabla F(w_\eps) \|] \leq \max\left\{6\sqrt{(1 + \frac{\rho \alpha}{L} \sigma) \left ( \frac{\sigma^2}{B} + \frac{\tsigma^2}{B D_o} + \frac{\tsigma^2}{D_{in}} \right )},\ \! 36\frac{\rho \alpha}{L} \left ( \frac{\sigma^2}{B} + \frac{\tsigma^2}{B D_o} + \frac{\tsigma^2}{D_{in}} \right),\ \! \eps\right\}	
\end{equation}
%%%
%where
%\begin{equation}\label{gamma3}
%\gamma_3 = 36 \left ( \frac{\sigma^2}{B} + \frac{\tsigma^2}{B D_o} + \frac{\tsigma^2}{D_{in}} \right )
%\end{equation}
after at most running for 
\begin{equation}
\bigO(1) \Delta \min \left \{ \frac{L + \rho \al (\sigma + \eps) }{\eps^2},\ \! \frac{LB}{\sigma^2} + \frac{L(B D_o+D_{in})}{\tsigma^2}\right \} 
\end{equation}
iterations, where $\Delta:= (F(w_0) - \min_{w \in \R^d} F(w) )$.
\end{theorem}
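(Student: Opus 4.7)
The plan is to mirror the proof of Theorem \ref{Thm_SGD_general} for MAML, treating HF-MAML's descent direction as a perturbation of the MAML descent direction. Let
\begin{equation*}
v_{i,k} := \tilde{\nabla} f_i\!\left(w_k-\al \tilde{\nabla} f_i(w_k,\D_{in}^i),\D_{o}^i\right), \qquad G_i^{HF}(w_k) := v_{i,k} - \alpha d_k^i,
\end{equation*}
and compare with $G_i^{MAML}(w_k) = (I - \alpha \tilde{\nabla}^2 f_i(w_k,\D_h^i)) v_{i,k}$. I will write $G_i^{HF}(w_k) = G_i^{MAML}(w_k) + \alpha E_{i,k}$ where $E_{i,k} := \tilde{\nabla}^2 f_i(w_k,\D_h^i)\,v_{i,k} - d_k^i$ is the finite-difference approximation error. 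Then bounds on $\E[G_i^{HF}]$ and $\E[\|G_i^{HF}\|^2]$ follow from the corresponding MAML bounds (already derived in the proof of Theorem \ref{Thm_SGD_general}) plus additional bounds on $E_{i,k}$.

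The key step is quantifying $E_{i,k}$. For any $\phi$, Taylor expansion and the Hessian-Lipschitz condition (Assumption \ref{asm_Hesian_Lip}) give
\begin{equation*}
\left\| \nabla \phi(w + \delta v) - \nabla \phi(w - \delta v) - 2\delta\,\nabla^2\phi(w)\,v \right\| \leq \rho\,\delta^2\,\|v\|^2,
\end{equation*}
so applied inside $d_k^i$ with $\phi$ replaced by the empirical sum over $\D_h^i$, the deterministic part of $E_{i,k}$ is bounded by $\rho\,\delta_k^i\,\|v_{i,k}\|^2$. Plugging in $\delta_k^i = 1/(6\rho\alpha\|v_{i,k}\|)$ yields $\alpha\|E_{i,k}^{\text{det}}\| \leq \|v_{i,k}\|/6$. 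The stochastic part of $E_{i,k}$ comes from replacing the true gradients in the finite difference with $\D_h^i$-based estimates; its variance is of order $\tilde\sigma^2/(\delta_k^i{}^2 D_h) = \mathcal{O}(\rho^2\alpha^2\|v_{i,k}\|^2 \tilde\sigma^2/D_h)$, which is controlled to $\mathcal{O}(\|v_{i,k}\|^2)$ precisely by the condition $D_h \geq \lceil 36(\alpha\rho\tilde\sigma)^2\rceil$. Combining, $\alpha^2\E[\|E_{i,k}\|^2 \mid \F_k] \leq c\,\E[\|v_{i,k}\|^2 \mid \F_k]$ for a small constant $c$, which in turn is bounded via Lemma \ref{lemma:moments} by a multiple of $\|\nabla F_i(w_k)\|^2 + \tilde\sigma^2(1/D_{in} + 1/D_o)$.

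With these two ingredients I reproduce the MAML argument: the first moment satisfies $\E[G_i^{HF}(w_k)\mid \F_k] = \nabla F(w_k) + r_k^{HF}$ where $\|r_k^{HF}\|$ has an extra $\alpha L \tilde\sigma/\sqrt{D_{in}}$-type term from the biased inner gradient plus a contribution from $E_{i,k}$ bounded by a small constant times $\|\nabla F(w_k)\| + \sigma$ (via Lemma \ref{lemma_bound_by_F}); the second moment $\E[\|G_i^{HF}(w_k)\|^2 \mid \F_k]$ has the same shape as \eqref{general_SGD_ineq6} with mildly enlarged constants absorbing the $E_{i,k}$ contribution. Plugging into the descent inequality of Corollary \ref{bergman_bound}, taking conditional expectations, and using Lemma \ref{tilde_beta} to control $\E[\beta_k \mid \F_k]$ and $\E[\beta_k^2 \mid \F_k]$ (which is why $\beta_k = \tilde\beta(w_k)/25$ is slightly smaller than in MAML to absorb the enlarged constants), I obtain the analogue of \eqref{general_SGD_ineq11}. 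A telescoping argument identical to \eqref{telescope}--\eqref{telescope4} then yields the stated iteration complexity and accuracy floor.

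The main obstacle is juggling the multiple noise sources inside $d_k^i$: the finite-difference bias, the $\D_h^i$ variance in the two gradient evaluations, and the fact that $\delta_k^i$ itself depends on the random vector $v_{i,k}$. The stepsize choice $\delta_k^i \propto 1/\|v_{i,k}\|$ is what makes the bias scale like $\|v_{i,k}\|$ (rather than $\|v_{i,k}\|^2$), so that when squared and weighted by $\alpha^2$ it combines cleanly with the existing second-moment bound. Verifying that the resulting constants still satisfy the descent condition in \eqref{descent_SGD_condition} with $B \geq 20$ and that all the $\tilde\sigma^2/D_{in}, \tilde\sigma^2/D_o, \sigma^2/B$ terms collect into the noise floor \eqref{result_HFMAML_general} with no spurious $\alpha\sigma$ term (unlike FO-MAML) is the delicate bookkeeping step.
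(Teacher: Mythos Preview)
Your high-level plan is the same as the paper's: bound the first and second moments of $G_i^{HF}(w_k)$, feed them into Corollary~\ref{bergman_bound}, control $\E[\beta_k]$ and $\E[\beta_k^2]$ via Lemma~\ref{tilde_beta}, and telescope as in \eqref{telescope}--\eqref{telescope4}. You also correctly identify why $\delta_k^i\propto 1/\|v_{i,k}\|$ is the right scaling for the finite-difference remainder.

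The genuine gap is in the second-moment step. The paper does \emph{not} pass through $G_i^{MAML}$; it bounds $\|G_i^{HF}\|^2\le 2\|v_{i,k}\|^2+2\alpha^2\|d_k^i\|^2$ directly and then controls $\E_{\D_h}[\|d_k^i\|^2]$ using only the Taylor remainder and the gradient variance $\tilde\sigma^2/((\delta_k^i)^2 D_h)$. Your decomposition $G_i^{HF}=G_i^{MAML}+\alpha E_{i,k}$ with $E_{i,k}=\tilde\nabla^2 f_i(w_k,\D_h^i)\,v_{i,k}-d_k^i$ artificially reintroduces a stochastic Hessian, so both $\E[\|G_i^{MAML}\|^2]$ (see \eqref{proof_000_004}--\eqref{proof_000_006}) and $\E[\|E_{i,k}\|^2]$ carry a term of order $\alpha^2\sigma_H^2\|v_{i,k}\|^2/D_h$. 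The MAML bound \eqref{general_SGD_ineq6} you invoke was proved under $D_h\ge\lceil 2\alpha^2\sigma_H^2\rceil$, whereas the present theorem assumes only $D_h\ge\lceil 36(\alpha\rho\tilde\sigma)^2\rceil$, which says nothing about $\sigma_H$. Hence neither your claim $\alpha^2\E[\|E_{i,k}\|^2]\le c\,\E[\|v_{i,k}\|^2]$ with a small absolute $c$, nor the reuse of \eqref{general_SGD_ineq6}, is justified under the stated hypotheses. The fix is easy: drop the $G_i^{MAML}$ detour for the second moment and split $\|v_{i,k}-\alpha d_k^i\|^2$ directly; then $\sigma_H$ never appears, which is natural since HF-MAML computes no Hessians.

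A smaller caution on the first moment: you write that the $E_{i,k}$ contribution to $r_k^{HF}$ is ``a small constant times $\|\nabla F(w_k)\|+\sigma$''. Since $\alpha\|\E_{\D_h}[E_{i,k}]\|\le\|v_{i,k}\|/6$ and $\E[\|v_{i,k}\|]$ carries a $\Theta(1)$ multiple of $\sigma$ through Lemma~\ref{lemma_bound_by_F}, the coefficient of $\sigma$ in $\|r_k^{HF}\|$ would be $\Theta(1)$, not $\mathcal O(\alpha)$. After the Young-inequality splitting in the descent step this produces a $\Theta(\sigma^2)$ noise-floor term independent of $B$, which is exactly the FO-MAML defect and contradicts \eqref{result_HFMAML_general}. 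The paper's computation at \eqref{HF_MAML_eq4}--\eqref{HF_MAML_eq6.25} records $\|s_k\|\le 0.2\|\nabla F(w_k)\|+0.3\,\tilde\sigma/\sqrt{D_{in}}$ with no $\sigma$ term; you will need to reconcile your bound with that step, since otherwise the ``no spurious $\alpha\sigma$ term'' conclusion you yourself flag as the delicate bookkeeping would fail.
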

%%%%%%%%%%%%%%%%%%%%%%%%%%% Proof %%%%%%%%%%%%%%%%%%%%%%%%%%%
\begin{proof}
Note that the update of the first-order approximation of MAML can be written as $w_{k+1}=w_k- \frac{\beta_{k}}{B} \sum_{i \in \B_k} G_i(w_k)$, where
%%%
\begin{equation*}
G_i(w) :=  \tilde{\nabla} f_i\!\left(w_k-\al \tilde{\nabla} f_i(w_k,\D_{in}^i),\D_{o}^i\right)  -\alpha  d_k^i,
\end{equation*}
and $d_k^i$ is given by \eqref{d_HF_MAML}. Similar to previous proofs, we first derive upper bounds on the first and second moment of $G_i(w_k)$. Using the definition
\begin{equation}\label{yechi}
e_{i,k} = \nabla f_{i} \left  (w_k - \alpha \nabla f_{i}(w_k) \right ) -\E_{\D_{in},\D_o}[\tilde{\nabla} f_i (w_k-\al \tilde{\nabla} f_i(w_k,\D_{in}^i),\D_{o}^i)] 
\end{equation}
we can write that
\begin{align}\label{HF_MAML_eq1}
\E[G_i(w)] & = \E_{i \sim p} \left [ \nabla f_i\!\left(w_k-\al \nabla f_i(w_k) \right) \right ] + \E_{i \sim p}[ e_{i,k}] - \alpha ~ \E_
p[ d_k^i]
\end{align}
Next, note that 
{\small
\begin{align}
& \E_{\D_o^i, \D_{in}^i, \D_h^i} [ d_k^i] \nonumber \\
& = \E_{\D_o^i, \D_{in}^i} \left [ \frac{{\nabla} f_i \!\left(w_k\!+\!\delta_k^i \tilde{\nabla} f_i(w_k\!-\!\al \tilde{\nabla} f_i(w_k,\D_{in}^i),\D_{o}^i) \right)\!-\!{\nabla} f_i\!\left(w_k\!-\!\delta_k^i \tilde{\nabla} f_i(w_k\!-\!\al \tilde{\nabla} f_i(w_k,\D_{in}^i),\D_{o}^i) \right)}{2\delta_k^i} \right ] \nonumber \\
& = \E_{\D_o^i, \D_{in}^i} \left [ \nabla^2 f_i(w_k) \tilde{\nabla} f_i(w_k\!-\!\al \tilde{\nabla} f_i(w_k,\D_{in}^i),\D_{o}^i) + \tilde{e}_{k}^i \right ] \label{HF_MAML_eq1.5}
\end{align}}
with
\begin{align}\label{kln}
 \tilde{e}_{k}^i  =
\nabla^2 f_i (w_k)  v - \left[ 
  \frac{\nabla f_i(w_k+\delta_k^i v)-\nabla f_i(w_k-\delta_k^i v)}{2\delta_k^i}\right]
\end{align}
where $v = \tilde{\nabla} f_i(w_k\!-\!\al \tilde{\nabla} f_i(w_k,\D_{in}^i),\D_{o}^i)$. 
Next, by using the definition in \eqref{yechi} we can simplify \eqref{HF_MAML_eq1.5} and write
\begin{equation}\label{HF_MAML_eq2}
\E_{\D_o^i, \D_{in}^i, \D_h^i} [ d_k^i] = \nabla^2 f_i(w_k) \left (  \nabla f_i\!\left(w_k-\al \nabla f_i(w_k) \right) + e_{i,k} \right ) + \E_{\D_o^i, \D_{in}^i} [\tilde{e}_{k}^i ]	
\end{equation} 
%%%%
Plugging \eqref{HF_MAML_eq2} in \eqref{HF_MAML_eq1}, we obtain
\begin{align}\label{HF_MAML_eq30}
\E[G_i(w)] & = \E_{i \sim p} \left [ \left ( I - \al \nabla^2 f_i(w_k) \right ) \left (  \nabla f_i\!\left(w_k-\al \nabla f_i(w_k) \right) + e_{i,k} \right ) \right] - \alpha ~ \E [\tilde{e}_{k}^i] \nonumber \\
& = \nabla F(w_k) +	\E_{i \sim p} \left [ \left ( I - \al \nabla^2 f_i(w_k) \right ) e_{i,k} \right] - \alpha ~ \E [\tilde{e}_{k}^i]. 
\end{align} 

Now we proceed to bound the norm of each term in the right hand side of \eqref{HF_MAML_eq30}. First, note that according to Lemma \ref{lemma:moments} we know that $\|e_{i,k}\|$ is bounded above by 
\begin{align}
 \|e_{i,k} \| \leq \frac{\al L\tsigma}{\sqrt{D_{in}}}.
\end{align}
Therefore, we can show that
\begin{align} \label{HF_MAML_eq3.5}
\E_{i \sim p} \left [ \left ( I - \al \nabla^2 f_i(w_k) \right ) e_{i,k} \right]
\leq \E_{i \sim p} \left [ \| I - \al \nabla^2 f_i(w_k)\| \| e_{i,k} \|\right]
 \leq (1+ \alpha L) \frac{\al L \tsigma}{\sqrt{D_{in}}}
\end{align}

Next, we derive an upper bound on $\|\tilde{e}_{k}^i\|$. Note that for any vector $v$ can show that
\begin{equation*}
 \left \| \nabla^2 f_i (w_k)  v - \left[ 
    \frac{\nabla f_i(w_k+\delta_k^i v)-\nabla f_i(w_k-\delta_k^i v)}{2\delta_k^i}\right]	\right \| \leq \rho \delta_k^i ~ \|v\|^2 
\end{equation*}
by using the fact the Hessians are $\rho$-Lipschitz continuous. Now if we set 
$$v = \tilde{\nabla} f_i(w_k\!-\!\al \tilde{\nabla} f_i(w_k,\D_{in}^i),\D_{o}^i),$$
then by the definition of $ \tilde{e}_{k}^i $ in \eqref{kln} we can show that 
%%%
\begin{equation}
\| \tilde{e}_{k}^i \| \leq \rho ~ \delta_k^i ~ \| \tilde{\nabla} f_i(w_k\!-\!\al \tilde{\nabla} f_i(w_k,\D_{in}^i),\D_{o}^i) \|^2	
\end{equation}
%%%
If we replace $\delta_k^i$ by its definition 
\begin{equation}\label{delta_def}
\delta_k^i = \frac{\delta}{\| \tilde{\nabla} f_i(w_k\!-\!\al \tilde{\nabla} f_i(w_k,\D_{in}^i),\D_{o}^i) \|},
\end{equation}
where $\delta:= \frac{1}{6 \rho \alpha}$, then we can show that 
\begin{align}\label{e_{i,k}_i_bound}
\| \tilde{e}_{k}^i \| & \leq  \rho \delta ~ \| \tilde{\nabla} f_i(w_k\!-\!\al \tilde{\nabla} f_i(w_k,\D_{in}^i),\D_{o}^i) \|
\end{align}
Therefore, we have
\begin{align}
\| \E [\tilde{e}_{k}^i] \| &= \left \|\E_{i \sim p} \left [ \E_{\D_o^i, \D_{in}^i} [\tilde{e}_{k}^i ] \right ] \right \| \leq 	\E \left [\E_{\D_o^i, \D_{in}^i} [\|\tilde{e}_{k}^i\|] \right ] \nonumber \\
& \leq \rho \delta ~\E_{i \sim p} \left [ \E_{\D_o^i, \D_{in}^i} \left [\| \tilde{\nabla} f_i(w_k\!-\!\al \tilde{\nabla} f_i(w_k,\D_{in}^i),\D_{o}^i) \| \right ] \right ] \nonumber \\
& \leq  \rho \delta ~ \E_{i \sim p} \left [ \nabla f_i\!\left(w_k-\al \nabla f_i(w_k) \right) \right ] + \rho \delta ~ \frac{\al L \tsigma}{\sqrt{D_{in}}} \label{HF_MAML_eq4} \\
& \leq  \rho \delta ~ \E_{i \sim p} \left [ \left ( I - \al \nabla^2 f_i(w_k) \right )^{-1} {\nabla F_i}(w_k) \right ] + \rho \delta ~ \frac{\al L \tsigma}{\sqrt{D_{in}}} \nonumber \\
& \leq  \frac{\rho \delta}{1- \alpha L} ~ {\nabla F}(w_k) + \rho \delta ~ \frac{\al L \tsigma}{\sqrt{D_{in}}} \label{HF_MAML_eq5}
\end{align}
where \eqref{HF_MAML_eq4} follows from Lemma \ref{lemma:moments}. Considering the bounds \eqref{HF_MAML_eq3.5} and \eqref{HF_MAML_eq5} as well as the result in  \eqref{HF_MAML_eq30}, we can write
\begin{align}\label{HF_MAML_eq6}
\E[G_i(w)] = \nabla F(w_k) + s_k	
\end{align}
with
\begin{align}
\|s_k \| & \leq \frac{\rho \delta \alpha}{1- \alpha L} ~ {\nabla F}(w_k) + (1+ \alpha L + \rho \delta \alpha) ~ \frac{\al L \tsigma}{\sqrt{D_{in}}} \nonumber \\
& \leq 	0.2 {\nabla F}(w_k) + 0.3 \frac{\tsigma}{\sqrt{D_{in}}}.\label{HF_MAML_eq6.25}
\end{align}
where the last inequality is derived using $\alpha L , \rho \delta \alpha \leq 1/6$. As a consequence,  we also have
\begin{equation}\label{HF_MAML_eq6.5}
\| \E[G_i(w)] \|^2 \leq2\|{\nabla F}(w_k)\|^2 + 2\|s_k\|^2 \leq {{2.2}} \|{\nabla F}(w_k)\|^2 + 0.4 \frac{\tsigma^2}{D_{in}}. 	
\end{equation}
Next, to bound the second moment of $G_i(w_k)$, note that
\begin{align}
\E[\|G_i(w_k)\|^2] &\leq 2 \E \left [ \| \tilde{\nabla} f_i\!\left(w_k-\al \tilde{\nabla} f_i(w_k,\D_{in}^i),\D_{o}^i\right) \|^2 \right ] + 2 \alpha^2 \E[\|d_k^i\|^2] \nonumber \\
& \leq 2 \E \left [ \| \tilde{\nabla} f_i\!\left(w_k-\al \tilde{\nabla} f_i(w_k,\D_{in}^i),\D_{o}^i\right) \|^2 \right ] + 2 \alpha^2 \E_{i \sim p} \left [ \E_{\D_o^i, \D_{in}^i, \D_h^i}[\|d_k^i\|^2] \right ] \label{HF_MAML_eq7}
\end{align}
where $\E_{\D_o^i, \D_{in}^i, \D_h^i}[\|d_k^i\|^2]$ can be bounded as follows
{\small
\begin{align}
& \E_{\D_o^i, \D_{in}^i, \D_h^i} [ \|d_k^i\|^2 ] \nonumber \\
& \leq \E_{\D_o^i, \D_{in}^i} \left [ \left \| \frac{ {\nabla} f_i \!\left(w_k\!+\!\delta_k^i \tilde{\nabla} f_i(w_k\!-\!\al \tilde{\nabla} f_i(w_k,\D_{in}^i),\D_{o}^i) \right)\!-\!{\nabla} f_i\!\left(w_k\!-\!\delta_k^i \tilde{\nabla} f_i(w_k\!-\!\al \tilde{\nabla} f_i(w_k,\D_{in}^i),\D_{o}^i) \right) }{2\delta_k^i} \right \|^2 \right. \nonumber \\
& \left. \quad  + \frac{\tsigma^2}{D_h(\delta_k^i)^2} \right ] \nonumber
\end{align}}
and the last inequality comes from the fact that $\text{Var}(X+Y) \leq 2 (\text{Var}(X) + \text{Var}(Y)).$ Moreover, according to the definition in \eqref{kln} we can write that 
\begin{align*}
& \frac{{\nabla} f_i \!\left(w_k\!+\!\delta_k^i \tilde{\nabla} f_i(w_k\!-\!\al \tilde{\nabla} f_i(w_k,\D_{in}^i),\D_{o}^i) \right)\!-\!{\nabla} f_i\!\left(w_k\!-\!\delta_k^i \tilde{\nabla} f_i(w_k\!-\!\al \tilde{\nabla} f_i(w_k,\D_{in}^i),\D_{o}^i) \right)}{2\delta_k^i} \\
& \quad = \nabla^2 f_i(w_k) \tilde{\nabla} f_i(w_k\!-\!\al \tilde{\nabla} f_i(w_k,\D_{in}^i),\D_{o}^i) + \tilde{e}_{k}^i	
\end{align*}
which implies that 
\begin{align}
 \E_{\D_o^i, \D_{in}^i, \D_h^i} [ \|d_k^i\|^2 ] \leq \E_{\D_o^i, \D_{in}^i} \left [ \left \| \nabla^2 f_i(w_k) \tilde{\nabla} f_i(w_k\!-\!\al \tilde{\nabla} f_i(w_k,\D_{in}^i),\D_{o}^i) + \tilde{e}_{k}^i \right \|^2 + \frac{\tsigma^2}{D_h(\delta_k^i)^2} \right ] 
\end{align}
%%%
Now, replace $\delta_k^i$ in the second term by its definition in \eqref{delta_def} to obtain
%%%
\begin{align}
& \E_{\D_o^i, \D_{in}^i, \D_h^i} [ \|d_k^i\|^2 ] \nonumber \\
& \leq \E_{\D_o^i, \D_{in}^i} \left [ \left \| \nabla^2 f_i(w_k) \tilde{\nabla} f_i(w_k-\al \tilde{\nabla} f_i(w_k,\D_{in}^i),\D_{o}^i) + \tilde{e}_{k}^i \right \|^2 \right]+
\nonumber\\
&\qquad  \E_{\D_o^i, \D_{in}^i} \left[ \frac{\tsigma^2}{D_h \delta^2} \| \tilde{\nabla} f_i(w_k-\al \tilde{\nabla} f_i(w_k,\D_{in}^i),\D_{o}^i) \| ^2 \right ] 
\end{align}
Using this bound along with the inequalities $(a+b)^2 \leq 2 a^2 + 2 b^2$ and 
\begin{equation*}
\left \| \nabla^2 f_i(w_k) \tilde{\nabla} f_i(w_k\!-\!\al \tilde{\nabla} f_i(w_k,\D_{in}^i),\D_{o}^i) \right \|^2 \leq L^2 	\| \tilde{\nabla} f_i(w_k\!-\!\al \tilde{\nabla} f_i(w_k,\D_{in}^i),\D_{o}^i) \|^2
\end{equation*}
we can show that
\begin{align}
& \E_{\D_o^i, \D_{in}^i, \D_h^i} [ \|d_k^i\|^2 ]\nonumber\\
& \leq \E_{\D_o^i, \D_{in}^i} \left [ (\frac{\tsigma^2}{D_h \delta^2} + 2 L^2) \| \tilde{\nabla} f_i(w_k\!-\!\al \tilde{\nabla} f_i(w_k,\D_{in}^i),\D_{o}^i) \|^2 + 2 \| \tilde{e}_{k}^i \|^2 \right ]  \nonumber \\
& \leq (\frac{\tsigma^2}{D_h \delta^2}  + 2 L^2 + 2 \rho^2 \delta^2 ) \E_{\D_o^i, \D_{in}^i} \left [\| \tilde{\nabla} f_i(w_k\!-\!\al \tilde{\nabla} f_i(w_k,\D_{in}^i),\D_{o}^i) \|^2 \right ]
\end{align}
where the last inequality follows from \eqref{e_{i,k}_i_bound}. Plugging this bound in \eqref{HF_MAML_eq7} leads to
\begin{align}
\E[\|G_i(w_k)\|^2] & \leq (2 + 2 \alpha^2 (\frac{\tsigma^2}{D_h \delta^2}  + 2 L^2 + 2 \rho^2 \delta^2 ) ) \E \left [ \| \tilde{\nabla} f_i\!\left(w_k-\al \tilde{\nabla} f_i(w_k,\D_{in}^i),\D_{o}^i\right) \|^2 \right ] \nonumber \\
& \leq 4.3 ~ \E \left [ \| \tilde{\nabla} f_i\!\left(w_k-\al \tilde{\nabla} f_i(w_k,\D_{in}^i),\D_{o}^i\right) \|^2 \right ] 
\end{align}
where the last inequality is derived using $\alpha L \leq 1/6$ along with $\rho \delta \alpha = 1/6$ and $D_h \geq 36 (\rho \alpha \tsigma)^2$. Now, using Lemma \ref{lemma:moments} with $\phi = 10$, we can write
\begin{align}
\E[\|G_i(w_k)\|^2] & \leq 4.3 \left(1+ \frac{1}{10}\right) \E[\|\nabla f_{i} (w_k - \alpha \nabla f_{i}(w_k)\|^2] + 47.3 \al^2 L^2 \frac{\tsigma^2}{D_{in}} + 4.3 \frac{\tsigma^2}{D_o}	 \label{HF_MAML_eq7.5} \\
& \leq 5 \E[\|\nabla f_{i} (w_k - \alpha \nabla f_{i}(w_k))\|^2] + 5 \tsigma^2 \left(\frac{1}{D_{in}} + \frac{1}{D_o}\right) \label{HF_MAML_eq8} \\
& \leq \frac{5}{(1- \alpha L)^2} \E[\|\nabla F_{i} (w_k)\|^2] + 5 \tsigma^2 \left(\frac{1}{D_{in}} + \frac{1}{D_o}\right) \label{HF_MAML_eq9}
\end{align}
where \eqref{HF_MAML_eq8} is a simplification of \eqref{HF_MAML_eq7.5} using $\alpha L \leq 1/6$ and \eqref{HF_MAML_eq9} comes from the fact that 
\begin{equation*}
\|\nabla f_{i} (w_k - \alpha \nabla f_{i}(w_k))\| = \| (I - \alpha \nabla^2 f_i(w_k))^{-1} \nabla F_i(w_k) \| \leq \frac{1}{1 - \alpha L}	\|\nabla F_i(w_k) \|.
\end{equation*}
Now, using \eqref{bound_F_i_by_F} in Lemma \eqref{lemma_bound_by_F}, we can show that
\begin{equation}\label{HF_MAML_eq10}
\E[\|G_i(w_k)\|^2]	\leq 50 \|\nabla F(x)\|^2 + 18 \sigma^2 + 5 \tsigma^2 \left(\frac{1}{D_{in}} + \frac{1}{D_o}\right).
\end{equation}
Once again, the same argument as the proof of Theorem \ref{Thm_SGD_general}, we obtain
\begin{align}\label{thm_HFMAML_ineq2}
\E & [F(w_{k+1})|\F_k]  \leq  F(w_k) - \|\nabla{F}(w_k)\|^2 \left ( \E[\be_{i,k}|\F_k] - \frac{L_k}{2} \E[\be_{i,k}^2|\F_k] (2.2+ \frac{50}{B})\right) \nonumber \\
& + \E[\be_{i,k}|\F_k] \|\nabla{F}(w_k)\| \|s_k\| +  \frac{L_k}{2} \E[\be_{i,k}^2|\F_k] \left ( \frac{1}{B} \left ( 18 \sigma^2  + 5 \tsigma^2 \left(\frac{1}{D_o} + \frac{1}{D_{in}}\right) \right ) + 0.4 \frac{\tsigma^2}{D_{in}} \right ).
\end{align}
Note that, using \eqref{HF_MAML_eq6.25}, we have 
\begin{equation*}
\|\nabla{F}(w_k)\| \|s_k\| \leq \frac{1}{2} \left (\frac{\|\nabla{F}(w_k)\|^2}{2} +  2 \|s_k\|^2 \right ) \leq 0.4{\|\nabla{F}(w_k)\|^2} +  0.18 \frac{\tsigma^2}{D_{in}}.   
\end{equation*}
Plugging this bound in \eqref{thm_HFMAML_ineq2} implies
\begin{align*}
\E & [F(w_{k+1})|\F_k]  \leq  F(w_k) - \|\nabla{F}(w_k)\|^2 \left ( 0.6 \E[\be_{i,k}|\F_k] - \frac{L_k}{2} \E[\be_{i,k}^2|\F_k] (2.2+ \frac{50}{B})\right) \\
& +  \frac{L_k}{2} \E[\be_{i,k}^2|\F_k] \left ( \frac{1}{B} \left ( 18 \sigma^2  + 5 \tsigma^2 \left(\frac{1}{D_o} + \frac{1}{D_{in}}\right) \right ) + 0.4 \frac{\tsigma^2}{D_{in}} \right ) + 0.18 \E[\be_{i,k}|\F_k]  \frac{\tsigma^2}{D_{in}}.
\end{align*}
Using $\beta_k = \tilde{\beta}(w_k)/25$, and with similar analysis as Theorem \ref{Thm_SGD_general}, we obtain
\begin{align*}
\E[F(w_{k+1})|\F_k] & \leq  F(w_k) - \frac{1}{200 L_k} \|\nabla{F}(w_k)\|^2  + \frac{18 \sigma^2  + 5{\tsigma^2}/{D_o}}{1600 L B}+ \frac{\tsigma^2/D_{in}}{100 L}
\end{align*}
which is again similar to \eqref{general_SGD_ineq10}, and the rest of proof follows same as the way that we derived \eqref{thm1_ineq1}- \eqref{telescope4} in the proof of Theorem \ref{Thm_SGD_general}.
\end{proof}

%%%%%%%%%%%%%%%%%%%%%%%%%%%%%%%%%%%%%%%%%%%%%%%%%%%%%%%%%%%%%%%%%%%%%%%%%%%%%%%%%%%%%%%%%%%%%%%%
%%%%%%%%%%%%%%%%%%%%%%%%%%%%%%%%%%%%%%%%%%%%%%%%%%%%%%%%%%%%%%%%%%%%%%%%%%%%%%%%%%%%%%%%%%%%%%%%
%%%%%%%%%%%%%%%%%%%%%%%%%%%%%%%%%%%%%%%%%%%%%%%%%%%%%%%%%%%%%%%%%%%%%%%%%%%%%%%%%%%%%%%%%%%%%%%%
%%%%%%%%%%%%%%%%%%%%%%%%%%%%%%%%%%%%%%%%%%%%%%%%%%%%%%%%%%%%%%%%%%%%%%%%%%%%%%%%%%%%%%%%%%%%%%%%
%%%%%%%%%%%%%%%%%%%%
\bibliographystyle{apalike}
\bibliography{main}

\end{document}